\documentclass{article}

\usepackage{arxiv}
\usepackage{times}
\usepackage{epsfig}
\usepackage{graphicx}
\usepackage{amsmath}
\usepackage{amssymb}
\usepackage{booktabs}

\newcommand{\cHiro}[1]{{\color{black}{#1}}}

\usepackage{amsthm} 

\newtheorem{definition}{Definition}
\newtheorem{lemma}{Lemma}
\newtheorem{remark}{Remark}
\newtheorem{hypothesis}{Hypothesis}
\newtheorem{assumption}{Assumption}
\newtheorem{observation}{Observation}


\tracinglostchars=2 

\newif\ifshowfigs
\showfigstrue 

\newif\ifshowapp
\showappfalse

\newif\ifshowappdetail
\showappdetailtrue 

\newif\ifshowappcite
\showappcitefalse


\def\bsq#1{
	\lq{#1}\rq
}
\usepackage{amsfonts} 

\newcommand{\norm}[1]{\left\lVert#1\right\rVert}

\usepackage{mathtools}
\DeclarePairedDelimiterX{\infdivx}[2]{(}{)}{%
	#1\;\delimsize\|\;#2%
}

\DeclareMathOperator{\Tr}{Tr}

\DeclareMathOperator{\EX}{\mathbb{E}}

\usepackage{booktabs} 
\usepackage{graphicx} 
\usepackage{stmaryrd} 
\usepackage{textcomp} 


\usepackage{adjustbox}

\usepackage[para]{threeparttable}
\usepackage{array}

\usepackage{color}

\usepackage{dsfont}

\usepackage[thinc]{esdiff}

\usepackage{mathtools}


\usepackage{caption}

\usepackage{float}

\usepackage[bottom]{footmisc}

%
%
\usepackage{amsmath}
\usepackage[noend]{algpseudocode}


\algnewcommand{\Initialize}[1]{%
	\State \textbf{Initialize:}
	\State \hspace*{\algorithmicindent}\parbox[t]{0.8\linewidth}{\raggedright #1}
}

\usepackage{enumitem} 
\usepackage{authblk} 


\setcounter{secnumdepth}{2} 

%


\title{Understanding Likelihood of Normalizing Flow and Image Complexity
	through the Lens of Out-of-Distribution Detection}

\author[1]{Genki Osada}
\author[1]{Tsubasa Takahashi}
\author[2]{Takashi Nishide}

\affil[1]{LY Corporation, Japan}
\affil[2]{University of Tsukuba, Japan}


%
%
\usepackage{xr}
\makeatletter
\newcommand*{\addFileDependency}[1]{
	\typeout{(#1)}
	%
	%
	\@addtofilelist{#1}
	%
	\IfFileExists{#1}{}{\typeout{No file #1.}}
}\makeatother

\newcommand*{\myexternaldocument}[1]{%
	\externaldocument{#1}%
	\addFileDependency{#1.tex}%
	\addFileDependency{#1.aux}%
}
\myexternaldocument{UntrustworthyL_sub_app_240127}

\usepackage{lipsum}


\begin{document}

	\maketitle
\begin{abstract}
	Out-of-distribution (OOD) detection is crucial to safety-critical machine learning applications and has been extensively studied.
	While recent studies have predominantly focused on classifier-based methods,
	research on deep generative model (DGM)-based methods have lagged relatively.
	This disparity may be attributed to a perplexing phenomenon: DGMs often assign higher likelihoods to unknown OOD inputs than to their known training data.
	This paper focuses on explaining the underlying mechanism of this phenomenon.
	We propose a hypothesis that less complex images concentrate in high-density regions in the latent space, resulting in a higher likelihood assignment in the Normalizing Flow (NF).
	We experimentally demonstrate its validity for five NF architectures, concluding that their likelihood is untrustworthy.
	Additionally, we show that this problem can be alleviated by treating image complexity as an independent variable.
	Finally, we provide evidence of the potential applicability of our hypothesis in another DGM, PixelCNN++.
\end{abstract}

\section{Introduction}
\label{sec:intro}
Deep neural network (DNN) models deployed in real-world systems often encounter out-of-distribution (OOD) inputs —
samples from a different distribution of the training set.
DNN models often make incorrect predictions with high confidence for OOD inputs~\cite{7298640}, making it crucial to distinguish OOD inputs from in-distribution (In-Dist) ones at test time, especially for safety-critical applications such as autonomous driving~\cite{du2022vos} and medical diagnosis~\cite{pmlr-v121-linmans20a}.
Numerous methods have been proposed to improve empirical performance~\cite{yang2022openood, DBLP:journals/corr/abs-2110-11334}, and analytical approaches have also been studied~\cite{morteza2022provable,NEURIPS2020_543e8374}.

One major approach to OOD detection is using deep generative models (DGMs).
Particularly, Normalizing Flows (NFs) and Autoregressive (AR) models are the two most commonly selected DGMs due to their ability to compute exact model likelihoods.
The DGM-based approach is attractive for its strengths: it does not require labeled data to train detection models, and the performance of OOD detection is independent of the number of classification classes~\cite{Huang_2021_CVPR}.
However, most of the recent progress in this field has been based on another prevailing approach, the classifier-based method~\cite{yang2022openood, DBLP:journals/corr/abs-2110-11334}.
The primary factor hindering the progress of DGM-based methods may be the observation presented by
\cite{nalisnick2018do,choi2019generative}:
DGMs were expected to assign a higher likelihood, i.e., probabilistic density, to In-Dist inputs than to OOD inputs, but it turned out that this is not the case in some particular cases.
We refer to this phenomenon as
the \emph{failure of likelihood}.
When an OOD detection method is deployed to real-world applications with no control over its inputs, this unreliability is unacceptable.
Above all, correctly estimating the likelihood is a fundamental feature that DGMs are expected to fulfill.
Hence, research on DGM-based methods has focused on addressing this issue  before enhancing detection performance and tackling more challenging tasks.

Several hypotheses have been proposed to explain this phenomenon.
One argues that image complexity has some influence on the likelihood~\cite{nalisnick2018do, Serra2020Input}, while another suggests the need to account for the notion of the \emph{typical set}~\cite{choi2019generative,nalisnick2020detecting}.
However, these methods still fail in specific cases, as we will show later, and the mechanism behind the failure of likelihood remains unclear.

In this paper, we address the cause of the failure of likelihood.
We focus on Normalizing Flows due to their analytical tractability.
The contributions of this work are as follows:
\begin{itemize}
	\item
	We present a hypothesis that explains how image complexity can arbitrarily control the likelihood of NFs and how the density concentration in latent distribution causes this phenomenon.

	\item Our hypothesis provides a unified explanation for two separate questions: why images with less complexity are assigned high likelihood and why OOD inputs are regarded as the typical set samples in some cases.

	\item We experimentally verify our argument with five different NF architectures and conclude that OOD detections based on the likelihood of NFs are untrustworthy.
\end{itemize}

The relationship between the likelihood and image complexity that we explain in this paper
finds implicit support in Independent Component Analysis (ICA)~\cite{10.1016/S0893-6080(00)00026-5},
as well as in Shannon's source coding theorem~\cite{shannon1948mathematical},
as discussed later in this section.

Additionally, we propose a countermeasure and argue that the failure of likelihood can be overcome by leveraging information about the root cause of the problem, namely image complexity.
We demonstrate that a Gaussian mixture model (GMM) with two variables, image complexity and likelihood in latent space, can detect all of the OOD datasets we used in our evaluation, thereby highlighting the validity of our argument from another perspective.

Finally, given a recent study in which AR models can be seen as a specific type of NF~\cite{NEURIPS2020_26ed695e},
we show the potential applicability of our hypothesis to AR models as well through the experiments using PixelCNN++.


\paragraph{Related works.}
\label{sec:related}
Two main hypotheses have been proposed to explain the failure of likelihood-based OOD detection:
the involvement of \emph{image complexity} and the necessity of considering  \emph{typical set}.
Experimental studies by \cite{Serra2020Input, NEURIPS2019_1e795968} demonstrated that the log-likelihood $\log p({\bf x})$ increases with less complex images,
but the underlying mechanism remained unclear.
\cite{NEURIPS2020_f106b7f9, NEURIPS2020_ecb9fe2f} attributed this phenomenon to the architecture of the NFs.
However, the same phenomenon has also been observed in PixelCNN~\cite{nalisnick2018do, NEURIPS2019_1e795968}, implying that it is not exclusively caused by the architecture of the NFs.
\cite{nalisnick2018do} suggested that the phenomenon could be theoretically explained.
Their analysis implicitly assumed that the likelihood in the latent space and the determinant of the Jacobian matrix remain consistent across different datasets.
However, as shown in Figs.\ \ref{fig:manipulated_c10} (right) and \ref{fig:comp_vs_z__ood}, in reality, these values differ considerably between datasets.
We explain that this variation arises from differences in image complexities, ultimately serving as a factor that can arbitrarily influence $\log p({\bf x})$.
We experimentally show that our hypothesis likely applies to PixelCNN as well.

\cite{choi2019generative,nalisnick2020detecting}  introduced a testing method based on the typical set in latent space.
However, \cite{NEURIPS2020_66121d1f,choi2019generative,pmlr-v139-zhang21g} concluded that its performance was insufficient.
Our hypothesis also explains \emph{why} typicality-based testing fails.
Additionally, in Appendix \ref{sec:related_additional},
we present other DGM-based approaches and an overview of classifier-based detection methods.

Beyond the context of OOD detection, the credibility of likelihood has been discussed.
\cite{Theis2016a} has shown that high likelihood does not necessarily indicate high sample quality produced by the model.
\cite{10.1007/978-3-642-04277-5_71} proposed that image complexity can be estimated by $\log p({\bf x})$ of ICA~\cite{10.1016/S0893-6080(00)00026-5}, which can be regarded as a linear NF~\cite{dinh2014nice}.
Their study implies that less complex image ${\bf x}$ is assigned a higher $\log p({\bf x})$ in ICA, aligning with our assertion.
Moreover, Shannon's source coding theorem~\cite{shannon1948mathematical} states that
the expected code length of a symbol ${\bf x}$ is bounded by its entropy:
$
\EX_{{\bf x} \sim P_{\text{x}}} [- \log p({\bf x})] \leq \EX_{{\bf x} \sim P_{\text{x}}} [ L ({\bf x}) ].
$
Here, $L ({\bf x})$ represents the length of the encoded message for ${\bf x}$ using a lossless compression algorithm, and $P_{\text{x}}$ denotes a data distribution, which is generally unknown.
Minimizing $\EX_{{\bf x} \sim P_{\text{x}}} [ L ({\bf x}) ]$ is achieved by assigning a smaller $L({\bf x})$ to ${\bf x}$ with a larger $\log p({\bf x})$ and a larger $L({\bf x})$ to ${\bf x}$ with a smaller $\log p({\bf x})$,
ensuring efficient encoding~\cite{bishop2006pattern}.
The complexity of an image ${\bf x}$ is defined in the next section as $L ({\bf x}) / d$, where ${\bf x}$ with less complexity have a smaller $L ({\bf x})$.
Therefore, the theorem implies that images with less complexity are assigned higher likelihoods, aligning with our claim.

\section{Background and Problem Statement}
This paper shows that the complexity of input images can control the likelihood of Normalizing Flows (NFs).
We first define the image complexity and describe the NFs.
Then, we discuss existing approaches and their failures, leading us to articulate the problem statement addressed in this paper.

\subsection{Preliminary}
\paragraph{Image complexity and entropy coding.}
Estimating the complexity of images lacks a definitive method; however, two common options are available: Kolmogorov complexity and Shannon’s entropy~\cite{10.5555/2381219.2381244}.
While a connection is suggested between the two~\cite{kolmogorov_and_shannon_osada},
the former is uncomputable.
Thus, the prevalent method is based on entropy, which is directly used in lossless compression (coding)~\cite{sayood2017introduction, 9694511}.
In this work, we use the following definition for image complexity used in previous studies~\cite{Serra2020Input, ijcai2021p292}.
\begin{definition}[Image complexity]
	\label{def:comp}
	Let $L ({\bf x})$ be the length of the bit string obtained after compressing ${\bf x} \in \{0, 1, \ldots, 255 \}^{d}$ using a lossless compression algorithm,
	{\normalfont \texttt{comp}}.
	Image complexity for ${\bf x}$ is defined as $C({\bf x}) = \frac{1}{d} L ({\bf x})$.
	The more complex ${\bf x}$ is, the larger $C({\bf x})$ is, and vice versa.
\end{definition}
We use the JPEG2000 compression, which is based on entropy coding, as the \texttt{comp} in all our experiments.

\paragraph{Normalizing Flow.}
\label{sec:nf}
We focus our investigation on NFs~\cHiro{\cite{pmlr-v37-rezende15, dinh2014nice, dinh2016density}} among DGMs.
NFs offer distinct advantages over other types of DGMs, as they enable the exact computation of likelihoods, unlike VAEs, and allow for the separation of the volume term, facilitating analysis~\cite{nalisnick2018do}.
NFs learn an invertible mapping $f: \mathcal{X} \rightarrow \mathcal{Z}$  that maps observable data ${\bf x}$ to the latent vector ${\bf z} = f({\bf x})$, where $\mathcal{X} \in \mathbb{R}^{d}$  is the data space and $\mathcal{Z} \in \mathbb{R}^{d}$ is the latent space.
We denote a distribution on $\mathcal{Z}$ as $P_{\text{z}}$ with probability density $p({\bf z})$.
NFs learn an approximate model distribution $\widehat{P}_{\text{x}}$ to match the unknown true distribution $P_{\text{x}}$ on $\mathcal{X}$.
Under the change of variable rule, the log density of $\widehat{P}_{\text{x}}$ is expressed as:
\begin{eqnarray}
	\log p({\bf x}) = \log p({\bf z}) + \log |\text{det} \ J_{f}({\bf x})|
	\label{eq:nf}
\end{eqnarray}
where $J_{f}({\bf x}) = df({\bf x}) / d{\bf x}$
is the Jacobian matrix of $f$ at ${\bf x}$,
and $\log |\text{det} \ J_{f}({\bf x})|$ is referred to as the \emph{volume}.
By maximizing $\log p({\bf z})$ and $\log |\text{det} \ J_{f}({\bf x})|$ simultaneously with respect to samples ${\bf x} \sim P_{\text{x}}$,
the NF model $f$ is trained to match $\widehat{P}_{\text{x}}$ with $P_{\text{x}}$.
In our work, $P_{\text{x}}$ represents In-Distribution (In-Dist), and we train an NF model using samples from In-Dist.

We investigate five architectures: Glow~\cite{NIPS2018_8224}, CV-Glow~\cite{nalisnick2018do}, iResNet~\cite{pmlr-v97-behrmann19a}, ResFlow~\cite{NEURIPS2019_5d0d5594}, and IDF~\cite{NEURIPS2019_9e9a30b7}.
iResNet and ResFlow
improve the stability of NFs
by controlling the Lipschitz constant of $f$ to be less than one while maintaining high expressive power.
IDF uses a categorical distribution for $P_{\text{z}}$, while the other four architectures use the standard Gaussian distribution.
CV-Glow, as the abbreviation for \emph{Glow with the constant volume}, has a fixed volume that depends only on the weight matrix of the $1 \times 1$ convolutions, resulting in a constant volume across all inputs ${\bf x}$.
Similarly, the volume of IDF is fixed at 0 to build a latent space with integer values.
It is worth noting that CV-Glow, IDF, and even PixelCNN++~\cite{salimans2017pixelcnn} exhibit a similar behavior due to the commonality of the constant volume, as we will see later.
The implementation of these models are described in Appendix \ref{sec:dgms}.

\begin{figure}[t]
	\centering
	\includegraphics[width=0.6 \columnwidth]{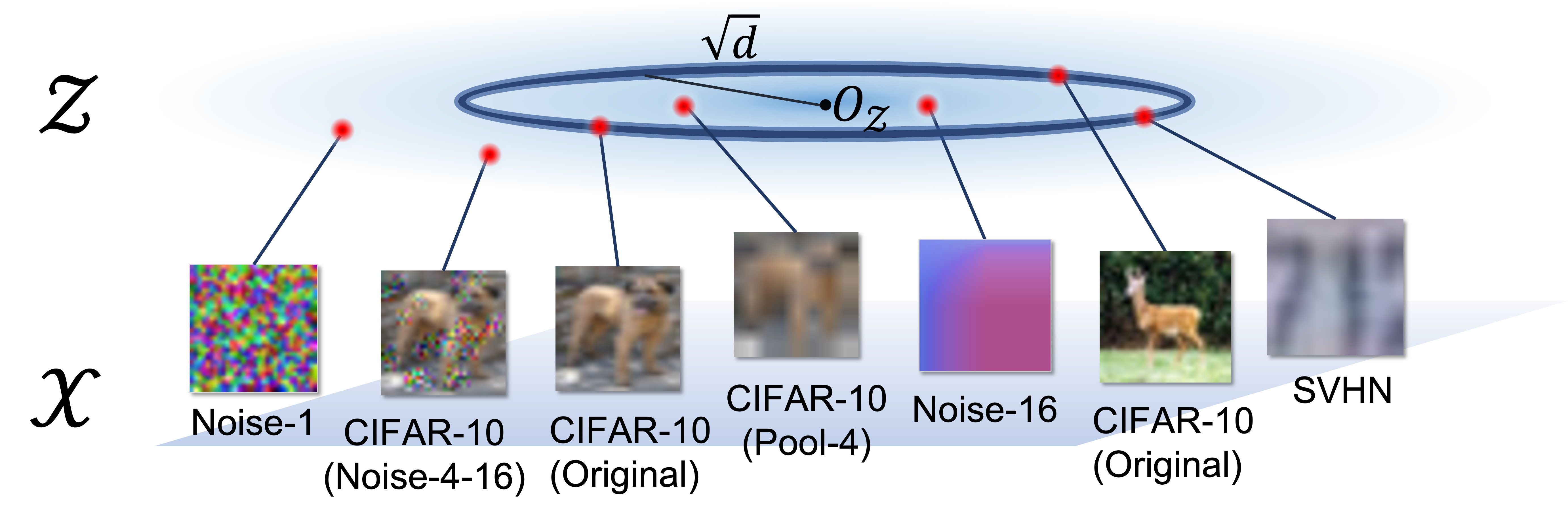}
	\caption{DCAS (Remark~\ref{rem:dcas}) attracts less complex images to the high-density region in latent space.
		$\mathcal{Z}$ represents a Gaussian latent space trained on CIFAR-10.
		$O_{\mathcal{Z}}$ represents the origin of $\mathcal{Z}$.
		The dark blue circle represents the typical set in $\mathcal{Z}$,
		identified as In-Dist by the typicality test.
		Complex OOD images like Noise-1 and CIFAR-10 (Noise-4-16) are mapped far from $O_{\mathcal{Z}}$.
		However, due to DCAS, less complex images like Noise-16 and CIFAR-10 (Pool-4) are mapped closer to $O_{\mathcal{Z}}$ than the circle.
		We hypothesize that SVHN (an OOD image) should be mapped far beyond the circle.
		However, due to its less complexity, it is attracted towards $O_{\mathcal{Z}}$ and coincidentally falls on the circle,
		leading to the misidentification of SVHN as In-Dist by the typicality test.}
	\label{fig:GaussianAnnulus_DCAS}
\end{figure}

\subsection{Failure of Existing Approaches}
\label{sec:existing}

\paragraph{Likelihood test.}

The likelihood test, introduced by \cite{bishop1995training}, is an OOD detection method that relies a density estimation model.
Treating the probabilistic density of input as a likelihood, it assumes that OOD examples would be assigned lower likelihoods compared to In-Dist examples.
However, counter-evidence presented by \cite{nalisnick2018do,choi2019generative} showed that
DGMs trained on CIFAR-10 assigned higher likelihoods to samples from SVHN (OOD) than to samples from CIFAR-10 (In-Dist), for instance.
This has sparked controversy and led to the proposal of two approaches described below as methods for improvement.

\paragraph{Complexity-aware likelihood test.}
Several studies have linked the failure of the likelihood test to image complexity \cite{nalisnick2018do, Serra2020Input, NEURIPS2019_1e795968, NEURIPS2020_f106b7f9, NEURIPS2020_ecb9fe2f}.
These studies experimentally showed that low-complexity regions in images contribute to an increase in the likelihood $\log p({\bf x})$.
One such study by \cite{Serra2020Input} introduced a complexity-aware likelihood test (CALT), which directly incorporates image complexity $C({\bf x})$.
Specifically, CALT computes the score $S_{\text{CALT}}({\bf x}) = \log p({\bf x}) +C({\bf x})$.
However, our experiments have revealed that CALT still exhibits failures in certain cases, indicating that merely adding $C({\bf x})$ to $\log p({\bf x})$ is insufficient for effective OOD detection. (See Appendix \ref{sec:calt_dz} and Section \ref{sec:gmm}.)

\paragraph{Typicality test.}
\cite{choi2019generative,nalisnick2020detecting} attributed the failure of the likelihood test to an ignorance of the notion of the \emph{typical set}.
In a $d$-dimensional isotropic Gaussian $\mathcal{N}(0,  \textbf{I}_{\text{d}})$, the typical set is expected to reside in a hypersphere with a radius of $\sqrt{d}$ with high probability, rather than around its mean (See Appendix \ref{sec:typical_set}).
Utilizing this property, \cite{choi2019generative,nalisnick2020detecting} have proposed
a method that identifies a test input ${\bf x}$ as OOD when it fall outside of the distribution's typical set, which we refer to as the typicality test in latent space (TTL).
TTL computes the score $S_{\text{TTL}} ({\bf x})= {\tt abs} (\norm{ {\bf z}} - \sqrt{d})$,
where ${\bf z} = f({\bf x})$ and $f$ is a trained NF model.
If ${\bf x}$ is OOD, $S_{\text{TTL}} ({\bf x})$ is expected to be large.
However, \cite{pmlr-v139-zhang21g,NEURIPS2020_66121d1f,choi2019generative} concluded that the performance of TTL was insufficient, and our experiments also revealed failure cases.
We observed that
$\norm{ {\bf z}}$ for In-Dist samples is concentrated around the theoretical value, $\sqrt{d}$ (i.e., $\sqrt{32 \times 32 \times 3} \simeq 55.4$ for CIFAR-10 and SVHN).
This indicates that NFs can correctly perceive test samples from In-Dist as the typical set.
However, in the case where the In-Dist is CIFAR-10, $\norm{ {\bf z}}$ for SVHN (OOD) is also highly concentrated around $55.4$,
indicating the failure of TTL to detect SVHN samples as OOD.
This situation is depicted in Fig.\ \ref{fig:GaussianAnnulus_DCAS}.
(For experimental results, see Fig.\ \ref{fig:calt_dz} (bottom two) in Appendix \ref{sec:calt_dz}.)
It becomes evident that even OOD inputs can reside within the typical set, challenging the effectiveness of the typicality test.

\subsection{Problem Statement}
\label{sec:questions}
The aforementioned issues can be summarized in the following two questions:
\begin{itemize}[nosep]
	\item Why does less image complexity result in the failure of the likelihood test?
	\item Why are OOD inputs often misclassified as typical set samples?
\end{itemize}
We address these questions in this study.

\section{Hypothesis and Experimental Validation}
We present Hypothesis~\ref{thm:c}, which comprehensively explains the questions posed in Section \ref{sec:questions}.
Subsequently, we provide experimental results that support the hypothesis.

\subsection{Our Hypothesis}
\label{sec:theory}

\begin{definition}[Local Lipschitz continuity]
	For a subset $\mathcal{A} \subset \mathcal{Z}$, we define an invertible function $f: \mathcal{X} \rightarrow \mathcal{Z}$ as locally $L_{\mathcal{A}}$-Lipschitz as follows:
	\begin{eqnarray}
		\norm{ f({\bf x}_{1}) - f({\bf x}_{2})  } \leq L_{\mathcal{A}} \norm{{\bf x}_{1} - {\bf x}_{2}}  , \forall f({\bf x}_{1}), f({\bf x}_{2})  \in \mathcal{A}.
	\end{eqnarray}
\end{definition}

\begin{assumption}
	\label{asm:semantic_continuous}
	We  consider that $f$ is implemented by an NF and assume that the latent space $\mathcal{Z}$ is semantically continuous \cite{dinh2016density}.
\end{assumption}

\begin{hypothesis}
	\label{thm:c}
	Let $f: \mathcal{X} \rightarrow \mathcal{Z}$ be an invertible function locally $L_{\mathcal{A}}$-Lipschitz for $\mathcal{A} \subset \mathcal{Z}$.
	For all ${\bf z}'  \in \mathcal{A}$, let ${\bf x}' =  f^{-1}({\bf z}')$.
	Also, let $\mathcal{B}^{\epsilon}_{{\bf z} } = \{ {\bf z}' \in \mathcal{A}: \norm{ {\bf z}' -{\bf z} }  < \epsilon \}$ and $\mathcal{B}^{\epsilon}_{{\bf z} }  \subset \mathcal{A}$ for a constant $\epsilon \geq 0$.
	Then, letting $C ({\bf x})$ be the image complexity of ${\bf x}$ (Definition \ref{def:comp}) and
	$C_{1}$ be a constant, we have
	\begin{eqnarray}
		\label{eq:main_c}
		\frac{\epsilon^{2}}{{L_{\mathcal{A}}}^{2}} \left(  1 - \mathbb{P} ( \mathcal{B}^{\epsilon}_{{\bf z} } ) \right)  \leq C_{1} \exp(C ({\bf x})).
	\end{eqnarray}
\end{hypothesis}
Considering $\mathcal{A}$ to be a very small region in $\mathcal{Z}$ and based on Assumption \ref{asm:semantic_continuous},
we posit that samples ${\bf x} =  f^{-1}({\bf z})$ and ${\bf x}' =  f^{-1}({\bf z}')$ for ${\bf z}, {\bf z}'  \in \mathcal{A}$
share common semantics.
This allows us to treat their complexities as approximately equal, i.e., $C ({\bf x}) \approx C ({\bf x}')$,
which we use in the derivation.
We derive Eq.\ \eqref{eq:main_c} from experimental observations and present it in Appendix \ref{sec:derive_by_exp}.
Assuming ${\bf x}$ follows a diagonal Gaussian distribution, we can analytically derive it as stated in Appendix \ref{sec:proof}, resulting in the right-hand side being $ \frac{d}{ 2 \pi e } \delta_{x}^{\frac{2}{d} } \exp \left( 2 C({\bf x}) \right)$ where $\delta_{x}$ is the volume of bins used in discretization.
Next, we present an observation necessary to state the following Remarks.

\begin{observation}
	\label{obs:corr}
	A positive correlation exists between $\log p({\bf z})$ and the volume $\log |\text{det} \ J_{f}({\bf x})|$ in response to the variation of input ${\bf x}$.
\end{observation}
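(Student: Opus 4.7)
The plan is to treat Observation \ref{obs:corr} as an empirical claim and establish it by experimentation, since the statement concerns the joint behavior of two random quantities under the unspecified distribution of inputs and is therefore not well-posed as a purely analytic theorem. First, I would use the same five architectures introduced in Section \ref{sec:nf} (Glow, CV-Glow, iResNet, ResFlow, IDF), each trained on standard In-Dist datasets such as CIFAR-10 and FashionMNIST. For CV-Glow and IDF the volume is constant by construction, so the observation holds trivially (degenerate correlation); the substantive content concerns the three architectures with input-dependent Jacobians.

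Next, I would assemble an evaluation pool that spans a broad range of image complexity $C({\bf x})$: In-Dist test samples, standard OOD datasets (SVHN, CelebA, TinyImageNet), and synthetic extremes such as near-constant images, average-pooled In-Dist images (low complexity), and uniform noise (high complexity), mirroring the inputs visualized in Fig.\ \ref{fig:GaussianAnnulus_DCAS}. For each ${\bf x}$ in the pool I would compute ${\bf z}=f({\bf x})$, $\log p({\bf z})$, and $\log|\det J_f({\bf x})|$, then produce scatter plots of the latter two quantities and report Pearson and Spearman correlation coefficients per architecture and per training dataset. A complementary plot of each quantity against $C({\bf x})$ would show that both vary monotonically with complexity, supplying the underlying mechanism that drives the co-variation.

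The main obstacle is ensuring that the observed positive correlation is not an artifact of how the evaluation pool is assembled: if the pool mixes two well-separated clusters (e.g.\ In-Dist vs.\ very simple OOD), an apparent correlation could arise from the between-cluster gap rather than a genuine within-input trend. To address this I would verify that the correlation persists \emph{within} each sub-pool (In-Dist alone, SVHN alone, etc.), and that it is stable across random subsets and across architectures. A secondary difficulty is presentational: the observation should be stated carefully so that the constant-volume architectures are covered as a limiting case rather than as counterexamples, since they share the same causal pathway through $\log p({\bf z})$ even though the volume term itself does not fluctuate.

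An informal theoretical rationale also supports the claim and can be included as a sanity check alongside the experiments. Training maximizes $\log p({\bf x}) = \log p({\bf z}) + \log|\det J_f({\bf x})|$ on In-Dist samples, and Hypothesis \ref{thm:c} already implies that lower-complexity inputs are mapped closer to the high-density region of $P_{\text{z}}$, yielding larger $\log p({\bf z})$. Empirically $\log p({\bf x})$ also increases as complexity decreases; for this to be consistent with a growing $\log p({\bf z})$, the volume must move in the same direction rather than compensate against it, which is precisely the positive correlation asserted by the observation.
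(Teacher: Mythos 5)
Your proposal matches the paper's approach: the paper supports Observation~\ref{obs:corr} purely empirically (Experiment 3, the volume vs.\ $\norm{{\bf z}}$ scatter plots on the complexity-controlled datasets for Glow, iResNet, and ResFlow, with CV-Glow and IDF set aside as constant-volume), accompanied by the same kind of informal training-dynamics rationale about the equilibrium between maximizing $\log p({\bf z})$ and expanding the volume. Your added correlation coefficients and within-subpool robustness checks are sensible refinements the paper does not perform, but they do not constitute a different route.
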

We defer the explanation to Section \ref{sec:exp_for_hypo}, Experiment 3.
In the following Remarks, we consider that
${\bf z} = f({\bf x})$ is determined for a given input ${\bf x}$ such that $\mathbb{P} ( \mathcal{B}^{\epsilon}_{{\bf z} } )$  satisfies the inequality in Eq.\ \eqref{eq:main_c}.

\begin{figure}[t]
	\centering
	\begin{minipage}[r]{0.3 \columnwidth}
		\textsf{ {\scriptsize Pooling Noise Images : }}
	\end{minipage}
	\begin{minipage}[l]{0.6 \columnwidth}
		\includegraphics[width=0.08 \columnwidth]{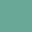}
		\includegraphics[width=0.08 \columnwidth]{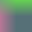}
		\includegraphics[width=0.08 \columnwidth]{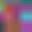}
		\includegraphics[width=0.08 \columnwidth]{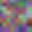}
		\includegraphics[width=0.08 \columnwidth]{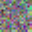}
		\includegraphics[width=0.08 \columnwidth]{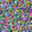}
	\end{minipage}
	\\
	\begin{minipage}[r]{0.3 \columnwidth}
		\textsf{ {\scriptsize Manipulated CIFAR-10 : }}

	\end{minipage}
	\begin{minipage}[l]{0.6 \columnwidth}
		\includegraphics[width=0.08 \columnwidth]{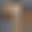}
		\includegraphics[width=0.08 \columnwidth]{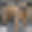}
		\includegraphics[width=0.08 \columnwidth]{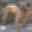}
		\includegraphics[width=0.08 \columnwidth]{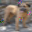}
		\includegraphics[width=0.08 \columnwidth]{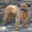}
		\includegraphics[width=0.08 \columnwidth]{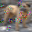}
	\end{minipage}
	\caption{Complexity controlled images. Image complexity increases from left to right in both rows.
		Top: \emph{Pooling noise images} with pooling size $\kappa$ decreases as 32, 16, 8, 4, 2, and 1 from left to right.
		Bottom: \emph{Manipulated CIFAR-10} with Pool-8, Pool-4, Pool-2, Noise-4-4, Noise-4-8, and Noise-4-16 from left to right.}
	\label{fig:ctl_noise}
\end{figure}

\begin{figure*}[t]
	\centering
	\includegraphics[height=3.2cm, trim={0 0 7.5cm 1.15cm},clip]{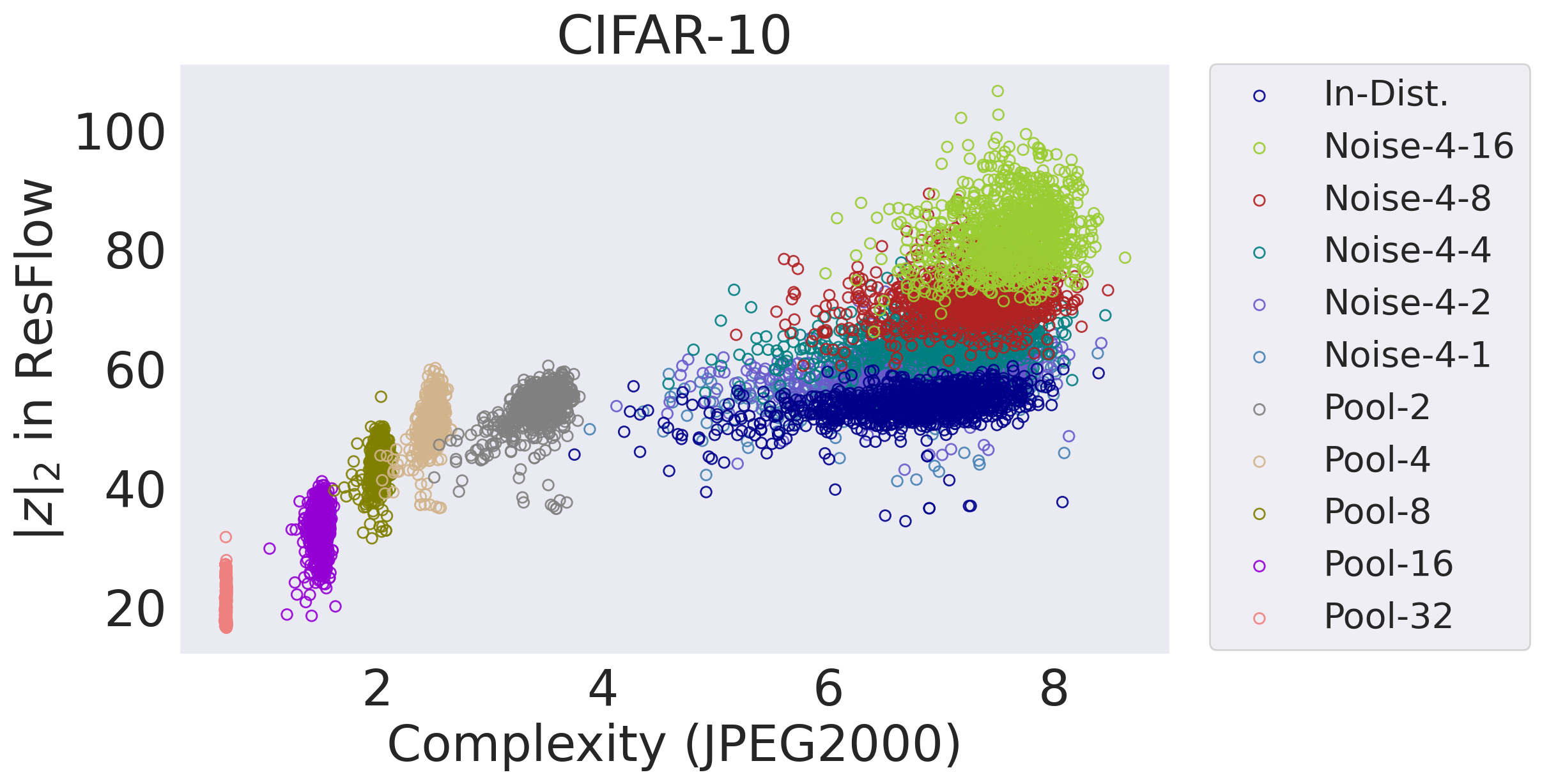}
	\hspace{10px}
	\includegraphics[height=3.2cm, trim={0 0 0 1.15cm},clip]{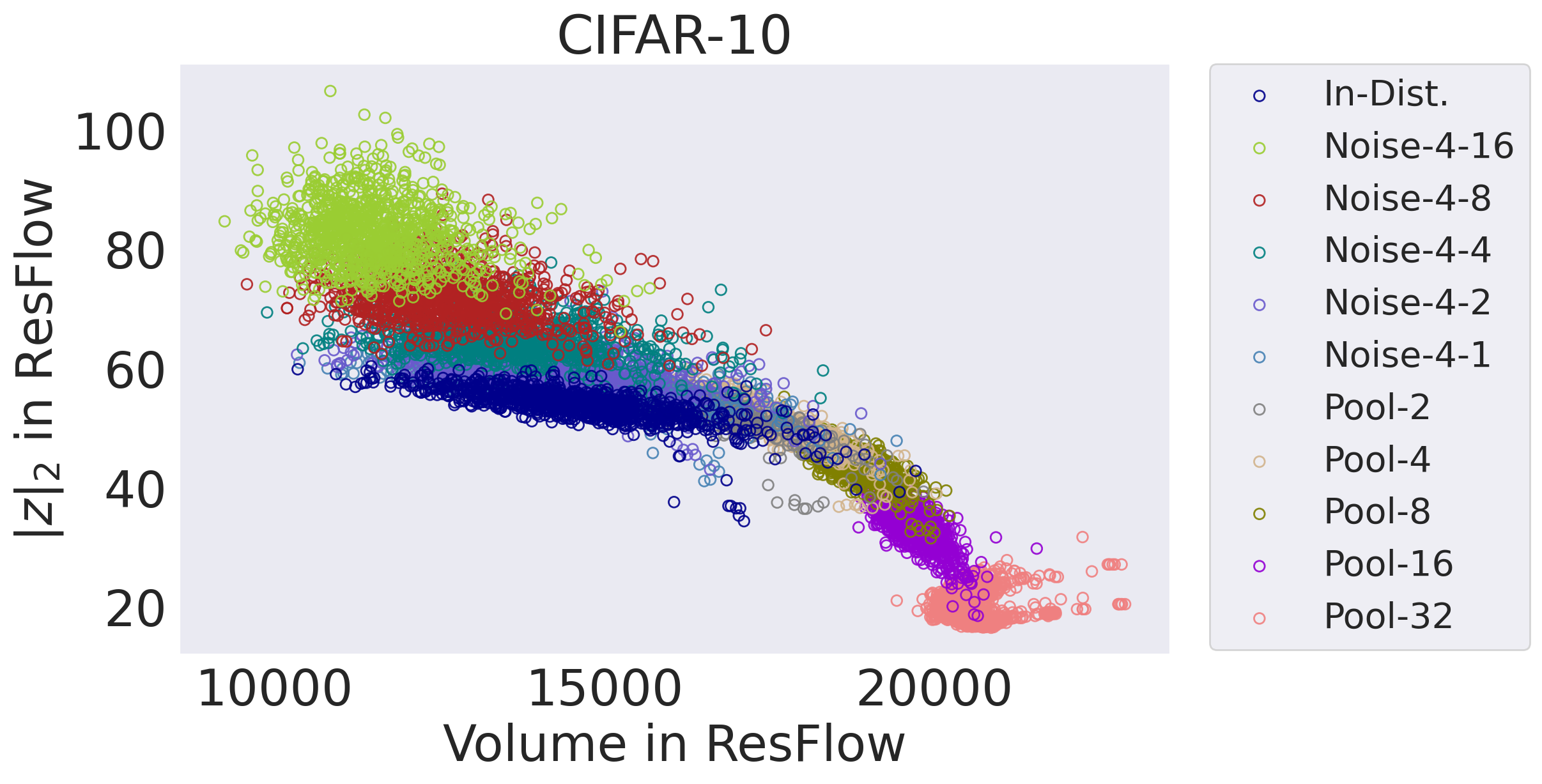}
	\caption{Plots for manipulated CIFAR-10. Left plot shows complexity vs.\ $\norm{{\bf z}}$, supporting Remark~\ref{rem:dcas}. Right plot shows volume vs.\ $\norm{{\bf z}}$, supporting Remark~\ref{rem:volume} and Observation \ref{obs:corr}. We note that $\norm{{\bf z}}  \propto - \sqrt{\log p({\bf z})}$.}
	\label{fig:manipulated_c10}
\end{figure*}


\begin{remark}
	\label{rem:dcas}
	\normalfont 
	As ${\bf x}$ becomes less complex, i.e., as $C({\bf x})$ becomes small, $\mathbb{P} ( \mathcal{B}^{\epsilon}_{{\bf z} } )$ becomes large.
	In order to increase $\mathbb{P} ( \mathcal{B}^{\epsilon}_{{\bf z} } )$, ${\bf z}$ needs to be in a high-density region in $\mathcal{Z}$,
	because the volume of $ \mathcal{B}^{\epsilon}_{{\bf z} }$, $\text{vol}(\mathcal{B}^{\epsilon}_{{\bf z} })$, is fixed by $\epsilon$.
	In other words, ${\bf z}$ for a less complex ${\bf x}$ will concentrate on a high-density region in $\mathcal{Z}$, meaning that $\log p({\bf z})$ for them will be large.
	Specifically, when the distribution of $\mathcal{Z}$ is $\mathcal{N}(0, \textbf{I}_{\text{d}})$, ${\bf z}$ is mapped to the region close to the origin, meaning that $\norm{{\bf z}}$ will be small for a less complex ${\bf x}$ (See Fig.~\ref{fig:GaussianAnnulus_DCAS}).
	We refer to this effect as Density Concentration Attraction for Simpleness, DCAS.
\end{remark}
\begin{remark}
	\label{rem:volume}
	\normalfont 
	As the input becomes less complex, $L_{\mathcal{A}}$ becomes large. Since
		$|\text{det} \ J_{f}({\bf x})| < L_{\mathcal{A}}^{d}$
	\cite{federer2014geometric}, the volume term, $\log |\text{det} \ J_{f}({\bf x})|$, for a less complex input is allowed to become large.
\end{remark}
\begin{remark}
	\normalfont 
	\label{rem:logpx}
	From Observation~\ref{obs:corr} and Remarks~\ref{rem:dcas}, \ref{rem:volume}, the decrease in image complexity increases both $\log p({\bf z})$ and $\log |\text{det} J_{f}({\bf x})|$.
	Consequently, according to Eq.\ \eqref{eq:nf}, $\log p({\bf x})$ increases for a less complex input.
\end{remark}

\subsection{Experimental Results}
\label{sec:exp_for_hypo}
To support the above Remarks, we conducted experiments using a dataset in which image complexity was systematically controlled.

\paragraph{Datasets.}
We constructed two datasets with controlled image complexity: \emph{pooling noise images} and \emph{manipulated CIFAR-10}.
The details are provided in Appendix~\ref{sec:pooling_noise_and_manipulated_C10}, and samples are shown in Fig.~\ref{fig:ctl_noise}.
The pooling noise images are generated by applying average pooling with different filter sizes, denoted as $\kappa$, to random noise images.
Each set of images is labeled as Noise-$\kappa$.
For the manipulated CIFAR-10 dataset, the complexity increases from Noise-4-1 to Noise-4-32 and decreases from Pool-2 to Pool-16, compared to the original image.
The NF models used in this section were trained on CIFAR-10~\cite{krizhevsky2009learning}.

\paragraph{Experiment 1: Complexity vs.\ $\log p({\bf z})$.}
\label{sec:exp1}
We examined Remark~\ref{rem:dcas} using the five NF models introduced in Section~\ref{sec:nf}.
We used $\norm{{\bf z}} (\propto - \sqrt{\log p({\bf z})})$ as a proxy for $\log p({\bf z})$
for the convenience of considering the typical set (for the four NFs with Gaussian latent distribution).
The plot for ResFlow on the manipulated CIFAR-10 is depicted in Fig.\ \ref{fig:manipulated_c10} (left).
It shows that $\log p({\bf z})$ increases (equivalently, $\norm{{\bf z}}$ decreases) as the complexity decreases,
providing support for Remark~\ref{rem:dcas}.
Similar plots were observed for all other nine cases, as shown in Figs.\ \ref{fig:extra_comp_vs_z_pooling} and \ref{fig:extra_comp_vs_z_manipulated} in Appendix \ref{sec:extra_exp_1}.

\paragraph{Experiment 2: $\log p({\bf z})$ vs.\ volume.}
\label{sec:exp2}
We examined Remark~\ref{rem:volume} using Glow, iResNet, and ResFlow, excluding CV-Glow and IDF, which have a constant (or zero) volume.
Fig.~\ref{fig:manipulated_c10} (right) presents the plot for ResFlow on the manipulated CIFAR-10.
It shows that a decrease in complexity (from Noise-4-16 to Pool-32) causes an increase in the volume, providing support for Remark~\ref{rem:volume}.
Similar plots were observed for all other five cases, as shown in Fig.~\ref{fig:extra_volume_vs_z} in Appendix~\ref{sec:extra_exp_2}.
Notably, Remark~\ref{rem:volume} asserts the opposite of the speculation presented in a previous study~\cite{ijcai2021p292} (that higher image complexity corresponds to a larger volume).

\paragraph{Experiment 3: Correlation between $\log p({\bf z})$  and volume.}
Fig.~\ref{fig:manipulated_c10} (right) also shows that the positive correlation between $\log p({\bf z})$ ($\propto -\norm{{\bf z}}^{2}$)
and the volume $\log |\text{det} \ J_{f}({\bf x})|$.
It indicates that
the two increase simultaneously in a balanced manner, which we have already presented as Observation~\ref{obs:corr}.
The balance between the two is a result of the learning process of NFs.
By maximizing Eq.\ \eqref{eq:nf}, an NF is trained to reach an equilibrium between two opposing forces: 1) attracting ${\bf z}$ to high-density region in $\mathcal{Z}$ to maximize $\log p({\bf z})$, and 2) scattering ${\bf z}$ to expand the volume $|\text{det} \ J_{f}({\bf x})|$~\cite{dinh2014nice,pmlr-v130-behrmann21a}.
In other words,  training an NF model involves optimizing the ratio between $\log p({\bf z})$ and the volume to partition the variation in $\log p({\bf x})$.
The correlation strength between the two, as discussed in Appendix~\ref{sec:arch_and_sensitivity}, is dependent on the architecture of NFs. We utilize this knowledge in the selection of architectures for OOD detection in Section~\ref{sec:gmm}.


\section{Likelihood is Untrustworthy}
\label{sec:}
With Remarks presented in the previous section, we address the questions posed in Section~\ref{sec:questions}
and assert the untrustworthiness of the likelihood of Normalizing Flows.
\paragraph{ \emph{Why does less image complexity result in the failure of the likelihood test?}}
Remark~\ref{rem:logpx} directly answers this question.
Inputs with less image complexity, even OOD ones, cause an increase in $\log p({\bf x})$, leading to their misidentification as In-Dist.
While some studies have experimentally shown that a decrease in the image complexity causes an increase in $\log p({\bf x})$, 
we have formulated the underlying mechanism in Eq.\ \eqref{eq:main_c} and identified that  the culprit is the effect caused by a density concentration in latent distribution, which we refer to as DCAS in Remark~\ref{rem:dcas}.

\paragraph{ \emph{Why are OOD inputs often misclassified as typical set samples?}}
Based on Remark~\ref{rem:dcas}, we explain the mechanism underlying the failure of TTL.
We provide an illustration in Fig.~\ref{fig:GaussianAnnulus_DCAS} for better intuition.
We posit that an NF trained on In-Dist data
inherently assigns a smaller $\log p({\bf z})$ to an OOD input ${\bf x}_{\text{ood}}$ compared to an In-Dist input ${\bf x}_{\text{in}}$.
In the case of NFs with a Gaussian latent space, the NF, $f$, attempts to map ${\bf z}_{\text{ood}} = f({\bf x}_{\text{ood}})$ to a region farther than the distance $\sqrt{d}$ from the origin, where the typical set samples, including ${\bf z}_{\text{in}} = f({\bf x}_{\text{in}})$, concentrate.
However, when the complexity of ${\bf x}_{\text{ood}}$ is lower, the effect of DCAS comes into play: ${\bf z}_{\text{ood}}$ is attracted to high-density region on $\mathcal{Z}$, i.e., the origin of the standard Gaussian, 
causing $\norm{{\bf z}_{\text{ood}}}$ to become smaller.
Consequently, $\norm{{\bf z}_{\text{ood}}}$ can be as small as $\sqrt{d}$,
rendering the TTL unable to differentiate such ${\bf x}_{\text{ood}}$ from the typical set, i.e., the In-Dist examples.
Therefore, the failure of TTL is caused by the balance between the \emph{original} $\norm{{\bf z}_{\text{ood}}}$ assuming the effect of DCAS could be removed, and  the extent to which the DCAS shrinks $\norm{{\bf z}_{\text{ood}}}$.
Both factors depend on the combination of In-Dist dataset and OOD inputs.
The combination of CIRAR-10 for In-Dist and SVHN for OOD is a case where $\norm{{\bf z}_{\text{ood}}}$ is coincidentally shrunk to approximately $\sqrt{d}$.
On the other hand, inputs with even less complexity than SVHN, such as Pool-16/32, are more intensely affected by the DCAS,
resulting in $\norm{{\bf z}_{\text{ood}}} < \sqrt{d}$.
In this case, ${\bf z}_{\text{ood}}$ falls out of the annulus where the typical set samples reside, and thus the TTL can successfully identify such $\norm{{\bf x}_{\text{ood}}}$ as OOD.

\paragraph{Untrustworthiness of $\log p({\bf z})$ and $\log p({\bf x})$.}
We explained the reasons why existing OOD detection methods fail above.
Now, we present the main claim of this study.
The impact of image complexity on the volume term can be mitigated by the selection of the NF architecture, and indeed, in fixed-volume architectures such as CV-Glow and IDF,
the effect described in Remark~\ref{rem:volume} is nullified (Appendix~\ref{sec:arch_and_sensitivity}).
However, the effect of DCAS described in Remark~\ref{rem:dcas} affects $\log p({\bf z})$ whenever the latent distribution $P_{\text{z}}$ has a density concentration, regardless of its distributional form.
Since there is no way to disable the DCAS, $\log p({\bf z})$ is unreliable.
This not only undermines the trustworthiness of  the TTL, but also renders $\log p({\bf x})$, which incorporates $\log p({\bf z})$, untrustworthy.
Therefore, we conclude that the likelihood of Normalizing Flows is untrustworthy.

\begin{figure*}[t]
	\centering
	\includegraphics[height=3.15cm, trim={0 0 6cm 1.15cm}, clip]{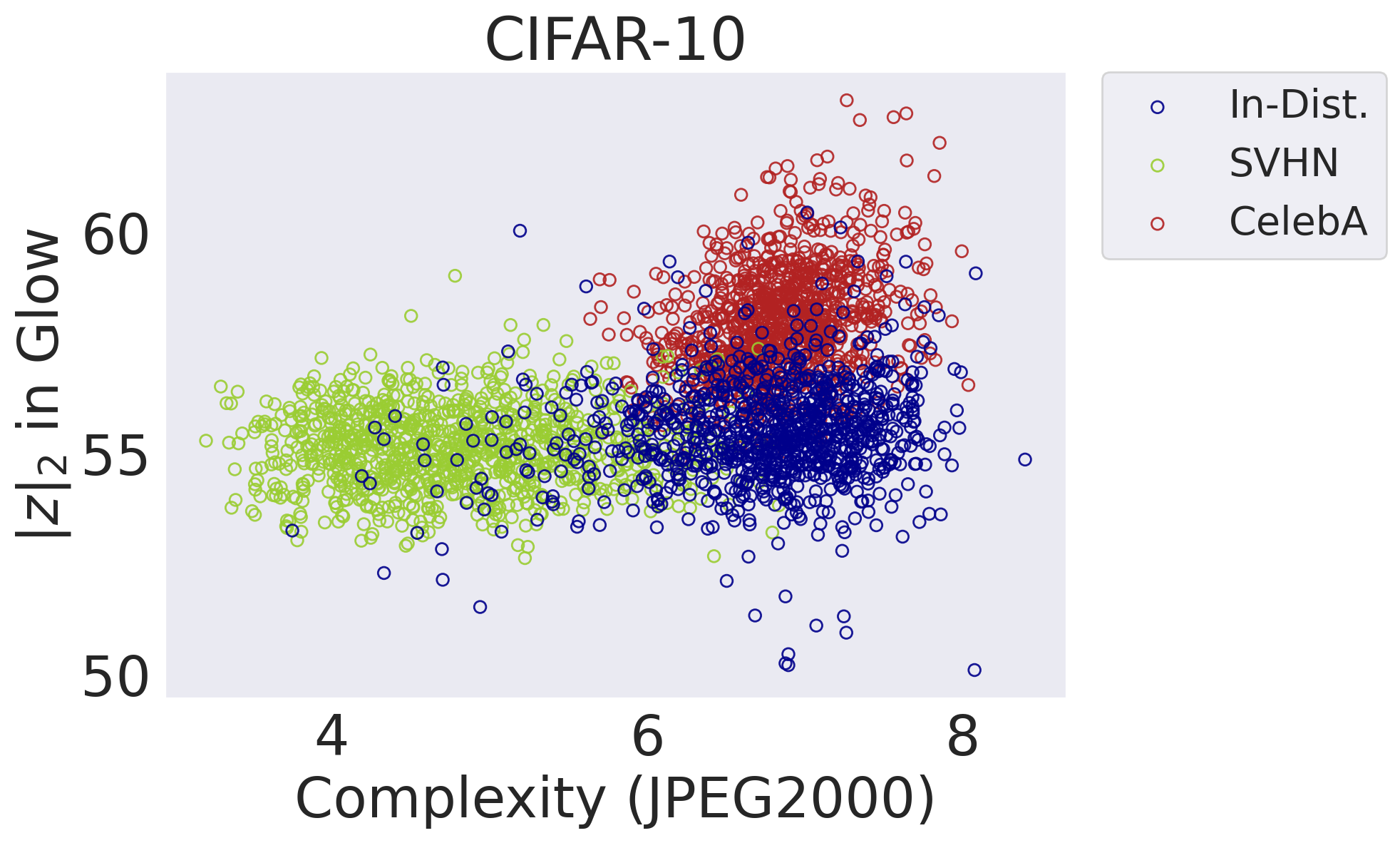}
	\hspace{10px}
	\includegraphics[height=3.15cm, trim={0 0 6cm 1.15cm}, clip]{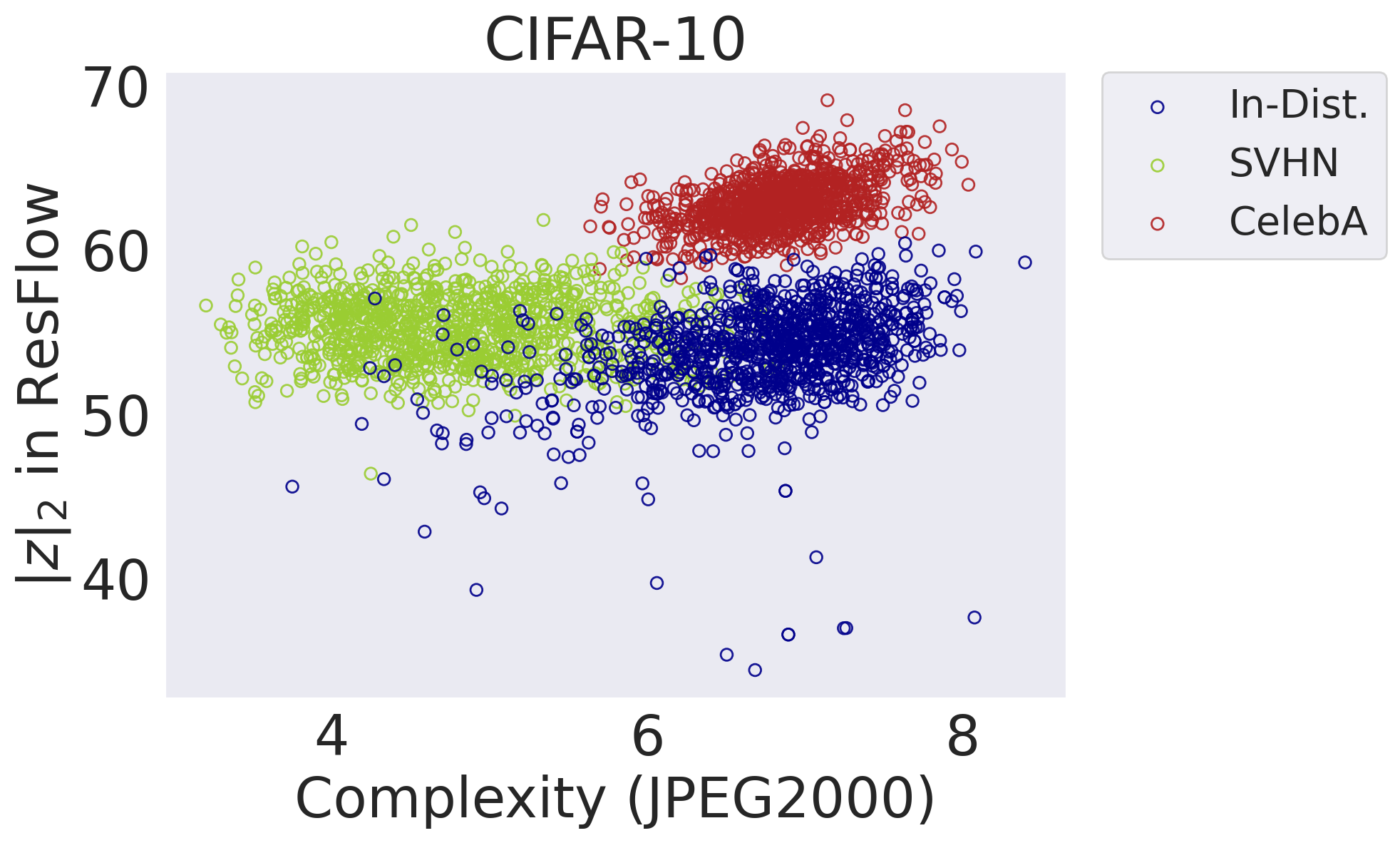}
	\hspace{10px}
	\includegraphics[height=3.15cm, trim={0 0 0 1.15cm}, clip]{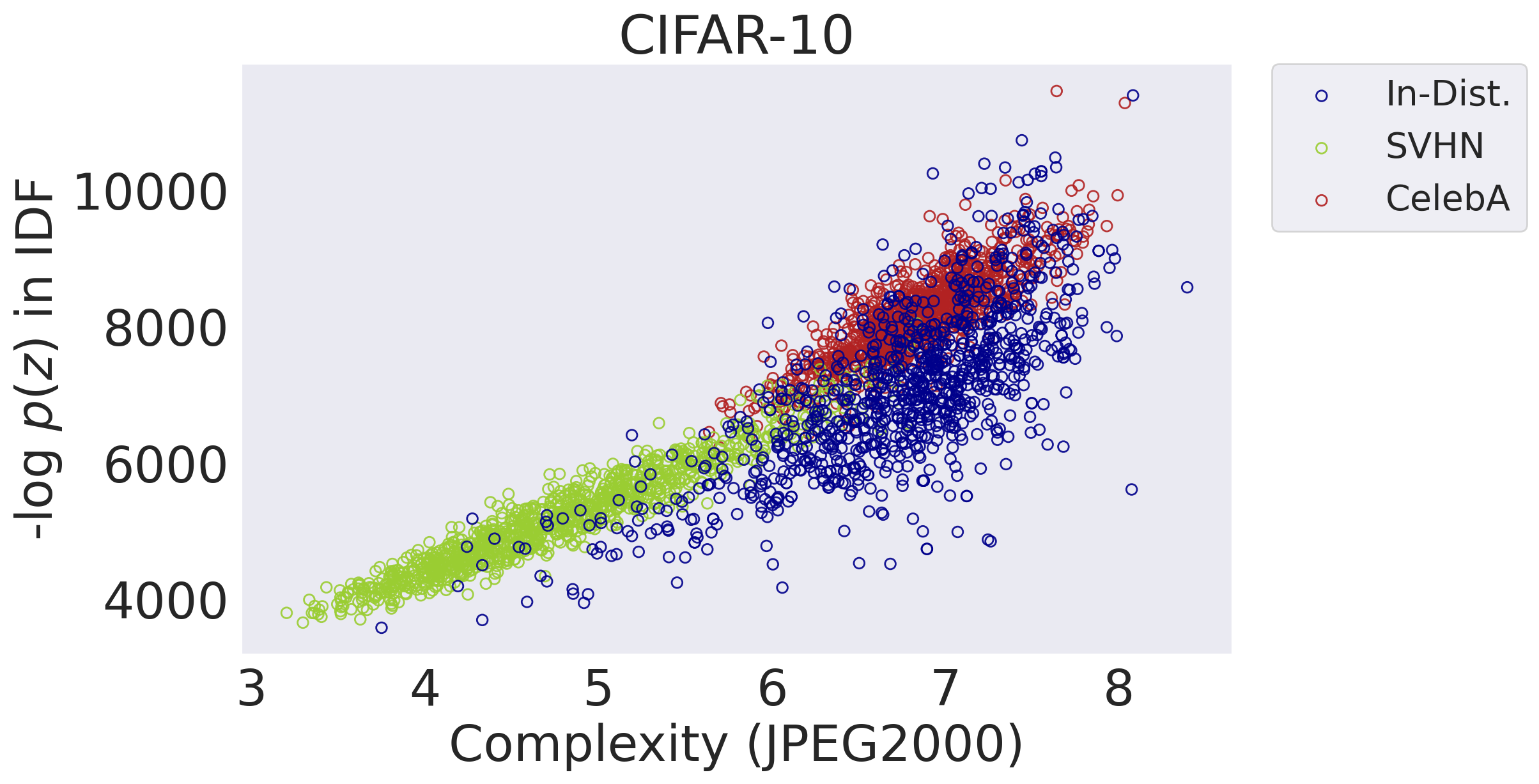}
	\caption{Complexity vs.\ $\norm{{\bf z}} ( \propto - \sqrt{\log p({\bf z})})$ for OOD datasets. Blue is In-Dist (CIFAR-10), green is SVHN, and Red is CelebA. From left to right, Glow, ResFlow, and IDF.
		Glow and ResFlow exhibit more pronounced separation between datasets compared to IDF.
		The GMM is trained to capture the in-distribution in these two-dimensional spaces.}
	\label{fig:comp_vs_z__ood}
\end{figure*}

\begin{table*}[t]
	\centering
				\fontsize{9pt}{10pt}\selectfont
				\begin{tabular}{lrrrrrrrrrrrr}
					\toprule
					&  SVHN &  CelebA &   TIN &   Bed &  Living &  Tower & N-1 & N-2 & N-4 & N-8 & N-16 & N-32 \\
					\midrule
					TTL         & \underline{47.86} &   89.46 & 84.62 & 90.77 &   91.70 &  89.76 &   100 &     \underline{57.32} &     \underline{36.88} &    91.70 &     96.41 &     99.97 \\
					CALT     & 94.08 &    \underline{52.78} &  \underline{36.01} &  \underline{54.28} &   36.12 &  61.35 &    97.74 &   100 &   100 &   100 &    100 &    100 \\
					LRB        &  \underline{54.16} &   65.69 &  \underline{38.33} &  \underline{34.56} &    \underline{32.81} &   \underline{28.59} &      \underline{0.79} &      \underline{0.26} &      \underline{6.47} &     \underline{15.31} &      \underline{14.87} &      \underline{11.81} \\
					LRG     & 75.08 &   94.08 & 81.41 & 67.92 &   61.96 &  68.97 &   100 &    73.51 &    99.21 &    99.23 &     97.51 &     96.37 \\
					WAIC    & 74.78 &    \underline{33.97} & 78.31 & 78.43 &   85.97 &  83.60 &      100 &   100 &      88.3 &    90.79 &     98.08 &     99.60 \\
					\midrule
					Ours (Glow) & 91.47 &   83.15 & 88.59 & 86.97 &   89.25 &  88.42 &   100 &    99.23 &   100 &   100 &    100 &    100 \\
					Ours (ResFlow) & 93.51 &   99.92 & 93.53 & 94.04 &   94.10 &  91.92 &   100 &    99.99 &   100 &    99.96 &    100 &    100 \\
					Ours (iResNet)  & 87.51 &   62.27 & 61.98 & 81.41 &   86.04 &  80.52 &   100 &    99.94 &    10.80 &    99.47 &    100 &    100 \\
					\bottomrule
				\end{tabular}

				\fontsize{9pt}{10pt}\selectfont
				\begin{tabular}{lrrrrrrrrrrrr}
					\toprule
					&  CIFAR-10 &  CelebA &    TIN &    Bed &  Living &  Tower & N-1 & N-2 & N-4 & N-8 & N-16 & N-32 \\
					\midrule
					TTL         & 97.59 &   99.98 &  99.83 &  99.99 &  100 &  99.85 &   100 &   100 &    99.98 &    78.34 &     81.69 &     99.89 \\
					CALT        &   \underline{8.12} &    \underline{4.12} &    \underline{8.80} &   \underline{14.14} &     \underline{9.17} &   \underline{21.58} &   100 &   100 &   100 &   100 &    100 &    100 \\
					LRB         &   \underline{1.45} &     \underline{5.05} &    \underline{7.01} &   \underline{14.10} &     \underline{9.89} &   \underline{14.67} &     \underline{53.86} &      \underline{0.00} &      \underline{0.11} &      \underline{6.29} &      \underline{16.20} &      \underline{11.68} \\
					LRG         & 94.99 &   99.96 & 100 & 100 &  100 & 100 &   100 &   100 &    99.72 &    99.04 &     96.71 &     96.24 \\
					WAIC        & 99.56 &   99.45 &  99.86 &  99.96 &   99.99 &  99.74 &   100 &   100 &   100 &    98.25 &     95.87 &     98.04 \\
					\midrule
					Ours (Glow)  & 98.01 &   99.96 &  99.91 &  99.98 &  100 &  99.86 &   100 &   100 &   100 &   100 &    100 &    100 \\
					Ours (ResFlow) & 65.96 &   63.76 &  85.76 &  68.96 &   84.86 &   \underline{57.04} &   100 &    92.12 &   100 &   100 &    100 &    100 \\
					Ours (iResNet)  & 89.60 &   98.86 &  98.99 &  99.69 &   99.89 &  99.03 &   100 &   100 &    99.97 &    65.61 &     99.16 &     99.22 \\
					\bottomrule
				\end{tabular}
	\caption{AUROC (\%)$\uparrow$. In-Dist datasets are CIFAR-10 (top) and SVHN (bottom).
		Due to space constraints, Noise-$\kappa$ is abbreviated as N-$\kappa$.
		Failure cases (lower than $60\%$) are underlined.}
	\label{tab:AUROC_CIFAR10_SVHN}
\end{table*}

\begin{table*}[t]
	\centering
				\fontsize{9pt}{10pt}\selectfont
				\begin{tabular}{lrrrrrrrrrrr}
					\toprule
					&  SVHN &  CelebA &   TIN &   Bed &  Living &  Tower & N-2 & N-4 & N-8 & N-16 & N-32 \\
					\midrule
					PixelCNN++ & 83.38 &    \underline{14.37} &  \underline{23.20} &  \underline{15.53} &    \underline{10.33} &   \underline{24.62} &         \underline{0.03} &     \underline{36.47} &    98.92 &    100 &    100 \\
					Ours & 95.34 &   75.71 & 74.91 & 76.74 &   78.77 &  80.68 &      100 &   100 &   100 &    100 &    100 \\
					\bottomrule
				\end{tabular}
	%
				\fontsize{9pt}{10pt}\selectfont
				\begin{tabular}{lrrrrrrrrrrr}
					\toprule
					&  CIFAR-10 &  CelebA &    TIN &    Bed &  Living &  Tower & N-2 & N-4 & N-8 & N-16 & N-32 \\
					\midrule
					PixelCNN++ &    \underline{1.86} &     \underline{0.02} &   \underline{0.75} &    \underline{0.18} &     \underline{0.05} &   \underline{0.76} &       \underline{0.00} &      \underline{0.00} &    66.80 &     99.30 &     96.96 \\
					Ours & 91.48 &   96.65 &  96.94 &  98.42 &   99.31 &  97.61 &     100 &   100 &   100 &    100 &    100 \\
					\bottomrule
				\end{tabular}
	\caption{AUROC (\%)$\uparrow$. \bsq{Ours} indicates complexity-aware PixelCNN++. \bsq{PixelCNN++} indicates the likelihood test with PixelCNN++ as a baseline. In-Dist datasets are CIFAR-10 (top) and SVHN (bottom).
		Due to space constraints, Noise-$\kappa$ is abbreviated as N-$\kappa$.
		Failure cases (lower than $60\%$) are underlined. The significant improvements shown here suggest that our hypothesis may be applicable not only to NFs but also to autoregressive models.}
\label{tab:AUROC_pcnnpp}
\end{table*}

\section{OOD Detection with Complexity Awareness}

In the previous section, we have shown that the OOD detection based on the likelihood of Normalizing Flows is untrustworthy.
In this section, we show that this situation can be overcome by exploiting the information on the root cause of the problem, i.e., image complexity.
We aim to further validate Hypothesis~\ref{thm:c} through the presented demonstration.


\subsection{Experiment 4:  Normalizing Flows}
\label{sec:gmm}
We aim to isolate the influence of image complexity $C({\bf x})$ on the likelihood.
To achieve this, we treat $C({\bf x})$ as an independent variable and train a multivariate detection model that takes two variables, $C({\bf x})$ and  $\log p({\bf z})$.
While the volume term $\log |\text{det} \ J_{f}({\bf x})|$ could also be used as an input variable, we choose not to include it due to its dependence on the NF architecture and higher computational cost for iResNet and ResFlow.
Among the various methods available for multivariable detection, we employ one of the simplest models, a Gaussian mixture model (GMM), for demonstration.

\paragraph{NF models.}
We use Glow, iResNet, and ResFlow in this demonstration because in these architectures,
$\log p({\bf z})$ is insensitive to $C({\bf x})$, which we believe is advantageous for distinguishing between different datasets.
For details regarding this architecture selection, refer to Appendices~\ref{sec:arch_for_oodd} and \ref{sec:arch_and_sensitivity}.
Fig.\ \ref{fig:comp_vs_z__ood} visually demonstrates
the superior capability of Glow and ResFlow to separate and differentiate each dataset, outperforming IDF.

\paragraph{Detection with GMM.}
Using the samples in the training portion of the In-Dist datasets, ${\bf x}_{\text{train}}$, we obtain a set of $C({\bf x}_{\text{train}})$ and $\log p(f({\bf x}_{\text{train}}))$ where $f$ is a trained NF model.
Then, we train a GMM $\mathcal{G}$ to capture the distribution of two-dimensional vectors composed of $C({\bf x}_{\text{train}})$ and $\log p({\bf z}_{\text{train}})$.
At the testing phase, given a test sample ${\bf x}_{\text{test}}$, we input $C({\bf x}_{\text{test}})$ and $\log p({\bf z}_{\text{test}})$ into the trained $\mathcal{G}$,
obtaining the likelihood score $S_{\text{GMM}}({\bf x}_{\text{test}})$ from $\mathcal{G}$.
A larger $S_{\text{GMM}}({\bf x}_{\text{test}})$ indicates a higher likelihood that ${\bf x}_{\text{test}}$ belongs to the In-Dist dataset.
For more details, see Appendix \ref{sec:gmm_detail}.

\paragraph{Datasets.}
When the In-Dist dataset is CIFAR-10 and SVHN,
we use CelebA \cite{liu2015faceattributes}, TinyImageNet (TIN) \cite{ILSVRC15},  LSUN \cite{journals/corr/YuZSSX15}, and the pooling noise images (Section~\ref{sec:exp1}) as OOD datasets.
For LSUN, which comprises various scene categories, we select the Bedroom (\emph{Bed}), Living room (\emph{Living}), and \emph{Tower} categories, treating each of them as individual OOD datasets.
When the In-Dist dataset is MNIST and FMNIST, we use OMNIGLOT \cite{lake2015human} and NotMNIST \cite{bulatov2011notmnist} as OOD datasets.
In experiments using ImageNet of size $224 \times 224$ as the In-Dist dataset,
we use the pooling noise images as OOD datasets.

\paragraph{Evaluation metrics.}
We evaluate our method using two standard metrics in OOD detection literature: the area under the receiver operating characteristic curve (AUROC) and the area under the precision-recall curve (AUPR).
These metrics provide an overall assessment of performance by varying the detection threshold.
Higher values for both metrics indicate better performance.
Given that the chance level for AUROC is 50\%, we set a minimum threshold of 60\%  to evaluate detectability.
Our primary focus is on the method's capability to detect any type of OODs.
Thus, we evaluate the methods based on whether the AUROC scores exceed 60\% for each OOD dataset used in the evaluation.
In the tables, scores below 60\% are underlined to indicate that they do not meet the detectability threshold.

\paragraph{Competitors.}
In addition to TTL and CALT, we compare our method to three other existing NF-based methods on CIFAR-10 and SVHN:
the Watanabe-Akaike Information Criterion (WAIC) \cite{choi2019generative}, the likelihood-ratio to background model (LRB) \cite{NEURIPS2019_1e795968}, and the likelihood-ratio to general model (LRG) \cite{NEURIPS2020_f106b7f9}.
Appendix \ref{sec:comparisons_detail} provides detailed descriptions of each method.

\paragraph{Results on MNIST and FMNIST.}
The AUROC and  AUPR are shown in Appendix \ref{sec:extra_exp_2} (Tables \ref{tab:AUROC_MNIST_FMNIST} and \ref{tab:AUPR_MNIST_FMNIST}).
Our methods detected all cases, with Glow achieving the highest performance, followed by ResFlow and iResNet.
\paragraph{Results on CIFAR-10 and SVHN.}
The AUROC and  AUPR are shown in Tables \ref{tab:AUROC_CIFAR10_SVHN} and \ref{tab:AUPR_CIFAR10_SVHN} in Appendix \ref{sec:extra_exp_2}, respectively.
Once again, our method with Glow demonstrated the most superior performance overall.
Our method with ResFlow performed best on CIFAR-10 but showed weaker performance on SVHN, scoring below 60\% on Tower.
Among the other methods, only LRG scored higher than 60\% in all test cases, excluding our methods,
our speculation on which is included in Appendix \ref{sec:aupr_C10_SVHN}.

\paragraph{Results on ImageNet.}
The AUROC and  AUPR are shown in Table~\ref{tab:ret_IMAGENET} in Appendix~\ref{sec:extra_exp_2}.
Our method achieved detection accuracy of 100\% or close to it across all evaluated pooling noise images,
validating our claims for large images.

\paragraph{Benefits of using two variables}
are visually shown in Fig.~\ref{fig:comp_vs_z__ood} (left and center).
Relying solely on $C({\bf x})$ results in the inability to differentiate between CIFAR-10 (In-Dist.) and CelebA.
Similarly, using only $\log p({\bf z})$ fails to differentiate between CIFAR-10 (In-Dist.) and SVHN.
Effective separation is achieved by utilizing both variables.

%
%
%

\section{Applicability to Autoregressive Model}
\label{sec:pcnnpp}
We now shift our focus from Normalizing Flows (NFs) to Autoregressive (AR) models, which are another type of DGM where instances of the failure of the likelihood test have also been observed~\cite{nalisnick2018do}.
While our Hypothesis~\ref{thm:c} is developed based on a function invertible between latent space and data space,
AR models are designed without the concept of latent space.
However, a recent interpretation has proposed considering AR models as a single-layer NF,
thus implying the presence of an implicit latent space~\cite{NEURIPS2020_26ed695e}.
In this interpretation, the latent distribution of PixelCNN++ is a discretized mixture of logistics.
With this perspective, we argue that
our hypothesis is also applicable to AR models and can provide an explanation for the failure of the likelihood tests with AR models.
To substantiate this claim, we present the following experimental results.

\subsection{Experiment 5: PixelCNN++ }
First, we present the results of the same experiments using GMM, previously done, but this time applied to PixelCNN++.
Similar to CV-Glow and IDF, the volume term in PixelCNN++ is fixed (zero) as $\log p({\bf x}) = \log p({\bf z})$, so the GMM was constructed on $\log p({\bf x})$ and $C({\bf x})$.
The AUROC results are provided in Table \ref{tab:AUROC_pcnnpp}, while the AUPR results are available in Table \ref{tab:AUPR_pcnnpp} in Appendix \ref{sec:aupr_C10_SVHN}.
The results clearly show that our complexity-aware method significantly improves the OOD detection performance for PixelCNN++ as well.
These outcomes suggest the existence of the DCAS described in Remark~\ref{rem:dcas} in PixelCNN++.

Second, as demonstrated in Appendix \ref{sec:extra_exp_3}, the response of $\log p({\bf x})$ (or $\log p({\bf z})$) in PixelCNN++ to image complexity closely resembles that observed in NFs with a fixed volume architecture, i.e., CV-Glow and IDF.
This finding suggests that the discussion pertaining to Remark~\ref{rem:volume} (Appendix~\ref{sec:arch_and_sensitivity}) is also applicable to PixelCNN++.
While further theoretical verification may be necessary to conclusively establish the applicability of Hypothesis~\ref{thm:c} to AR models, our results provide plausible evidence supporting this assertion.

\section{Limitation}
The OOD detection task we addressed was aimed to discern differences between datasets, such as CIFAR-10 and SVHN.
More challenging tasks such as semantic OOD detection — aimed at identifying objects not present in the training data  —  have remained unexplored in this study.
This is the limitation of this paper and is an avenue left for future work.

\section{Conclusion}
\label{sec:conclusion}
In this study, we  have proposed a hypothesis that explains the failure of OOD detection methods based on the likelihood of Normalizing Flows and Autoregressive models and delineates how the likelihood in those models is affected by varying image complexity.
We believe that the findings presented in this paper will contribute to the future advancement of DGM applications, particularly OOD detection.

\bibliographystyle{unsrt} 
\bibliography{my_bib_230808}

	\appendix
	\section{Additional Related Works}
\label{sec:related_additional}

\paragraph{DGM-based approaches.}
\cite{pmlr-v139-zhang21g} investigated the failures of the likelihood test and the typicality test for OOD detection.
They argued that without specifying the type of OOD to be detected, it is impossible to rely on a single statistical test, even if it is based on the likelihood.
They also discussed the failure of the typicality test when the out-distribution overlaps with the support of the in-distribution.
In \cite{NEURIPS2020_66121d1f}, the failure of the typicality test on latent space of a VAE~\cite{kingma2013auto} was attributed to the fact that its latent space did not form a perfect Gaussian.
While these studies demonstrated the failures of the likelihood test and the typicality test, our work takes a step further by providing an explanation about \emph{why} they fail.
Importantly, we show that the underlying causes of both testing methods' failures are fundamentally the same.

Several studies have highlighted that DGMs exhibit a sensitivity to the \emph{smoothness} of images,
which can lead to the failure of likelihood test.
The term  \emph{smooth} refers to regions where adjacent pixels have constant or very close values.
For example, \cite{NEURIPS2019_1e795968} demonstrated in experiments on MNIST and FMNIST that backgrounds with black pixels were assigned high likelihoods.
Similarly, \cite{nalisnick2018do} showed that converting RGB images to black and white increased their likelihoods.
The influence of smoothness on likelihood was also observed by \cite{NEURIPS2020_f106b7f9},
which attributed it to the nature of convolutional networks.
Additionally, \cite{NEURIPS2020_ecb9fe2f} focused on NFs and argued that the coupling layer structure in NFs makes them more predictable in smooth regions, and thus images containing many smooth regions will have higher likelihoods.
Moreover, \cite{cai2023frequency} proposed a method that emphasizes high-frequency features associated with object contours.
The concept of  \emph{less complexity} mentioned in this paper aligns with the notion of smoothness discussed in these studies.
However, we emphasize that the cause of irrationally increased likelihoods is not specific to a particular network structure but rather a density concentration in the latent space.

\ifshowappcite
Instead of identifying the cause of the problem, research to find workarounds has been being undertaken: combining multiple statistical tests~\cite{pmlr-v151-bergamin22a}, training a binary classifier using multiple statistics~\cite{pmlr-v130-morningstar21a,ijcai2021p292}, and ensembling DGMs~\cite{choi2019generative}.
These techniques should be considered for practical applications.
However, before delving into the development of such practical solutions, this paper aims to deepen our understanding of how DGMs behave in the context of OOD detection.

Since the focus of our study is on likelihood-based analysis, we explore Normalizing Flows and Autoregressive models, which can exactly compute likelihood.
However, it is worth adding that there is a recent surge in interest in
employing hierarchical Variational Autoencoders for out-of-distribution detection~\cite{pmlr-v139-havtorn21a, li2022outofdistribution}.

\paragraph{Classifier-based methods.}
In addition to DGM-based methods,
many recent works in the field of OOD detection have focused on approaches based on classifiers
\cite{morteza2022provable,NEURIPS2020_543e8374,pmlr-v162-ming22a,sun2022dice,du2022vos,Huang_2021_CVPR,Zisselman_2020_CVPR,Pei_2022_ECCV,Vyas_2018_ECCV,Yu_2019_ICCV}.
Hybrid methods that combine classifiers with DGMs have also been proposed~\cite{pmlr-v97-nalisnick19b,zhang_2020_ECCV}.
\fi

\section{Existing Methods}

\subsection{Typical Set and Typicality Test}
\label{sec:typical_set}
The concept of the \emph{typical set} refers to a set or region that contains a significant amount of probability mass in a given distribution.
It is a fundamental concept in probability theory (see~\cite{nalisnick2020detecting,cover2012elements} for formal definitions).
In the context of OOD detection, \cite{choi2019generative,nalisnick2020detecting}
argued that the failure of the likelihood test can be attributed to ignorance of the notion of the typical set.
While samples drawn from a DGM are highly likely to come from its typical set, in high-dimensional spaces, the typical set may not necessarily coincide with high-density or high-likelihood regions.
This phenomenon becomes more pronounced as the dimensionality increases.
While specifying the region of the typical set for arbitrary distributions is difficult, it is possible for an isotropic Gaussian distribution.
It is well known that the typical set of samples from a $d$-dimensional Gaussian $\mathcal{N}(0,  \textbf{I}_{\text{d}})$ resides in a hypersphere with a radius of $\sqrt{d}$ with high probability.
Specifically, for any $\epsilon \in (0,1)$, we have
\begin{eqnarray}
\mathbb{P} \left(  \sqrt{ d(1 - \epsilon)} < \norm{{\bf z}}  < \sqrt{ d(1 + \epsilon)} \right)  \geq  1 - 2 \text{exp} \left( - \frac{d \epsilon^{2}}{8}\right).
\label{eq:tailbound}
\end{eqnarray}
We refer the reader to \cite{osadaChernoffTailBound}, for example, for the derivation.
As the dimension $d$ increases, the radius of the hypersphere also increases, causing the region where the typical set (i.e., In-Dist samples) resides to move away from the region of highest likelihood, which corresponds to the mean of the Gaussian distribution.
Consequently, In-Dist examples are often assigned low likelihoods in high-dimensional spaces.
To address this issue, \cite{choi2019generative,nalisnick2020detecting} proposed a method, the typicality test in latent space (TTL).
The TTL utilizes NFs with latent distribution of $\mathcal{N}(0, \textbf{I}_{\text{d}})$ to obtain the latent vector ${\bf z}$ corresponding to a test input  ${\bf x}$.
It then computes the score  $S_{\text{TTL}} ({\bf x})= {\tt abs} (\norm{ {\bf z}} - \sqrt{d})$, where $\lVert \cdot \rVert$ denotes the $L_2$ norm.
If ${\bf x}$ falls outside the typical set, the score $S_{\text{TTL}}({\bf x})$ will be large, indicating that ${\bf x}$ is likely to be an OOD input.
We present our experimental results on the TTL in Appendix \ref{sec:calt_dz}.

\subsection{Our Experiments}
\label{sec:calt_dz}

\begin{figure*}[h]
\centering
\includegraphics[width=0.4 \textwidth]{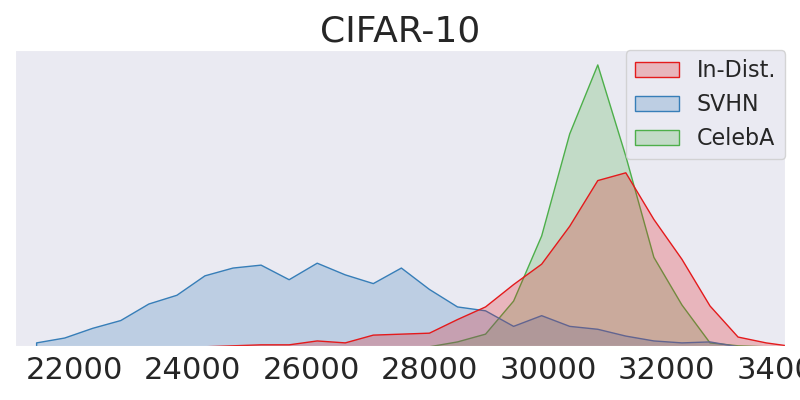}
\hspace{15px}
\includegraphics[width=0.4 \textwidth]{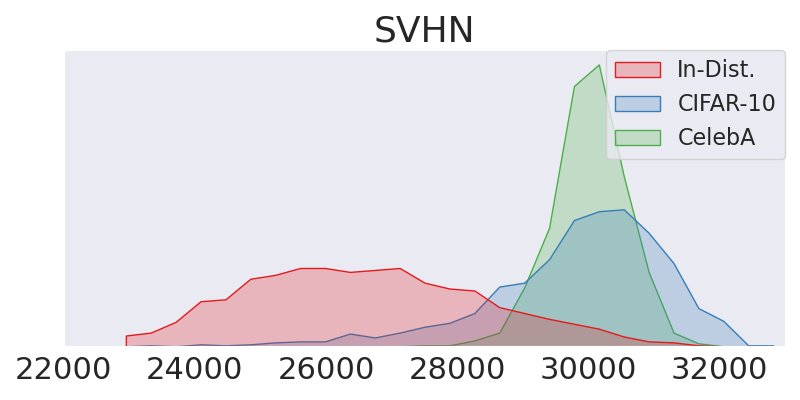}
\\
\includegraphics[width=0.4 \textwidth]{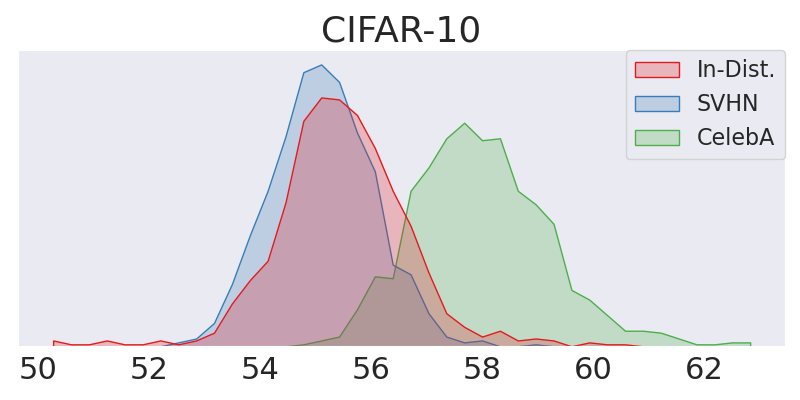}
\hspace{15px}
\includegraphics[width=0.4 \textwidth]{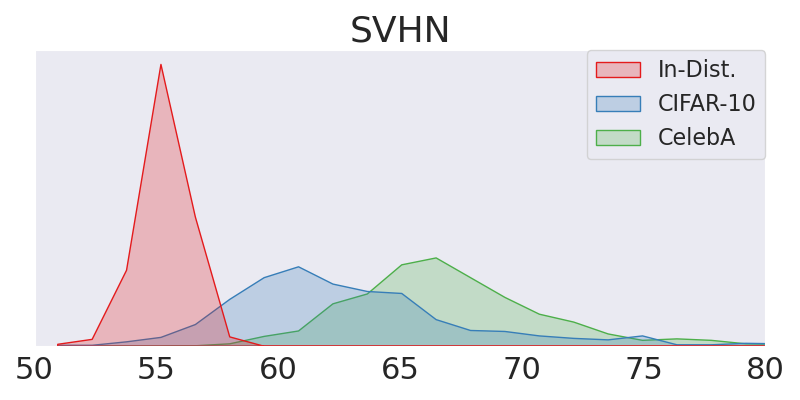}
\caption{Existing methods fail in specific combinations. The cases where In-Dist is CIFAR-10 are shown in the left columns, and the cases where In-Dist is SVHN are shown in the right columns. Top row: histograms of CALT. The $x$-axis is $S_{\text{CALT}}({\bf x}) = \log p({\bf x}) +C({\bf x})$.
	Bottom row: histograms of TTL. The $x$-axis is $\norm{{\bf z}} ( \propto - \sqrt{\log p({\bf z})})$.
	CALT fails to detect CIFAR-10 samples as OOD when SVHN is  In-Dist (top-right).
	TTL cannot identify SVHN samples as OOD when CIFAR-10 is  In-Dist (bottom-left).
}
\label{fig:calt_dz}
\end{figure*}

\begin{figure*}[h]
\centering
\includegraphics[width=0.48 \textwidth]{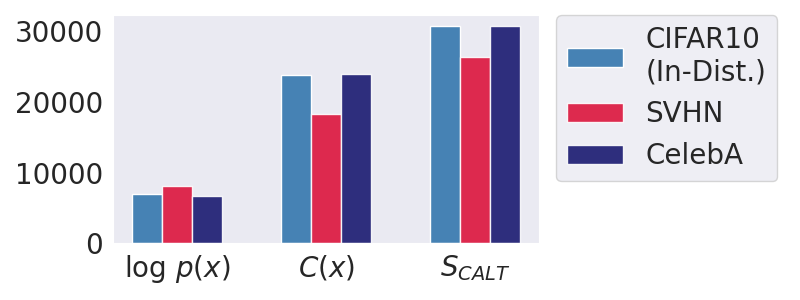}
\includegraphics[width=0.48 \textwidth]{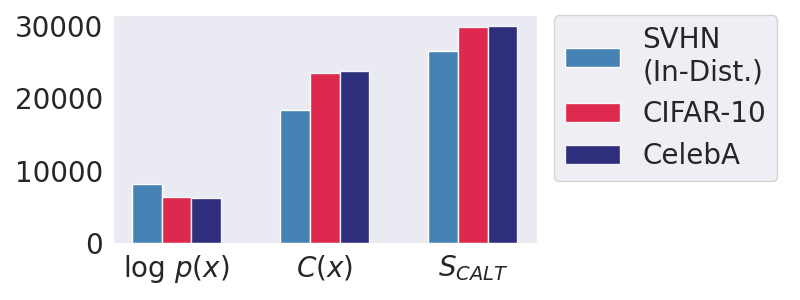}
\caption{Decomposing $S_{\text{CALT}}({\bf x})$ into $\log p({\bf x})$ and $C({\bf x})$ on CIFAR-10 (left) and  SVHN (right). }
\label{fig:calt_detail}
\end{figure*}

We experimented two existing OOD detection methods, the complexity aware likelihood test (CALT) and the typicality test in latent space (TTL).
The experiments were performed with Glow on SVHN and CIFAR-10, and we also used CelebA as the OOD datasets with resizing to $32 \time 32$.
The test samples are the same we used in our experiments.
Fig.\ \ref{fig:calt_dz} (top two) shows the histogram of the score that CALT bases on, and Fig.\ \ref{fig:calt_detail} shows its components.
It shows that CALT fails to detect CIFAR-10 samples as OOD when SVHN is  In-Dist (top-right).
Fig.\ \ref{fig:calt_dz} (bottom two) shows the histogram of $\norm{ {\bf z}}$ that the TTL bases on.
It shows that TTL cannot identify SVHN samples as OOD when CIFAR-10 is  In-Dist (bottom-left).

\section{Derivation of Hypothesis~\ref{thm:c}}
We establish Hypothesis~\ref{thm:c}.
In the derivation, we consider generating samples ${\bf x}$ in the data space $\mathcal{X}$ from a small region $\mathcal{A}$ in the latent space $\mathcal{Z}$.
In the derivation provided in Appendix~\ref{sec:proof}, we assume that the set of ${\bf x}$ generated from $\mathcal{A} \subset \mathcal{Z}$ follows a Gaussian distribution.
A more generalized scenario, without the Gaussian assumption, is derived from experimental observations in  Appendix~\ref{sec:derive_by_exp} .

\subsection{Analysis}
\label{sec:proof}
Introducing two lemmas, we derive Observation~\ref{obs:theoretical_derived_conjecture}, which aligns with Hypothesis~\ref{thm:c}.
We assume that the probability density function is Gaussian in a subdomain $f^{-1}(\mathcal{A}) \subset \mathcal{X}$.
The invertible function (practically, we assume Normalizing Flow) $f: \mathcal{X} \rightarrow \mathcal{Z}$ is locally $L_{\mathcal{A}}$-Lipschitz with respect to a small region $\mathcal{A} \subset \mathcal{Z}$.
We note that in a different region $\mathcal{A}'$, the Lipschitz coefficient of $f$ differs and becomes $L_{\mathcal{A}'}$.
In other words, the local Lipschitz coefficient of $f$ depends on which region in $ \mathcal{Z}$ the sample ${\bf x}$ is mapped to.
The relationship assumed in Lemmas~\ref{thm:var} and \ref{lem:ent_to_mse} is illustrated in Fig.~\ref{fig:thm2}.

\begin{figure*}[t]
\centering
\includegraphics[width=0.45 \textwidth]{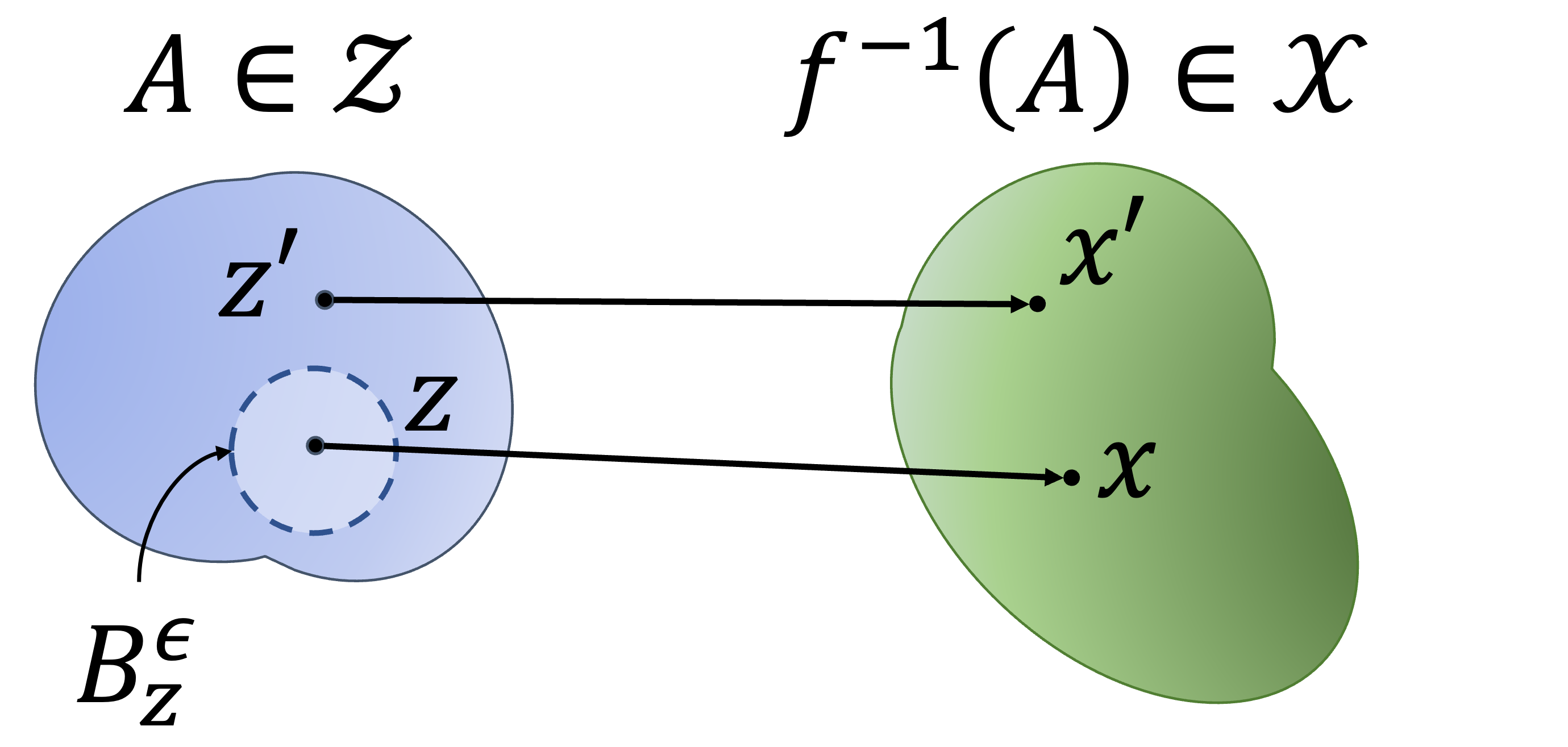}
\caption{Illustration of the relationship in Lemmas~\ref{thm:var} and \ref{lem:ent_to_mse}.
	In Lemma~\ref{lem:ent_to_mse}, the green region $f^{-1}(\mathcal{A}) \subset \mathcal{X}$ is assumed to follow a Gaussian distribution with a mean ${\bf x} = \EX [ f^{-1}({\bf z}')]$.
	Assuming that the latent space $\mathcal{Z}$ is semantically continuous,
	when ${\bf x}'$ is generated from a sufficiently small $\mathcal{A}$,
	they share common semantics, and thus the image complexities $C({\bf x}')$ and $C({\bf x})$  can be considered nearly equal.
}
\label{fig:thm2}
\end{figure*}

\begin{lemma}
\label{thm:var}
Let $f: \mathcal{X} \rightarrow \mathcal{Z}$ be an invertible function being locally $L_{\mathcal{A}}$-Lipschitz for $\mathcal{A} \subset \mathcal{Z}$.
For all ${\bf z}'  \in \mathcal{A}$, let ${\bf x}' =  f^{-1}({\bf z}')$, ${\bf z} = \EX {\bf z}' = [\EX z_{1}', \dots, \EX z_{d}']^{T}$, and ${\bf x} = f^{-1}( {\bf z})$.
Assume  ${\bf z}  \in \mathcal{A}$, then ${\bf x} \in f^{-1}(\mathcal{A})$.
Also, let $\mathcal{B}^{\epsilon}_{{\bf z}} = \{ {\bf z}' \in \mathcal{A}: \norm{ {\bf z}' - {\bf z} }  < \epsilon \}$ and $\mathcal{B}^{\epsilon}_{{\bf z}} \subset \mathcal{A}$ for a constant $\epsilon \geq 0$.
Then, we have
\begin{eqnarray}
	\label{eq:main_var}
	\frac{\epsilon^{2}}{{L_{\mathcal{A}}}^{2}} \left(  1 - \mathbb{P} ( \mathcal{B}^{\epsilon}_{{\bf z}	} ) \right)
	\leq \EX_{ {\bf x}' \sim  f^{-1}(\mathcal{A})} \norm{ {\bf x}'  - {\bf x}}^{2}.
\end{eqnarray}

\end{lemma}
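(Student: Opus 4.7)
The plan is to prove this in two short moves: push the problem into $\mathcal{Z}$ using the local Lipschitz hypothesis, and then apply an elementary reverse-Markov tail bound. Conceptually, the inequality merely says that the data-space variance around ${\bf x}$ is at least $L_{\mathcal{A}}^{-2}$ times the latent-space variance around ${\bf z}$, and the latter is bounded below by the tail mass lying outside the ball of radius $\epsilon$.

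For the first move, because $f$ is locally $L_{\mathcal{A}}$-Lipschitz on $\mathcal{A}$ and both ${\bf z}, {\bf z}' \in \mathcal{A}$, applying the Lipschitz inequality with $({\bf x}_{1}, {\bf x}_{2}) = ({\bf x}', {\bf x})$ yields $\norm{{\bf z}' - {\bf z}} \leq L_{\mathcal{A}} \norm{{\bf x}' - {\bf x}}$ pointwise on $f^{-1}(\mathcal{A})$. Squaring and taking expectation over ${\bf x}' \sim f^{-1}(\mathcal{A})$ gives $\EX \norm{{\bf x}' - {\bf x}}^{2} \geq L_{\mathcal{A}}^{-2}\,\EX \norm{{\bf z}' - {\bf z}}^{2}$, which turns the right-hand side of Eq.\ \eqref{eq:main_var} into a quantity living entirely in latent space.

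For the second move, I would split the latent second moment into contributions from inside and outside $\mathcal{B}^{\epsilon}_{{\bf z}}$. Dropping the nonnegative inner piece and bounding the integrand from below by $\epsilon^{2}$ on the complement yields
\begin{equation*}
\EX \norm{{\bf z}' - {\bf z}}^{2} \;\geq\; \epsilon^{2}\,\mathbb{P}\bigl(\norm{{\bf z}' - {\bf z}} \geq \epsilon\bigr) \;=\; \epsilon^{2}\bigl(1 - \mathbb{P}(\mathcal{B}^{\epsilon}_{{\bf z}})\bigr),
\end{equation*}
and chaining this with the first move reproduces exactly Eq.\ \eqref{eq:main_var}.

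Neither move is technically hard, so the only step that deserves care is the measure-theoretic bookkeeping. One must interpret $\mathbb{P}(\mathcal{B}^{\epsilon}_{{\bf z}})$ as the pushforward under $f$ of the sampling law on $f^{-1}(\mathcal{A})$, so that the complement identity $1 - \mathbb{P}(\mathcal{B}^{\epsilon}_{{\bf z}}) = \mathbb{P}(\norm{{\bf z}' - {\bf z}} \geq \epsilon)$ is valid; the hypothesis $\mathcal{B}^{\epsilon}_{{\bf z}} \subset \mathcal{A}$ is precisely what makes this identity clean. The auxiliary claim that ${\bf z} \in \mathcal{A}$ implies ${\bf x} \in f^{-1}(\mathcal{A})$ is immediate from invertibility of $f$. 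No distributional assumption on $f^{-1}(\mathcal{A})$ is needed here — the Gaussian structure only enters in the subsequent Lemma \ref{lem:ent_to_mse} to convert the variance on the right of Eq.\ \eqref{eq:main_var} into the complexity factor $\exp(C({\bf x}))$ appearing in Hypothesis \ref{thm:c}.
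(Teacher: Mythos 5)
Your proposal is correct and follows essentially the same route as the paper's proof: both combine the local Lipschitz bound $\norm{{\bf z}'-{\bf z}} \leq L_{\mathcal{A}}\norm{{\bf x}'-{\bf x}}$ with Markov's inequality and the complement identity $1-\mathbb{P}(\mathcal{B}^{\epsilon}_{{\bf z}}) = \mathbb{P}(\norm{{\bf z}'-{\bf z}}\geq\epsilon)$. The only cosmetic difference is that you apply the tail bound to the latent second moment $\EX\norm{{\bf z}'-{\bf z}}^{2}$ before invoking the Lipschitz inequality, whereas the paper first transfers the tail event to data space and then applies Markov there; the two orderings are equivalent.
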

\begin{proof}
Since $f$ is (locally) $L_{\mathcal{A}}$-Lipschitz, we know $ \norm{ {\bf z}' - {\bf z}} = \norm{ f( {\bf x}' ) - f({\bf x})} \leq  L_{\mathcal{A}} \norm{ {\bf x}' - {\bf x}} $.
Then, we have
\begin{eqnarray}
	\mathbb{P} \left( \norm{ {\bf z}' - {\bf z}} \geq \epsilon \right) &\leq& \mathbb{P} \left( L_{\mathcal{A}}  \norm{ {\bf x}' - {\bf x}}  \geq \epsilon \right) \\
	&=& \mathbb{P} \left(   \norm{ {\bf x}' - {\bf x}} \geq \frac{\epsilon}{L_{\mathcal{A}}} \right) \\
	&=& \mathbb{P} \left(   \norm{ {\bf x}' - {\bf x}}^{2} \geq \frac{\epsilon^{2} }{{L_{\mathcal{A}}}{^2}} \right)  \label{eq:before_markov}\\
	&\leq &	\frac{ {L_{\mathcal{A}}}^{2} }{\epsilon^{2}} \EX_{ {\bf x}' \sim  f^{-1}(\mathcal{A})} \norm{ {\bf x}' - {\bf x}}^{2} .
\end{eqnarray}
We applied Markov's inequality to Eq.\ \eqref{eq:before_markov}.
Since $ \mathbb{P} \left( \norm{ {\bf z}' - {\bf z} } \geq \epsilon \right) = 1 - \mathbb{P} ( \mathcal{B}^{\epsilon}_{{\bf z} } ) $, the statement therefore follows.
\end{proof}

\begin{lemma}
\label{lem:ent_to_mse}
\cHiro{The mean square error (MSE) for $d$-dimensional random vectors sampled from Gaussian distribution with mean vector ${\bf x}$ and isotropic covariance matrix $\Sigma_{{\bf x}} =  \sigma^{2} \mathbb{I}_{\text{d}}$,  $\mathcal{N}({\bf x},  \Sigma_{{\bf x}})$, can be expressed using entropy $H_{\Sigma_{{\bf x}}} = \EX_{ {\bf x}' \sim \mathcal{N}({\bf x},  \Sigma_{{\bf x}} )} [- \log p({\bf x}')]$ as follows:
	\begin{eqnarray}
		\EX_{ {\bf x}' \sim \mathcal{N}({\bf x},  \Sigma_{{\bf x}} )}  \norm{ {\bf x}' - {\bf x} }^2
		= \frac{d}{ 2 \pi e } \exp \left( \frac{2}{d} H_{\Sigma_{{\bf x}}} \right).
		\label{eq:ent_to_mse}
	\end{eqnarray}
}
\end{lemma}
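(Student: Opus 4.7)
The plan is a direct calculation exploiting the two standard closed-form identities for an isotropic Gaussian: one for its mean squared deviation and one for its differential entropy. Since $\Sigma_{{\bf x}} = \sigma^{2} \mathbb{I}_{d}$, I will express both quantities in terms of the single scalar $\sigma^{2}$, then eliminate $\sigma^{2}$ between them.

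First I would compute the left-hand side of \eqref{eq:ent_to_mse}. Writing ${\bf x}' - {\bf x} = (x_{1}' - x_{1}, \ldots, x_{d}' - x_{d})^{T}$ where each coordinate difference is an independent $\mathcal{N}(0, \sigma^{2})$ variable, linearity of expectation gives
\begin{equation*}
\EX_{{\bf x}' \sim \mathcal{N}({\bf x}, \Sigma_{{\bf x}})} \norm{{\bf x}' - {\bf x}}^{2} = \sum_{i=1}^{d} \EX (x_{i}' - x_{i})^{2} = d \sigma^{2}.
\end{equation*}
Next I would recall the standard formula for the differential entropy of a multivariate Gaussian, which in the isotropic case $\Sigma_{{\bf x}} = \sigma^{2} \mathbb{I}_{d}$ reduces to $H_{\Sigma_{{\bf x}}} = \tfrac{1}{2} \log\bigl((2\pi e)^{d} \sigma^{2d}\bigr) = \tfrac{d}{2} \log(2\pi e \sigma^{2})$. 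I would cite this either by quoting Cover--Thomas (already referenced in the paper) or by a short derivation substituting the Gaussian density into $-\EX \log p({\bf x}')$ and evaluating the resulting quadratic expectation.

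Solving the entropy identity for $\sigma^{2}$ yields
\begin{equation*}
\sigma^{2} = \frac{1}{2\pi e} \exp\!\left(\frac{2}{d} H_{\Sigma_{{\bf x}}}\right),
\end{equation*}
and substituting this back into $d\sigma^{2}$ gives exactly the right-hand side of \eqref{eq:ent_to_mse}, finishing the proof. There is no real obstacle here: both identities are textbook results, and the lemma is purely a bookkeeping step that rewrites the MSE $d\sigma^{2}$ in a form whose exponent matches the entropy. The only mild care needed is to make sure the isotropy assumption is used explicitly (otherwise the MSE would be $\Tr(\Sigma_{{\bf x}})$ and the entropy would involve $\log \det \Sigma_{{\bf x}}$, and the two would not combine into a single scalar relation of the stated form). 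This tidy form is precisely what allows the author to chain Lemma~\ref{lem:ent_to_mse} with Lemma~\ref{thm:var} and the entropy-coding bound $H_{\Sigma_{{\bf x}}} \lesssim L({\bf x})$ to recover the complexity term $\exp(2 C({\bf x}))$ appearing on the right-hand side of Hypothesis~\ref{thm:c}.
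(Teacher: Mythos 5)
Your proposal is correct and is essentially the same argument as the paper's: both compute the MSE and the differential entropy of the isotropic Gaussian in terms of the covariance and then eliminate the scale parameter between them. The only cosmetic difference is that the paper writes the MSE as $\Tr(\Sigma_{{\bf x}})$ and passes through $(\det \Sigma_{{\bf x}})^{1/d}$ via equality in Hadamard's inequality, whereas you specialize to $\sigma^{2}$ from the outset, which is slightly more direct but mathematically identical.
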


\begin{proof}
The MSE can be expressed as the trace of covariance matrix as follows:
\begin{eqnarray}
	&& \EX_{ {\bf x}' \sim \mathcal{N}({\bf x},  \Sigma_{{\bf x}} )}   \norm{ {\bf x}' - {\bf x} }^2 \\
	&=& \EX_{ {\bf x}' \sim \mathcal{N}({\bf x},  \Sigma_{{\bf x}} )} [ ( {\bf x}' - {\bf x} )^{T} ( {\bf x}' - {\bf x} ) ] \\
	&=& \EX_{ {\bf x}' \sim \mathcal{N}({\bf x},  \Sigma_{{\bf x}} )} \left [ \Tr{( ( {\bf x}' - {\bf x} ) ( {\bf x}' - {\bf x}  )^{T} )} \right] \\
	&=& \Tr{\left ( \EX_{ {\bf x}' \sim \mathcal{N}({\bf x},  \Sigma_{{\bf x}} )} [ ( {\bf x}' - {\bf x} ) ( {\bf x}' - {\bf x} )^{T} ] \right)}\\
	&=& \Tr{\left ( \EX_{ {\bf x}' \sim \mathcal{N}({\bf x},  \Sigma_{{\bf x}} )} [ ({\bf x}' - \EX {\bf x}') ({\bf x}' - \EX {\bf x}')^{T} ] \right)}\\
	&=& \Tr{(\Sigma_{{\bf x}} )}.
\end{eqnarray}
Since we assumed $\Sigma_{{\bf x}} = \sigma^{2} \mathbb{I}_{\text{d}}$,
equality holds in Hadamard's inequality for $\Sigma_{{\bf x}}$ as
\begin{eqnarray}
	\frac{1}{d} \Tr{(\Sigma_{{\bf x}} )}  = \left( \det  \Sigma_{{\bf x}}  \right)^{\frac{1}{d}}.
\end{eqnarray}
Also, it is well known that entropy of multivariate Gaussian is
\begin{eqnarray}
	H_{\Sigma_{{\bf x}}}
	&:=& \EX_{ {\bf x}' \sim \mathcal{N}({\bf x},  \Sigma_{{\bf x}} )} [- \log p({\bf x}')] \\
	&=& \frac{d}{2} ( 1 + \log (2 \pi )) +  \frac{1}{2} \log  \det  \Sigma_{{\bf x}}  \\
	&=&  \frac{d}{2} \log (2 \pi e    \left( \det  \Sigma_{{\bf x}}  \right)^{\frac{1}{d}} ).
	\label{eq:ent}
\end{eqnarray}
By replacing  $\left( \det  \Sigma_{{\bf x}}  \right)^{\frac{1}{d}}$ with $\frac{1}{d} \EX   \norm{ {\bf x}' - {\bf x}  }^2 $, we have
\begin{eqnarray}
	H_{\Sigma_{{\bf x}}} = \frac{d}{2} \log (  \frac{2 \pi e}{d} \EX   \norm{  {\bf x}' - {\bf x} }^2)
\end{eqnarray}
and thus we obtain the statement.
\end{proof}

\paragraph{Discretizing continuous data.}
We aim to establish a connection between Lemma 1 and 2 and the notion of image complexity.
In Definition~\ref{def:comp}, image complexity for an image ${\bf x} \in \{0, 1, \ldots, 255 \}^{d}$ is defined using the code length, denoted as $L ({\bf x})$, as $C({\bf x}) = \frac{1}{d}  L ({\bf x})$, and thus we aim to connect Lemma 1 and 2 to $L ({\bf x})$.
However, in Lemma 1 and 2, ${\bf x}$ is a continuous variable, and thus establishing a direct connection with the code length is unfeasible.
Thus, we discretize ${\bf x}$ as $\bar{{\bf x}}$ and code $\bar{{\bf x}}$ with a certain probability mass function (PMF), $P(\bar{{\bf x}})$.
This approach is detailed in \cite[section~3.1]{NEURIPS2019_f6e794a7} and \cite[section~8.3]{cover2012elements}.
We divide $\mathbb{R}^{d}$ into bins of volume $\delta_{x} := 2^{-kd}$ with precision $k$.
When the probability density function $p({\bf x})$ is sufficiently smooth, the PMF can be approximated as
\begin{eqnarray}
\label{eq:pdf_to_pmf}
P(\bar{{\bf x}}) \approx p(\bar{{\bf x}}) \delta_{x}.
\end{eqnarray}
For instance, with $k=8$, a continuous data ${\bf x}$ is discretized as $\bar{{\bf x}} \in \{0,1, \ldots, 255\}^d$.
We utilize this approach in the following.

\begin{observation}
\label{obs:mse_to_comp_by_shannon}
The MSE exhibits the following relationship with the image complexity of discretized version of the mean of ${\bf x}'$, represented as $C(\bar{{\bf x}})$, with the volume of bins used in discretization, $\delta_{x}$:
\begin{eqnarray}
	\EX_{ {\bf x}' \sim \mathcal{N}({\bf x},  \Sigma_{{\bf x}} )}  \norm{ {\bf x}' - {\bf x} }^2
	\leq \frac{d}{ 2 \pi e } \delta_{x}^{\frac{2}{d} } \exp \left( 2 C(\bar{{\bf x}})   \right).
	\label{eq:mse_to_comp}
\end{eqnarray}
\end{observation}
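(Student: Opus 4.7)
The plan is to chain Lemma~\ref{lem:ent_to_mse} with the discretization identity Eq.~\eqref{eq:pdf_to_pmf} and Shannon's source coding theorem, thereby converting the differential entropy $H_{\Sigma_{{\bf x}}}$ appearing in Eq.~\eqref{eq:ent_to_mse} into the (normalized) code length $C(\bar{{\bf x}})$. The structure is: start from the exact MSE-to-entropy identity, swap differential entropy for a discrete entropy on coded bins, bound that discrete entropy by an expected code length via Shannon, then collapse the expectation to the single quantity $L(\bar{{\bf x}})$ using the semantic-continuity assumption.

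Concretely, I would begin from Lemma~\ref{lem:ent_to_mse}, which states
\begin{equation*}
\EX_{{\bf x}' \sim \mathcal{N}({\bf x},\Sigma_{{\bf x}})} \norm{{\bf x}' - {\bf x}}^2 = \frac{d}{2\pi e} \exp\!\left(\tfrac{2}{d} H_{\Sigma_{{\bf x}}}\right).
\end{equation*}
Applying $P(\bar{{\bf x}}') \approx p(\bar{{\bf x}}')\delta_x$ from Eq.~\eqref{eq:pdf_to_pmf}, taking $-\log$ and then the expectation under $\mathcal{N}({\bf x}, \Sigma_{{\bf x}})$ gives $H_{\Sigma_{{\bf x}}} \approx H(\bar{{\bf x}}') + \log \delta_x$, where $H(\bar{{\bf x}}')$ is the Shannon entropy of the discretized variable. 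Shannon's source coding theorem, already quoted in the related-works section, then yields $H(\bar{{\bf x}}') \leq \EX[L(\bar{{\bf x}}')]$ for any uniquely decodable code, in particular the lossless compressor \texttt{comp}. To collapse this expectation to $L(\bar{{\bf x}}) = d\, C(\bar{{\bf x}})$, I would invoke Assumption~\ref{asm:semantic_continuous} together with the concentration of $\mathcal{N}({\bf x}, \Sigma_{{\bf x}})$ around ${\bf x}$: samples ${\bf x}'$ stay near ${\bf x}$ and share its semantics, so $L(\bar{{\bf x}}') \approx L(\bar{{\bf x}})$ and hence $\EX[L(\bar{{\bf x}}')] \approx d\, C(\bar{{\bf x}})$. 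Chaining these estimates gives $H_{\Sigma_{{\bf x}}} \lesssim d\, C(\bar{{\bf x}}) + \log \delta_x$; substituting into Lemma~\ref{lem:ent_to_mse} and using $\exp\!\left(\tfrac{2}{d}\log\delta_x\right) = \delta_x^{2/d}$ recovers Eq.~\eqref{eq:mse_to_comp}.

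The hard part will be the step $\EX[L(\bar{{\bf x}}')] \approx L(\bar{{\bf x}})$, which is not a tight analytical inequality but a semantic-continuity argument: it requires $\Sigma_{{\bf x}}$ to be small enough that the Gaussian perturbations keep the compressed lengths essentially constant, and that \texttt{comp} (JPEG2000, in the experiments) behaves continuously under such perturbations. This justifies phrasing the result as an \emph{observation} rather than a fully rigorous theorem. A secondary concern is the validity of Eq.~\eqref{eq:pdf_to_pmf}, which needs the bin volume $\delta_x$ to be fine relative to the variation of the Gaussian density — a mild requirement for standard 8-bit image discretization. Together, these two conditions are exactly what allows the differential-entropy identity of Lemma~\ref{lem:ent_to_mse} to be translated into the compression-based complexity measure of Definition~\ref{def:comp}.
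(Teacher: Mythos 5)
Your proposal follows the paper's own derivation essentially step for step: Lemma~\ref{lem:ent_to_mse} for the exact MSE--entropy identity, Eq.~\eqref{eq:pdf_to_pmf} to trade differential entropy for discrete entropy plus $\log\delta_x$, Shannon's source coding theorem to bound the discrete entropy by $\EX[L(\bar{{\bf x}}')] = d\,\EX[C(\bar{{\bf x}}')]$, and Assumption~\ref{asm:semantic_continuous} to collapse that expectation to $d\,C(\bar{{\bf x}})$. Your closing remarks on why the result is only an observation (the non-rigorous step $\EX[L(\bar{{\bf x}}')]\approx L(\bar{{\bf x}})$ and the fineness requirement on $\delta_x$) accurately identify the same approximations the paper relies on.
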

Letting $\bar{{\bf x}}'$ represent a discretized version of ${\bf x}' \in \mathbb{R}^{d}$,
we approximate the entropy $H_{\Sigma_{{\bf x}}}$ as follows:
\begin{eqnarray}
H_{\Sigma_{{\bf x}}}
&=&
\EX_{ {\bf x}' \sim \mathcal{N}({\bf x},  \Sigma_{{\bf x}} )} [- \log p({\bf x}')] \label{eq:ent_in_obs}\\
& \approx &
\EX_{ \bar{{\bf x}}' } [- \log p(\bar{{\bf x}}') ]  \label{eq:ent_w_descrete_x}\\
& \approx &
\EX_{ \bar{{\bf x}}' } [- \log \frac{ P(  \bar{{\bf x}}' )}{ \delta_{x}} ] \\
& = &
\EX_{ \bar{{\bf x}}' } [- \log P(  \bar{{\bf x}}' )] +  \log  \delta_{x}.
\end{eqnarray}
We applied Eq.\ \eqref{eq:pdf_to_pmf} to Eq.\ \eqref{eq:ent_w_descrete_x}.
Applying Shannon's source coding theorem~\cite{shannon1948mathematical} and the definition of image complexity (Definition~\ref{def:comp}), we have
\begin{eqnarray}
\EX_{ \bar{{\bf x}}' } [- \log P(  \bar{{\bf x}}' )]
&\leq&
\EX_{ \bar{{\bf x}}' } [L ( \bar{{\bf x}}' )]  \label{eq:l_in_obs}\\
&=&
\EX_{  \bar{{\bf x}}' } [d C ( \bar{{\bf x}}' )].
\end{eqnarray}
Then, we employ Assumption~\ref{asm:semantic_continuous}, which posits that latent space $\mathcal{Z}$ is semantically continuous.
When considering $\mathcal{A} \subset \mathcal{Z}$ as a very small region, generated samples ${\bf x}, {\bf x}' \in f^{-1}(\mathcal{A})$ share common semantics.
Consequently, we can approximate the image complexities of the discretized version of ${\bf x}$ and ${\bf x}'$ as equivalent, i.e., $C ( \bar{{\bf x}}) \approx C (\bar{{\bf x}}')$, and thus we have $\EX_{  \bar{{\bf x}}' } [C ( \bar{{\bf x}}' )] \approx C ( \bar{{\bf x}})$.
Putting them together, we have  $H_{\Sigma_{{\bf x}}}  \leq d C(\bar{{\bf x}}) + \log \delta_{x}$.
By substituting this inequality to Eq.\ \eqref{eq:ent_to_mse}, we have
$\EX_{ {\bf x}' \sim \mathcal{N}({\bf x},  \Sigma_{{\bf x}} )}  \norm{ {\bf x}' - {\bf x} }^2
\leq
\frac{d}{ 2 \pi e } \exp \left( \frac{2}{d} \left(d C(\bar{{\bf x}}) + \log \delta_{x} \right) \right)
$
and obtain Eq.\ \eqref{eq:mse_to_comp}.

\begin{observation}
\label{obs:theoretical_derived_conjecture}
If $f^{-1}(\mathcal{A})$ follows Gaussian $\mathcal{N}({\bf x},  \Sigma_{{\bf x}} )$, we have
\begin{eqnarray}
	\frac{\epsilon^{2}}{ {L_{\mathcal{A}} }^{2}} \left(  1 - \mathbb{P} ( \mathcal{B}^{\epsilon}_{{\bf z}	} ) \right)
	\leq
	\frac{d}{ 2 \pi e } \delta_{x}^{\frac{2}{d} } \exp \left( 2 C(\bar{{\bf x}})   \right).
\end{eqnarray}
\end{observation}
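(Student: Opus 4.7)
The plan is to chain Lemma~\ref{thm:var} with Observation~\ref{obs:mse_to_comp_by_shannon} under the Gaussian hypothesis on $f^{-1}(\mathcal{A})$. First I would invoke Lemma~\ref{thm:var}, which, purely from the local Lipschitz property of $f$ and Markov's inequality on $\norm{{\bf x}'-{\bf x}}^{2}$, yields the lower bound
\begin{equation*}
\frac{\epsilon^{2}}{{L_{\mathcal{A}}}^{2}}\bigl(1 - \mathbb{P}(\mathcal{B}^{\epsilon}_{{\bf z}})\bigr) \;\leq\; \EX_{{\bf x}' \sim f^{-1}(\mathcal{A})} \norm{{\bf x}' - {\bf x}}^{2}.
\end{equation*}

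Next, I would use the Gaussian hypothesis to identify the push-forward of the uniform measure on $\mathcal{A}$ through $f^{-1}$ with $\mathcal{N}({\bf x}, \Sigma_{{\bf x}})$, so that the expectation on the right-hand side coincides exactly with the MSE controlled by Observation~\ref{obs:mse_to_comp_by_shannon}. Applying that observation gives
\begin{equation*}
\EX_{{\bf x}' \sim \mathcal{N}({\bf x}, \Sigma_{{\bf x}})} \norm{{\bf x}' - {\bf x}}^{2} \;\leq\; \frac{d}{2\pi e}\,\delta_{x}^{\frac{2}{d}}\,\exp\bigl(2\,C(\bar{{\bf x}})\bigr),
\end{equation*}
and transitivity of the two inequalities delivers the claim in one line.

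The main obstacle is a bookkeeping subtlety about which point ${\bf x}$ plays the role of the mean in each step. Lemma~\ref{thm:var} is stated with ${\bf x} := f^{-1}(\EX {\bf z}')$, while the Gaussian hypothesis naturally identifies ${\bf x}$ with $\EX[{\bf x}']=\EX[f^{-1}({\bf z}')]$, and these two are equal only when $f^{-1}$ commutes with expectation on $\mathcal{A}$. I would resolve this by appealing to Assumption~\ref{asm:semantic_continuous} together with the smallness of $\mathcal{A}$: on a sufficiently small semantically continuous region, $f^{-1}$ is well-approximated by its linearization, so the two candidate means coincide up to an error that is absorbed into the same approximations already used to derive Observation~\ref{obs:mse_to_comp_by_shannon} (replacing densities by PMFs and treating $C(\bar{{\bf x}}')$ as constant across $f^{-1}(\mathcal{A})$). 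Once this identification is made explicit at the outset, the rest of the argument is simply the transitive composition of Lemma~\ref{thm:var} and Observation~\ref{obs:mse_to_comp_by_shannon}.
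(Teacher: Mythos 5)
Your proposal is correct and follows essentially the same route as the paper, which obtains the observation by directly chaining Lemma~\ref{thm:var} (Eq.~\eqref{eq:main_var}) with Observation~\ref{obs:mse_to_comp_by_shannon} (Eq.~\eqref{eq:mse_to_comp}) under the Gaussian assumption on $f^{-1}(\mathcal{A})$. The mean-identification subtlety you flag (whether ${\bf x}$ is $f^{-1}(\EX{\bf z}')$ or $\EX[f^{-1}({\bf z}')]$) is real --- the paper silently conflates the two, as seen in the caption of Fig.~\ref{fig:thm2} --- and your resolution via Assumption~\ref{asm:semantic_continuous} and the smallness of $\mathcal{A}$ is consistent with the approximations the paper already makes.
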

Combining Eq.\ \eqref{eq:mse_to_comp} to Eq.\ \eqref{eq:main_var} yields the above.

In the following section, we will verify whether the same relationship holds true, even when we remove the assumption of Gaussian distribution, through experimentation.

\subsection{Experimental Observation}
\label{sec:derive_by_exp}
In the previous section \ref{sec:proof}, Eq.\ \eqref{eq:mse_to_comp} holds under the constraint that samples ${\bf x}$ generated from $\mathcal{A} \subset \mathcal{Z}$ follow a Gaussian distribution.
In the following, we experimentally demonstrate that the inequality between the MSE and the image complexity holds true even when the constraint is relaxed.

The details of the experiments are presented in Appendix~\ref{sec:comp_vs_var}.
We generate samples from the latent space $\mathcal{Z}$ of a trained Glow and use them as ${\bf x}$' in Section C.1.
Specifically, for a source image ${\bf x}$, we define a hypersphere $\mathcal{B}^{\delta}_{f({\bf x} )}$ with radius $\delta$ centered at ${\bf z} = f({\bf x})$ in $\mathcal{Z}$.
We randomly sample ${\bf z}'$ from $\mathcal{B}^{\delta}_{f({\bf x} )}$ and generate ${\bf x}'$ from these ${\bf z}'$.
Note that ${\bf x}' $is not guaranteed to follow a Gaussian distribution.

\begin{observation}
For an image data ${\bf x}$, we experimentally observed the following relationship between the MSE and the image complexity $C({\bf x})$:
\begin{eqnarray}
	\EX_{ {\bf x}' \sim f^{-1} ( \mathcal{B}^{\delta}_{f({\bf x} )} ) }   \norm{ {\bf x}' -{\bf x}}^2
	\propto \exp \left(C( {\bf x} ) \right).
	\label{eq:comp_to_mse}
\end{eqnarray}
\end{observation}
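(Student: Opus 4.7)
The plan is to verify the relationship $\EX \norm{{\bf x}' - {\bf x}}^2 \propto \exp(C({\bf x}))$ empirically, since this statement deliberately relaxes the Gaussian assumption that made Observation~\ref{obs:mse_to_comp_by_shannon} (with the factor $\exp(2 C(\bar{{\bf x}}))$) analytically tractable. First, I would train a Glow model on an In-Distribution set such as CIFAR-10 to obtain an invertible $f$, and then assemble a pool of source images ${\bf x}$ whose complexity $C({\bf x})$ under the JPEG2000 coder of Definition~\ref{def:comp} spans as wide a range as possible --- combining natural images (CIFAR-10, SVHN, CelebA) with the pooling-noise and manipulated-CIFAR-10 images from Fig.~\ref{fig:ctl_noise}, so that the regression of $\log \widehat{M}({\bf x})$ against $C({\bf x})$ has enough spread to be meaningful.

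Second, for each source ${\bf x}$ I would encode ${\bf z} = f({\bf x})$ and draw $N$ samples ${\bf z}'_1, \dots, {\bf z}'_N$ uniformly from the hypersphere $\mathcal{B}^{\delta}_{{\bf z}}$ of radius $\delta$ around ${\bf z}$; this is done by drawing from $\mathcal{N}(0, \textbf{I}_{\text{d}})$ and rescaling to have norm $\delta$, then translating by ${\bf z}$. Decoding gives candidate neighbours ${\bf x}'_i = f^{-1}({\bf z}'_i)$, from which I form the empirical estimator $\widehat{M}({\bf x}) = N^{-1} \sum_i \norm{{\bf x}'_i - {\bf x}}^2$. Plotting $\log \widehat{M}({\bf x})$ against $C({\bf x})$ across the pool of sources, the claim in Eq.~\eqref{eq:comp_to_mse} predicts a linear trend with positive slope; the check succeeds if the best-fit line has high $R^2$ and the slope is stable across repeated samplings.

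Third, to isolate the effect being claimed rather than an artefact of one choice of $\delta$, I would repeat the protocol for several radii small enough that the semantic-continuity condition of Assumption~\ref{asm:semantic_continuous} is plausible for the decoded neighbours, and confirm that while the proportionality constant shifts with $\delta$, the linear log-MSE-versus-$C({\bf x})$ dependence persists. Binning sources by $C({\bf x})$ and averaging $\widehat{M}$ within each bin will also suppress the per-image variability in the local Lipschitz coefficient $L_{\mathcal{A}}$ which, by Lemma~\ref{thm:var}, modulates the upper bound on the MSE and differs from region to region of $\mathcal{Z}$.

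The main obstacle will be the choice of $\delta$ together with ensuring that the semantic-continuity premise genuinely holds on the sampled balls. Too small a radius and the finite precision of $f$ and $f^{-1}$ dominates $\widehat{M}({\bf x})$, washing out the complexity dependence; too large a radius and the decoded neighbours ${\bf x}'_i$ leave the common-semantics region $f^{-1}(\mathcal{A})$, invalidating the approximation $C({\bf x}') \approx C({\bf x})$ on which the derivation in Appendix~\ref{sec:proof} rests, and allowing $L_{\mathcal{A}}$ to drift appreciably across the ball. A secondary concern is that $C({\bf x})$ is quantised by the lossless coder, so I would smooth via binning before regressing and report the scatter around the fit rather than a single slope, to make the proportionality claim robust to coder discreteness.
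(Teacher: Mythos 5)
Your proposal matches the paper's own procedure in Appendix~\ref{sec:comp_vs_var}: encode source images with a trained Glow, sample $1024$ latent points from the hypersphere $\mathcal{B}^{\delta}_{f({\bf x})}$ for $\delta \in \{0.01, 0.1, 1.0\}$, decode, and confirm that $\log \EX \norm{{\bf x}' - {\bf x}}^2$ is linear in $C({\bf x})$ across the complexity-controlled sources. The only differences are cosmetic (the paper uses just the six pooling-noise images as sources rather than your larger pool, and samples from the ball rather than its surface), so this is essentially the same approach.
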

To measure $C({\bf x})$, we used the JPEG2000, which is an entropy-based lossless compression algorithm.
This observation was established through experiments conducted on CIFAR-10 and SVHN with different values of $\delta$ as $\delta = \{0.01, 0.1, 1.0\}$.
The details of the experiment are explained in Appendix \ref{sec:comp_vs_var}.


\paragraph{Derivation of Hypothesis~\ref{thm:c}.}
By combining the observation of Eq.\ \eqref{eq:comp_to_mse} with Eq.\ \eqref{eq:main_var}, we obtain
Eq.\ \eqref{eq:main_c}:
\begin{eqnarray}
\label{eq:main_ent}
\frac{\epsilon^{2}}{ L_{\mathcal{A}}^{2} }
\left(  1 - \mathbb{P} ( \mathcal{B}^{\epsilon}_{{\bf z}} ) \right)
& \leq & \EX_{{\bf x}' \sim f^{-1}(\mathcal{B}^{\delta}_{f({\bf x} )})}   \norm{ {\bf x}' - {\bf x}}^2 \\
& = & C_{1} \exp (C({\bf x})),
\end{eqnarray}
where $C_{1}$ is a constant.

\section{Experiments}
The experiments were conducted on a CentOS server equipped with a single NVIDIA V100 GPU.
Compression with JPEG2000 was performed using the Python PIL package.

\subsection{Datasets with controlled Image complexity}
\label{sec:pooling_noise_and_manipulated_C10}

\subsubsection{Pooling Noise Image}
\label{sec:pooling_noise}
We generate images with controlled complexity by following the method proposed in \cite{Serra2020Input}.
First, we sample RGB noise images with a size of $32 \times 32$ from a uniform random distribution.
Next, we apply average pooling to these images and resize them back to $32 \times 32$.
The complexity of the images is controlled by the pooling size, denoted as $\kappa$.
Images with $\kappa = 1$ (i.e., no pooling) are the most complex, while those with $\kappa = 32$ (i.e., constant-color image) are the least complex.
For convenience, we refer to these images as \emph{pooling noise image}.
Sample images are shown in Fig.\ \ref{fig:ctl_noise}.

\subsubsection{Manipulated CIFAR-10}
\label{sec:manipulated_C10}
To create the \emph{manipulated CIFAR-10} dataset, we randomly select 1024 images from the test set of CIFAR-10.
We manipulate the image complexity by adding a $4 \times 4$ patch of weak noise ($[-0.2, 0.2]$) to a randomly selected area of each image.
The complexity of the images increases with the number of patches, denoted as $n$.
We label the set of images with $n$ patches as Noise-4-$n$.
In addition, we decrease the image complexity by applying the pooling operation in the same manner as the pooling noise images.
We denote the set with a pooling size of $\kappa$ as Pool-$\kappa$, and the image complexity decreases from Pool-2 to Pool-16.
Sample images are shown in Fig.\ \ref{fig:ctl_noise}.


\subsection{Measuring MSE and Image Complexity}
\label{sec:comp_vs_var}
\begin{figure*}[h]
\centering
\includegraphics[width=0.4 \textwidth]{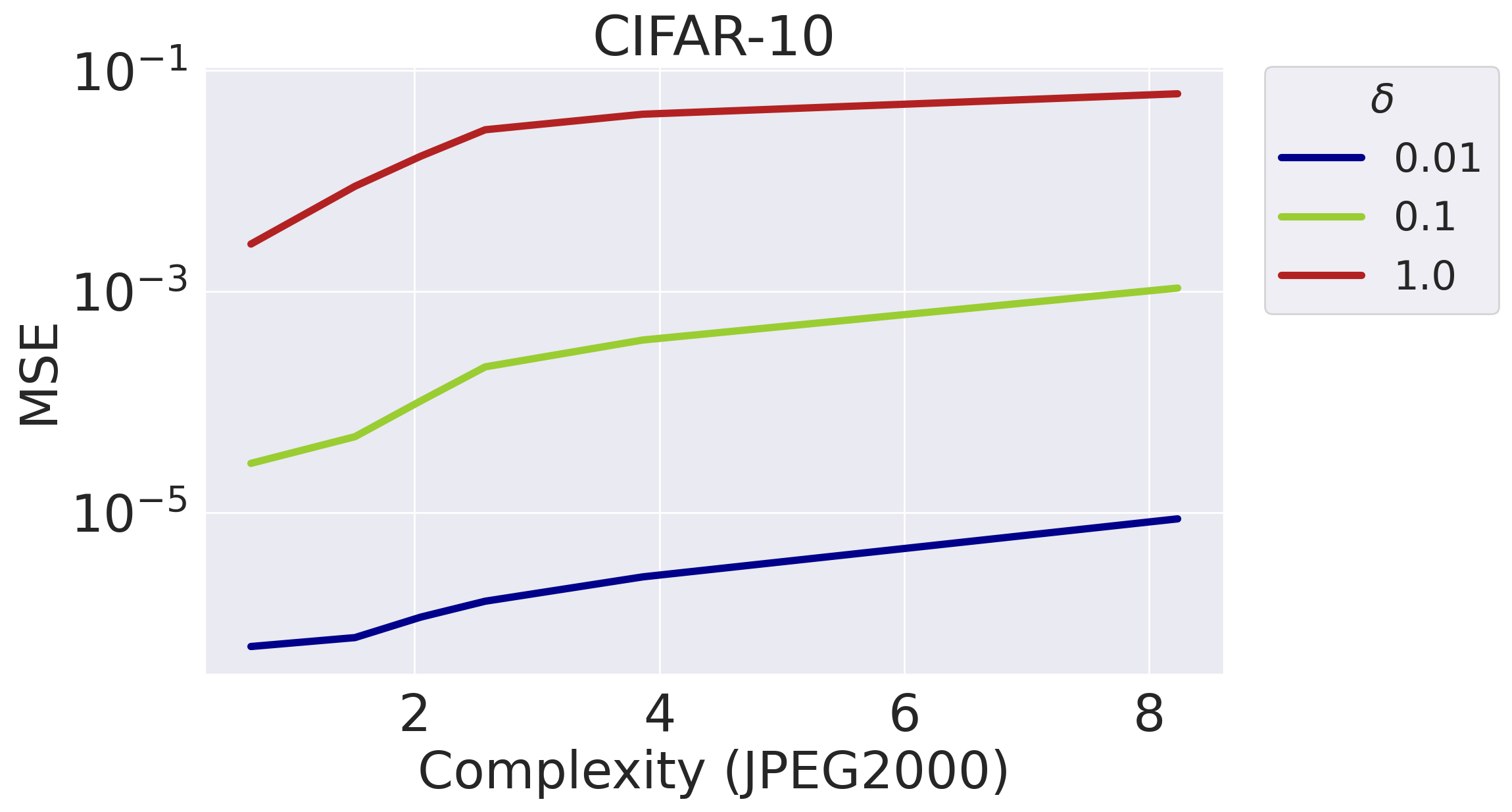}
\hspace{3px}
\includegraphics[width=0.4 \textwidth]{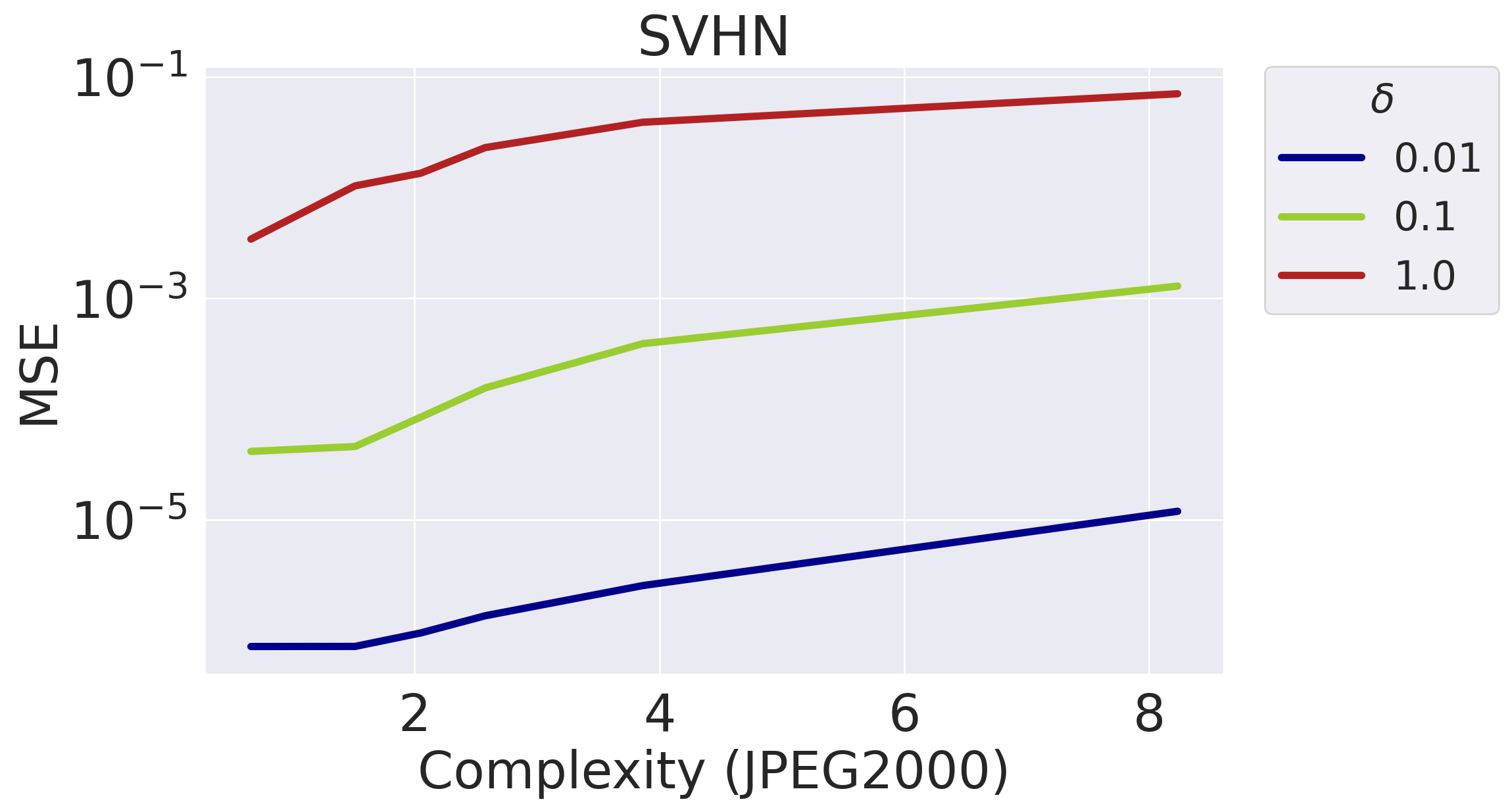}
\caption{Relationship between the Mean Squared Error (MSE), $\EX_{{\bf x}' \sim X_{\kappa}^{\delta}}   \norm{ {\bf x}' - {\bf x}_{\kappa}}^2$, and image complexity. The $x$-axis represents $C({\bf x}_{\kappa})$ computed with JPEG2000. The $y$-axis is displayed on a logarithmic scale, indicating that $C({\bf x}_{\kappa})$ is proportional to $\log \EX_{{\bf x}' \sim X_{\kappa}^{\delta}}   \norm{ {\bf x}' - {\bf x}_{\kappa}}^2$.}
\label{fig:comp_vs_var}
\end{figure*}
Using the pooling noise images with $\kappa = \{1,2,4,8,16,32\}$, we have six \emph{source images} ${\bf x}_{\kappa}$ with varying complexity.
We then obtain the corresponding latent vector ${\bf z}_{\kappa} = f({\bf x}_{\kappa})$ for each source image, where $f$ represents a trained Glow model.
Next, we define the set $B^{\delta}_{{\bf z}_{\kappa}} = \{{\bf z}' : \norm{{\bf z}' - {\bf z}_{\kappa}} \leq \delta \}$, which represents points within a hypersphere with a radius of $\delta$ centered at ${\bf z}_{\kappa}$.
We sample ${\bf z}'$ from $B^{\delta}_{{\bf z}_{\kappa} }$ and obtain a set of images $X_{\kappa}^{\delta} = \{ {\bf x}' = f^{-1}({\bf z}') : {\bf z}' \sim B^{\delta}_{{\bf z}_{\kappa} } \}$.
In our experiment, we varied $\delta$ as $\delta = \{0.01, 0.1,1.0\}$.
We randomly sampled $1024$ ${\bf z}'$ from $B^{\delta}_{{\bf z}_{\kappa} }$ and obtained $1024$ corresponding images in $X_{\kappa}^{\delta}$.
For two Glow models trained on CIFAR-10 and SVHN datasets,
we measured $\EX_{{\bf x}' \sim X_{\kappa}^{\delta}}   \norm{ {\bf x}' - {\bf x}_{\kappa}}^2$ for each source image ${\bf x}_{\kappa}$.
We then represented these measurements on the $y$-axis in Fig.\ \ref{fig:comp_vs_var}.
The $x$-axis represents the image complexity $C({\bf x}_{\kappa})$ computed using JPEG2000.
$C({\bf x}_{\kappa})$ varies with different values of ${\kappa}$.
From the plotted figures, we observed that, irrespective of the value of $\delta$, the relationship between $C({\bf x})$ and $ \log \EX_{{\bf x}' \sim X_{\kappa}^{\delta}}   \norm{ {\bf x}' - {\bf x}_{\kappa}}^2$
approximately follows a linear trend.
This suggests the following connection:
\begin{eqnarray}
\EX_{{\bf x}' \sim f^{-1}(\mathcal{B}^{\delta}_{f({\bf x} )})} \norm{ {\bf x}' - {\bf x}}^2
\propto \exp \left(C( {\bf x} ) \right).
\end{eqnarray}

\subsection{Models}
\subsubsection{Model Specification}
\label{sec:dgms}
We implemented the DGMs using publicly available code.
The architecture of Glow and CV-Glow\footnote{github.com/kmkolasinski/deep-learning-notes/tree/master/seminars/2018-10-Normalizing-Flows-NICE-RealNVP-GLOW} mainly depends on two parameters: the depth of flow, denoted as $K$, and the number of levels, denoted as $L$.
The flow consists of affine coupling and $1 \times 1$ convolutions, which are performed $K$ times, followed by the factoring-out operation \cite{dinh2016density}.
This sequence is repeated $L$ times.
For CIFAR-10 and SVHN, we set $K=32$ and $L=3$.
For MNIST and FMNIST, we set $K=48$ and $L=2$.
For ImageNet, we set $K=64$ and $L=5$.
We applied data augmentation techniques of random $2 \times 2$ translations and horizontal flipping on CIFAR-10.
Random cropping at a size of $224 \times 224$ and horizontal flipping were applied to ImageNet.
We trained the models for $99100$ iterations for ImageNet and $45100$ iterations for other datasets using the Adam optimizer
\cite{kingma2014adam}
with $\beta_{1} = 0.9$ and $\beta_{2} = 0.999$.
The batch size is $8$ for ImageNet and $256$ for other datasets.
The learning rate was set to $1e-4$ for the first $5100$ iterations and 5e-5 for the remaining iterations for all datasets.
For ResFlow,\footnote{github.com/rtqichen/residual-flows} we used the pre-trained model on CIFAR-10 and MNIST and trained the models on SVHN and FMNIST with the same setting as CIFAR-10 and MNIST, respectively.
For iResNet,\footnote{github.com/jhjacobsen/invertible-resnet} we used the pre-trained model on CIFAR-10 and trained the models on SVHN, MNIST, and FMNIST with the same setting as for CIFAR-10.
For IDF,\footnote{github.com/jornpeters/integer\_discrete\_flows} we trained the models on all datasets using the settings provided by the original author for the CIFAR-10 model.
For PixelCNN++,\footnote{github.com/pclucas14/pixel-cnn-pp} we utilized the pre-trained model on CIFAR-10 and trained the models on SVHN with the same setting as CIFAR-10.

\subsection{Sensitivity to Image Complexity}
\label{sec:arch_and_sensitivity}
The strength of the positive correlation between $\log p({\bf z})$
and the volume $\log |\text{det} \ J_{f}({\bf x})|$ discussed in Section \ref{sec:exp_for_hypo}, Experiment 3, depends on the architecture of the NF, specifically its flexibility measured as the Lipschitz constant, $L$.
This can be observed from the relationship $|\text{det} \ J_{f}({\bf x})| < L^{d}$ (Remark~\ref{rem:volume}).
For  CV-Glow and IDF, where the volume term is fixed, it was observed that the increase in $\log p({\bf z})$ was equivalent to the increase in $\log p({\bf x})$.
On the other hand, for iResFlow and ResFlow, which restrict the Lipschitz constant ($L \leq 1$),
the impact of variation in $\log p({\bf x})$ on the volume was reduced.
As a result, more of the variation in $\log p({\bf x})$ was directed towards the variation in $\log p({\bf z})$.
In Glow, there is no such restriction on the Lipschitz constant.
Therefore, the sensitivity of $\log p({\bf z})$ to variations in image complexity is the lowest for Glow, followed by iResFlow and ResFlow, and the highest for  CV-Glow and IDF.
The same conclusion can be drawn from Eq.\ \eqref{eq:main_c}, where
the sensitivity of the density around ${\bf z}$, $\mathbb{P}(\mathcal{B}^{\epsilon}_{{\bf z} } )$,
to variations in $C({\bf x})$ increases when the flexibility is restricted (i.e., $L$ is small).
Based on this property, we select the NF architecture for OOD detection, as discussed in Section~\ref{sec:gmm} and Appendix~\ref{sec:gmm_detail}.

\subsection{Preparation for Experiments 4 and 5}
\label{sec:prep_exp_3_4}

\subsubsection{Selection of NF architecture for OOD detection}
\label{sec:arch_for_oodd}
For Experiment 4, we employed three Normalizing Flow (NF) models: Glow, iResNet, and ResFlow.
In this study, OOD detection is treated as the task of distinguishing an in-distribution dataset from other datasets.
The architecture of the NF model plays a pivotal role in effectively capturing large-scale differences between datasets, while remaining less sensitive to small-scale variations among individual samples.
To achieve robustness in this regard, it is desirable for NF models to be somewhat insensitive to subtle differences between individual samples.
The flexibility of $J_{f}({\bf x})$ in NFs, as discussed in Appendix~\ref{sec:arch_and_sensitivity}, determines the sensitivity of  $\log p({\bf z})$ to sample-specific differences in image complexity.
From this standpoint, the discriminative capacity using $\log p({\bf z})$ for dataset differentiation can be ranked.
Glow exhibits the lowest sensitivity, followed by iResNet and ResFlow, while CV-Glow and IDF show the highest sensitivity.
We validate these characteristics for Glow, ResFlow, and IDF in Fig.\ \ref{fig:comp_vs_z__ood}, with supplementary results for the other two NF models presented in Appendix \ref{sec:extra_exp_3}.
The plots for IDF exhibit a high sensitivity to $\log p({\bf z})$, leading to significant overlap between the In-Dist and OOD datasets.
Similar overlapping plots were also observed for CV-Glow.
In contrast, Glow, iResNet, and ResFlow exhibit less overlap and provide enhanced discrimination.
Consequently, we opted for Glow, iResNet, and ResFlow.

\subsubsection{OOD detection with GMM}
\label{sec:gmm_detail}
We present a complexity-aware OOD detection method using GMM.
In the problem set-up of OOD detection, In-Dist data are assumed to be available,
and we train a GMM to fit the In-Dist data at hand.
Here is the operational procedure of our proposed method:
\begin{enumerate}
\item Compute the image complexity $C_{{\bf x}_{\text{train}}} = C({\bf x}_{\text{train}})$ and the log likelihood in latent space $L_{{\bf z}_{\text{train}}} = \log p({\bf z}_{\text{train}})$ for the training images ${\bf x}_{\text{train}}$ from the In-Dist dataset (i.e., CIFAR-10, SVHN, MNIST, FMNIST, and ImageNet).
${\bf z}_{\text{train}} = f({\bf x}_{\text{train}})$ is obtained by applying the NF model $f$ to ${\bf x}_{\text{train}}$.
\item  Train a GMM, denoted as  $\mathcal{G} = \{\pi_{k}, \mu_{k}, \Sigma_{k}\}$, utilizing the computed $C_{{\bf x}_{\text{train}}}$ and $L_{{\bf z}_{\text{train}}}$.
In this representation, $\pi_{k}, \mu_{k}$, and $\Sigma_{k}$ correspond to the mixing coefficient, mean, and covariance matrix for the $k$-th component of the GMM. The parameter $K$ signifies the number of mixture components.
\item For a given test image ${\bf x}_{\text{test}}$, compute its  image complexity $C_{{\bf x}_{\text{test}}} = C({\bf x}_{\text{test}})$ and the log likelihood in latent space $L_{{\bf z}_{\text{test}}} = \log p({\bf z}_{\text{test}})$, where ${\bf z}_{\text{test}} = f({\bf x}_{\text{test}})$.
\item Compute the log likelihood for the trained GMM $\mathcal{G}$ as $S_{\text{GMM}}({\bf x}_{\text{test}}) = \log \Sigma_{k=1}^{K} \pi_{k} \mathcal{N} \left ((C_{{\bf x}_{\text{test}}}, L_{{\bf z}_{\text{test}}}) |  \mu_{k}, \Sigma_{k} \right )$.
$S_{\text{GMM}}({\bf x}_{\text{test}})$ represents the In-Dist score of ${\bf x}_{\text{test}}$,
where a larger value indicates that ${\bf x}_{\text{test}}$ is more likely to belong to the In-Dist.
\end{enumerate}

The dimensionality of GMM is as small as two, so there is no need to concern about the discrepancy between the high-density region and the typical set region, which occurs in high-dimensional spaces (Appendix~\ref{sec:typical_set}).

In the experiments, we had $1024$ training samples ${\bf x}_{\text{train}}$ and  $2048$ test samples ${\bf x}_{\text{test}}$, consisting of $1024$ samples from the OOD dataset and $1024$ samples from the test portion of the In-Dist dataset.
We used the scikit-learn~\cite{scikit-learn} with its default hyper-parameter values, including the number of Gaussian components $K=3$.
We also tried different values for $K$, such as $K=2$ and $K=4$, but the results were similar to using $K=3$.

\subsubsection{Competitors}
\label{sec:comparisons_detail}
In addition to TTL and CALT, we compare the performance of the proposed methods to the existing NF-based methods on CIFAR-10 and SVHN as follows:

\setlength{\leftmargini}{0.5cm}
\begin{enumerate}

\item The Watanabe-Akaike Information Criterion (WAIC) \cite{choi2019generative} measures $\mathbb{E}[\log p({\bf x})] - \text{Var}[\log p({\bf x})]$ with an ensemble of five Glow models.
The five models were trained separately with different initial parameters.
CV-Glow and a background model used in the LRB method are included as components of the ensemble.

\item The likelihood-ratio to background model (LRB) \cite{NEURIPS2019_1e795968} measures the likelihood ratio $\log p({\bf x}) - \log p_0({\bf x})$ where $p_0({\bf x})$  is a background model trained using the training data with additional random noise sampled from the Bernoulli distribution with a probability 0.15.
LRB utilizes two Glow models for the likelihood computation.

\item The likelihood-ratio to general model (LRG) \cite{NEURIPS2020_f106b7f9} measures the likelihood ratio $\log p({\bf x}) - \log p_{g}({\bf x})$  where $p_{g}({\bf x})$  is a general model trained on broader and more general data than In-Dist data.
Specifically, TinyImageNet~\cite{ILSVRC15}  is used for training the general model.
\end{enumerate}

\section{More Experimental Results}

\subsection{Additional Results of Experiment 1}
\label{sec:extra_exp_1}
Here are the additional results of Experiment~1 with other NF models:
Fig.\ \ref{fig:extra_comp_vs_z_pooling} shows the results on the pooling noise images.
Fig.\  \ref{fig:extra_comp_vs_z_manipulated} shows the results on the manipulated CIFAR-10.
It can be observed that CV-Glow assigns an extremely small $\log p({\bf z})$ to images with high complexity, such as Noise-2, Noise-1 in Fig.\ \ref{fig:extra_comp_vs_z_pooling}, as well as  Noise-4-8, and Noise-4-16 in Fig.\  \ref{fig:extra_comp_vs_z_manipulated},
This leads to divergent values of $\norm{{\bf z}}$.
To ensure proper visualization of the figures, the $y$-axis scale, (upper limit of $\norm{{\bf z}}$) was capped at $100$ in the CV-Glow plots.

\begin{figure}[h]
\centering
\includegraphics[height=3.4cm, trim={0 0 0 1.1cm},clip]{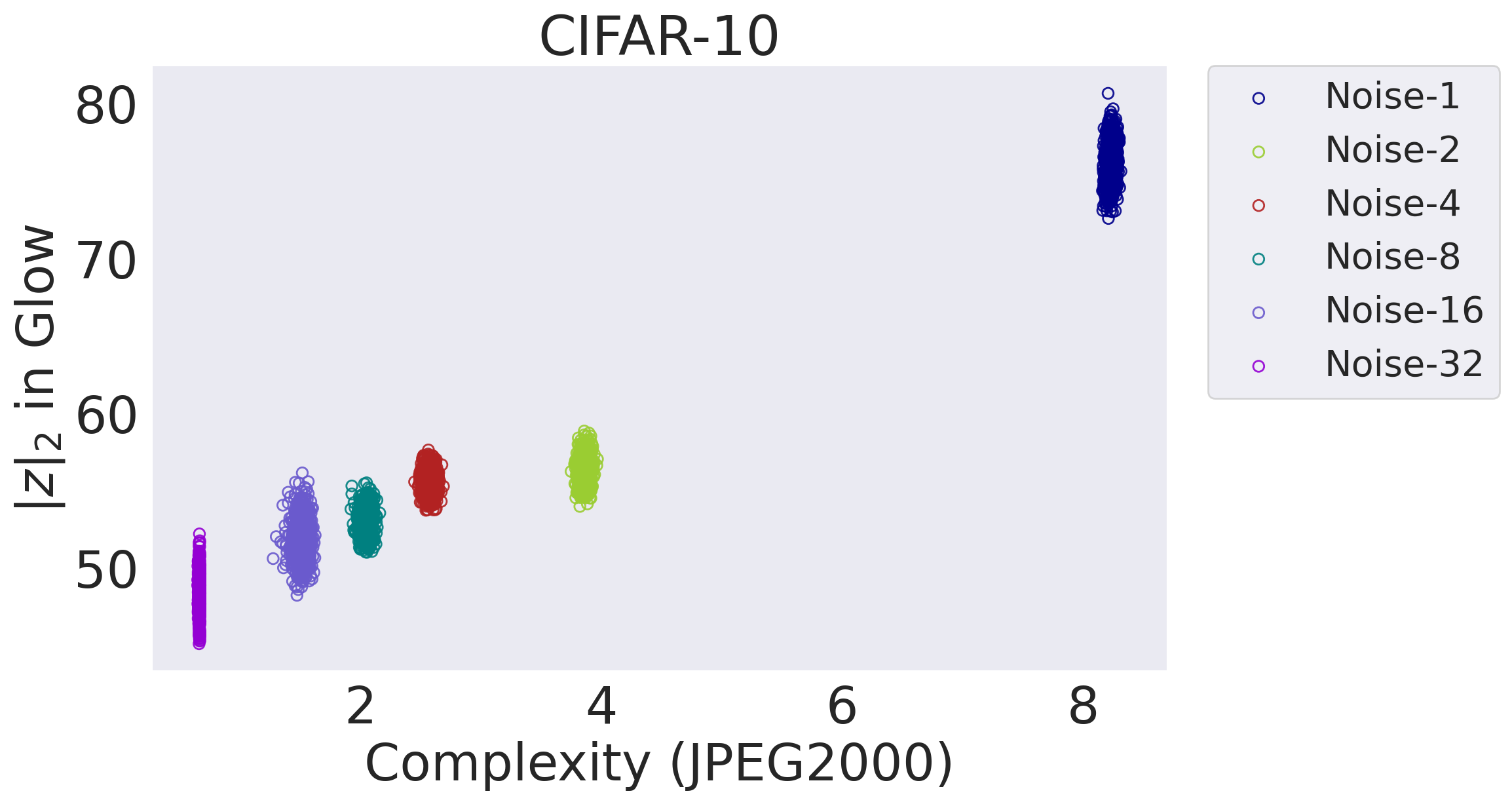}
\includegraphics[height=3.4cm, trim={0 0 0 1.05cm},clip]{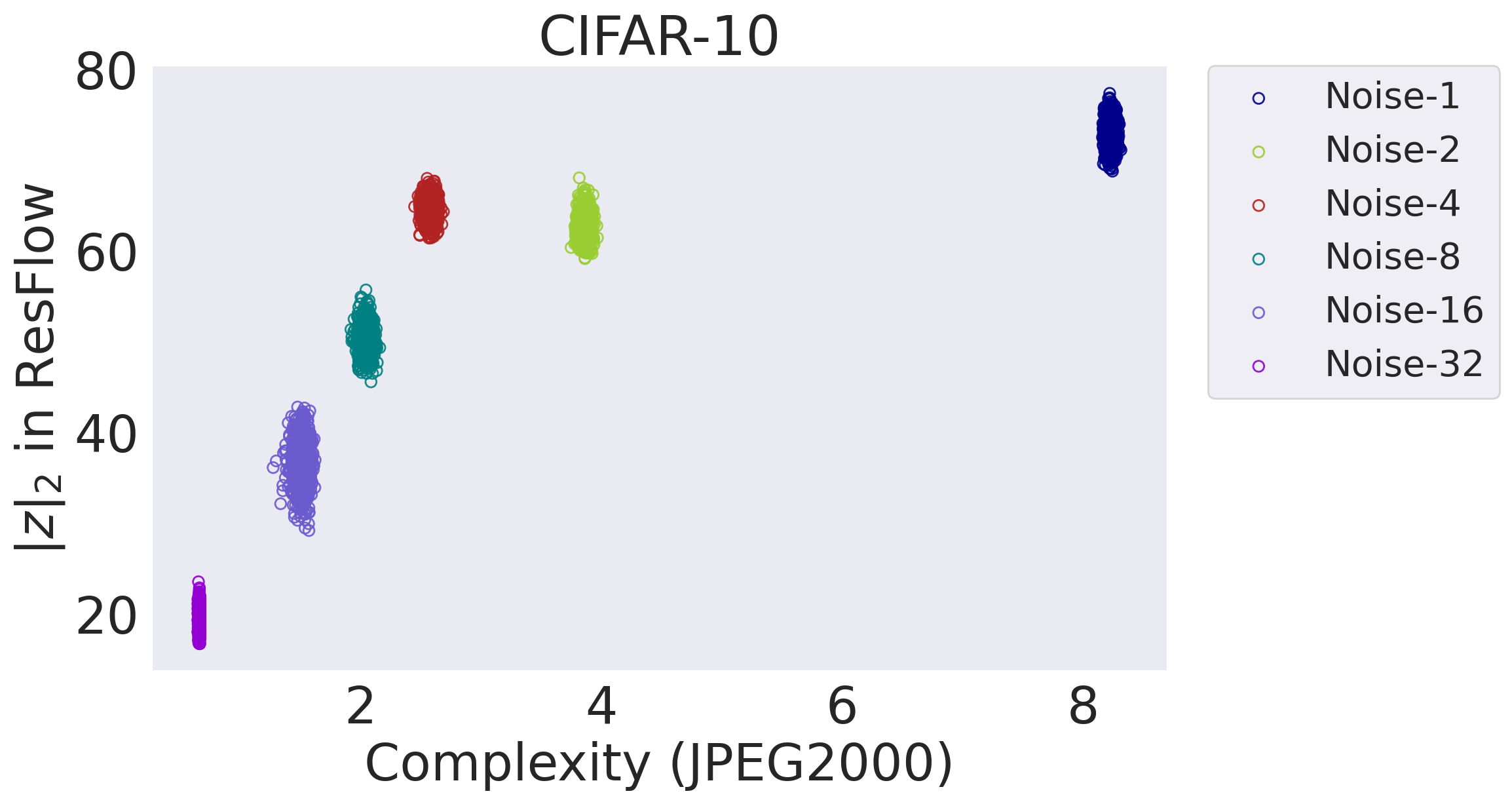}
\includegraphics[height=3.4cm, trim={0 0 0 1.1cm},clip]{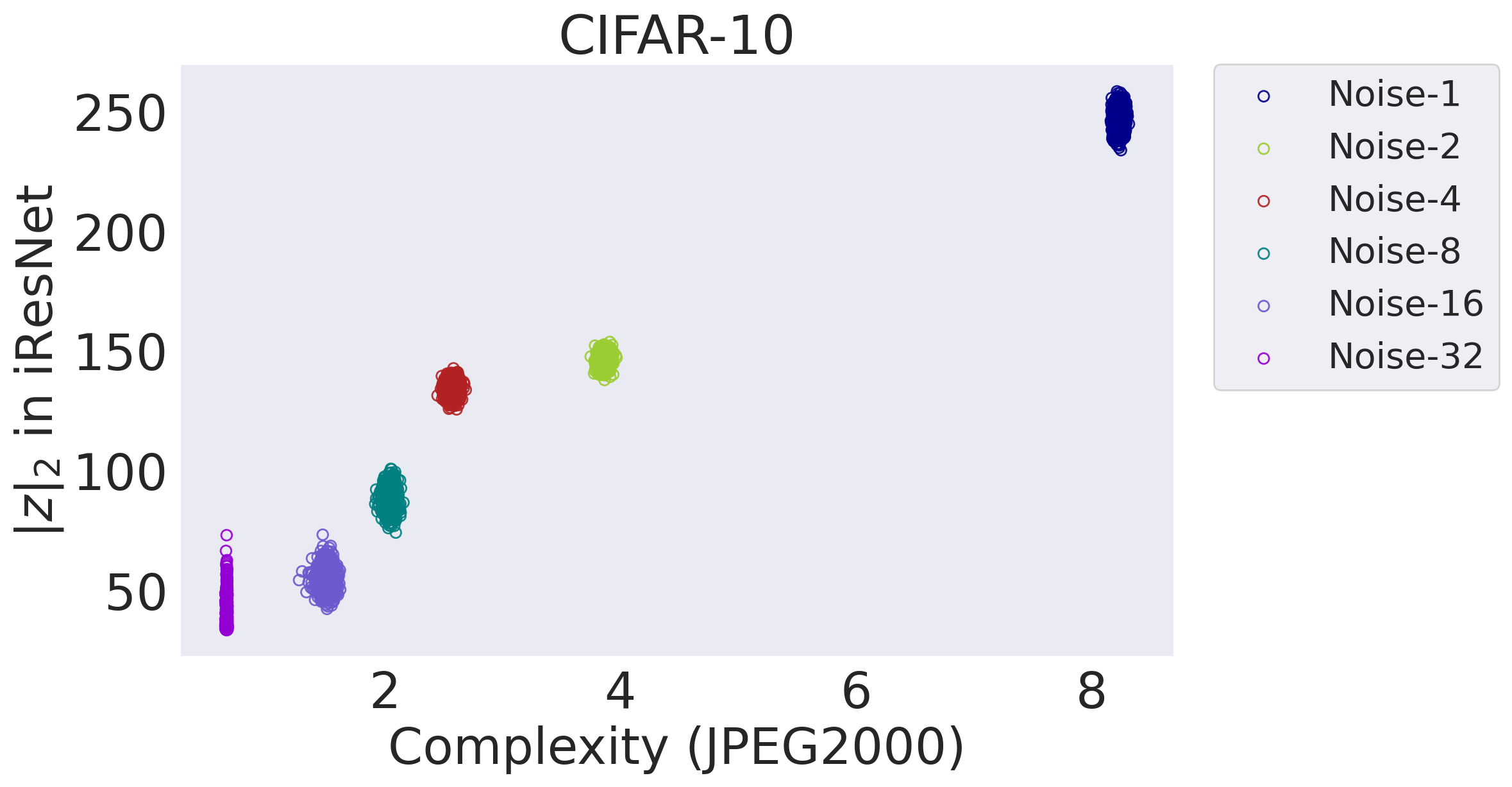}

\includegraphics[height=3.4cm, trim={0 0 0 1.1cm},clip]{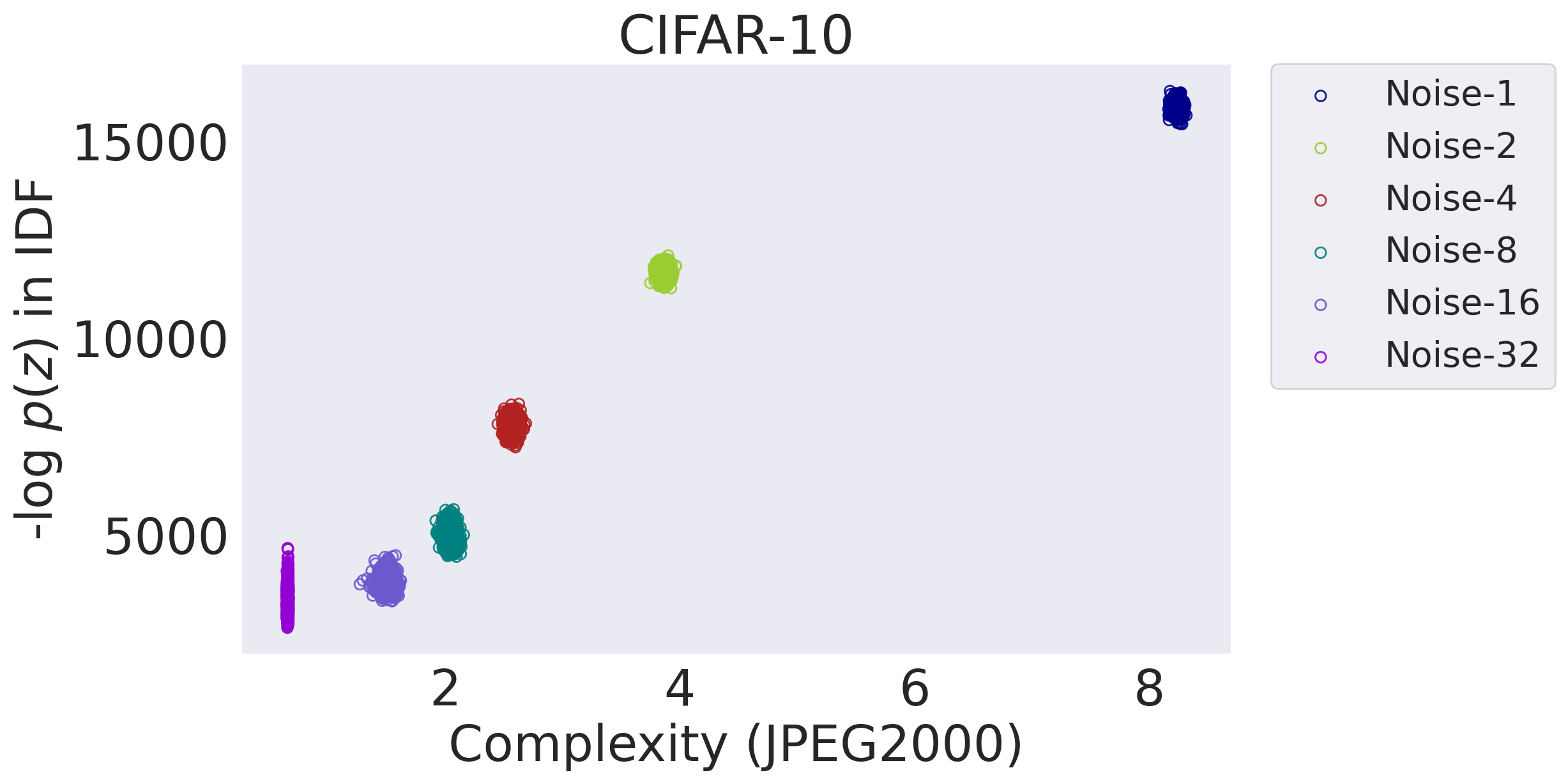}
\includegraphics[height=3.4cm, trim={0 0 0 1.05cm},clip]{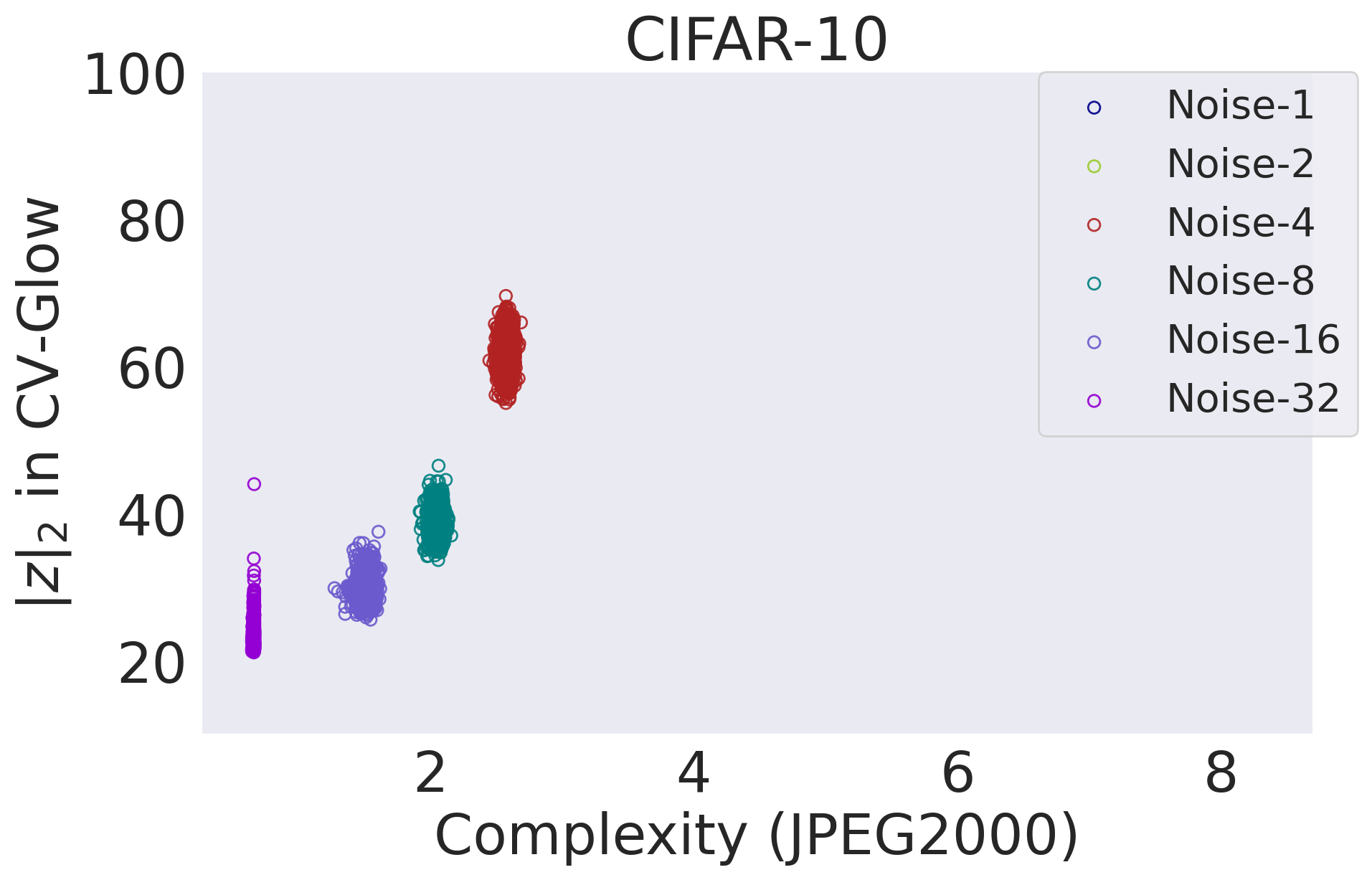}

\caption{Complexity vs.\ $\norm{{\bf z}} ( \propto - \sqrt{\log p({\bf z})})$ on the pooling noise images. In-Dist is CIFAR-10.
	The NF models are Glow, ResFlow, and iResNet for the top three and IDF and CV-Glow for the bottom two.
	In CV-Glow, $\norm{{\bf z}}$ for the samples of Noise -1 and Noise-2 became extremely large, so we omitted plotting them.}
\label{fig:extra_comp_vs_z_pooling}
\end{figure}

\begin{figure}[h]
\centering
\includegraphics[height=3.4cm, trim={0 0 0 1.15cm},clip]{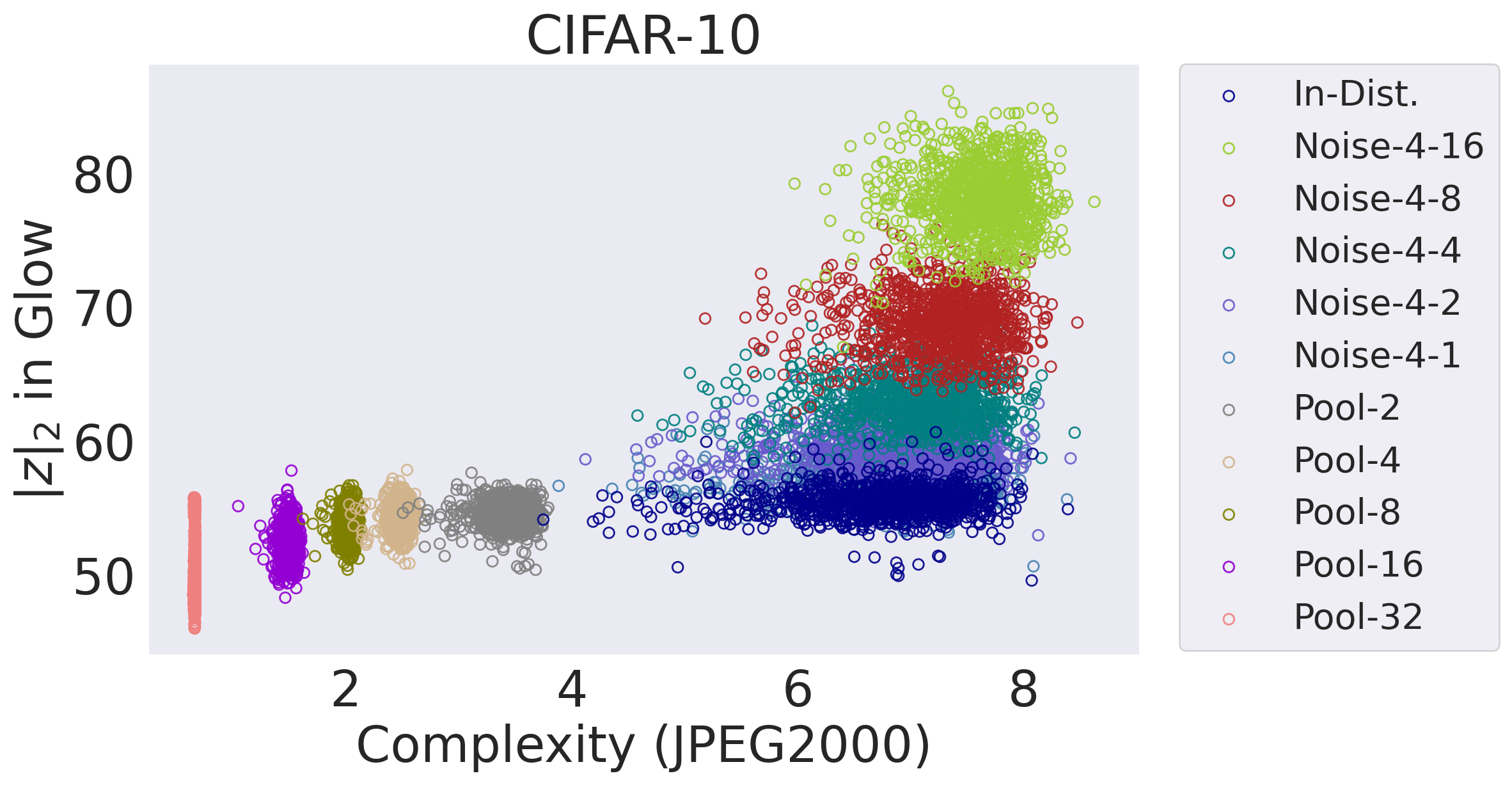}
\includegraphics[height=3.4cm, trim={0 0 0 1.15cm},clip]{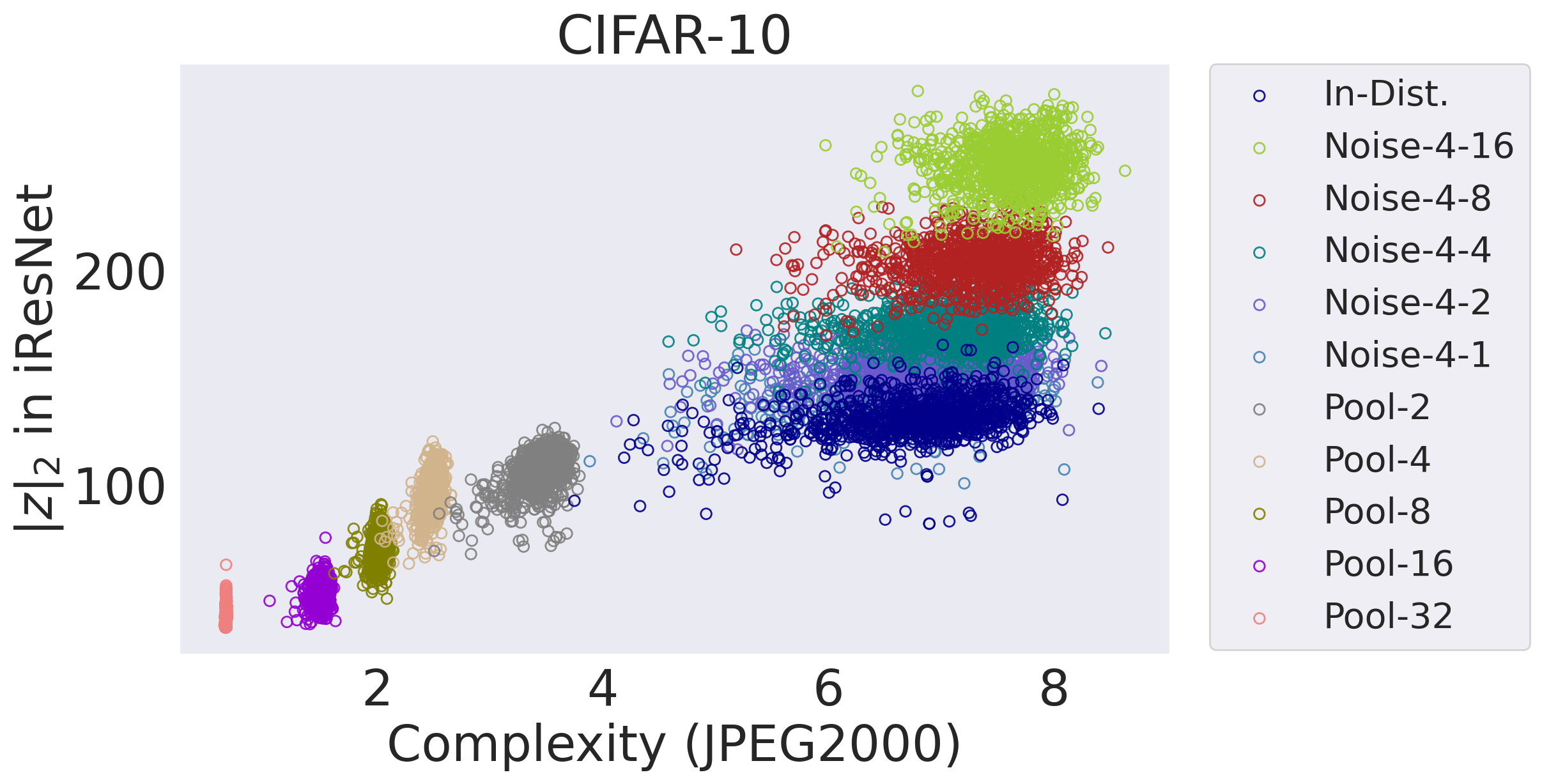}
\vspace{10px}

\includegraphics[height=3.4cm, trim={0 0 0 1.15cm},clip]{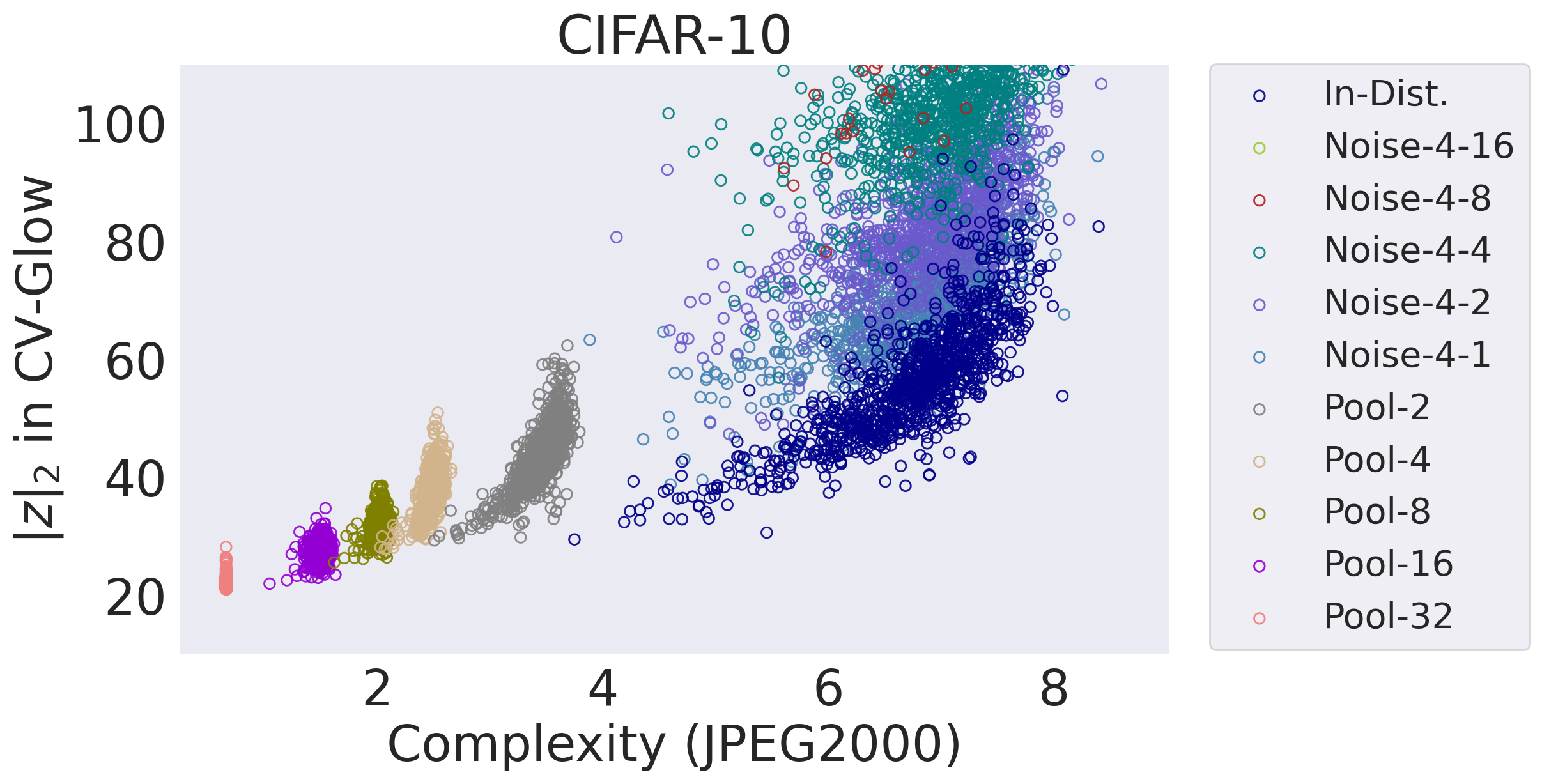}
\hspace{10px}
\includegraphics[height=3.4cm, trim={0 0 0 1.15cm},clip]{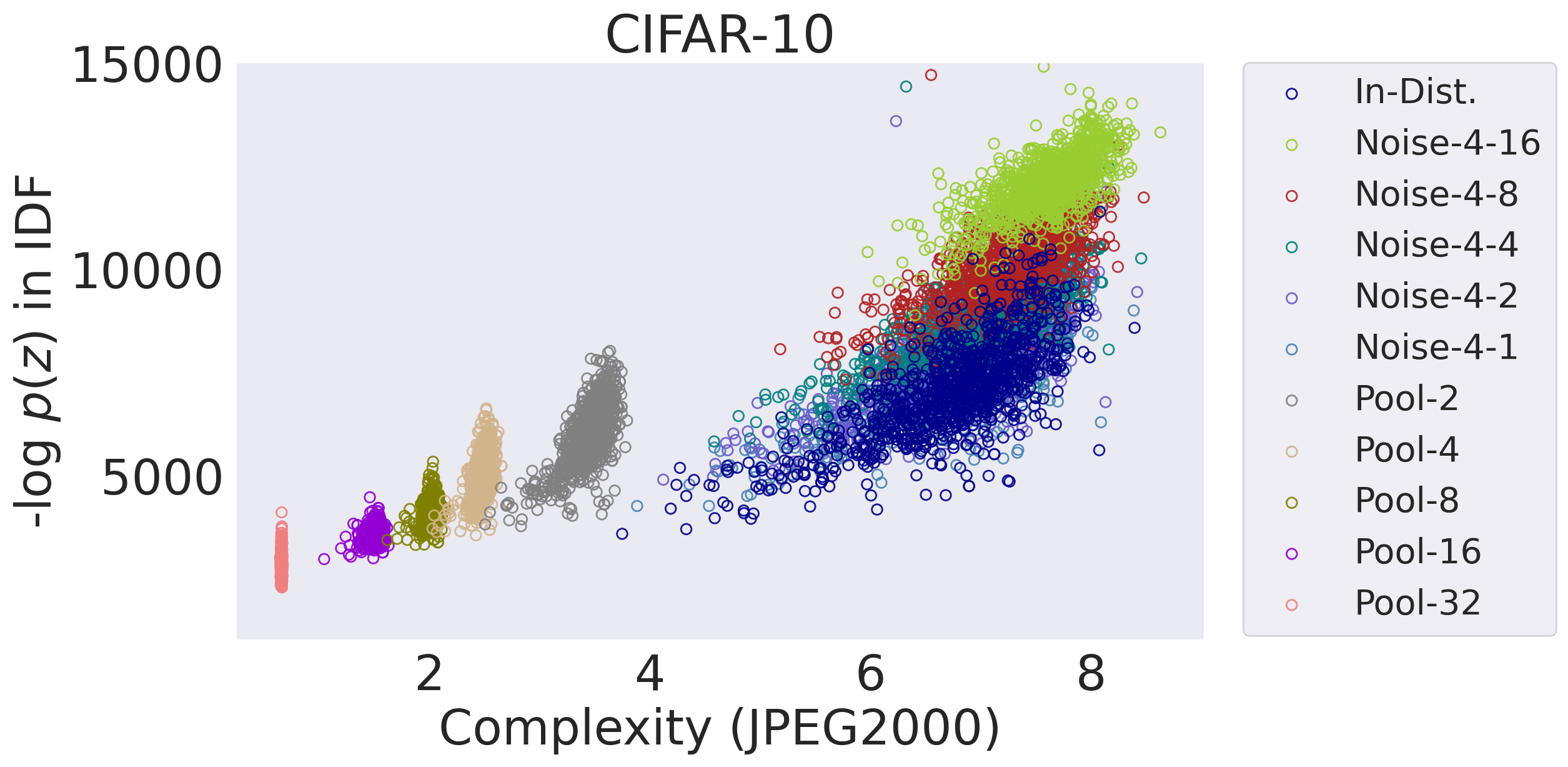}

\caption{Complexity vs.\ $\norm{{\bf z}} ( \propto - \sqrt{\log p({\bf z})})$ on the manipulated CIFAR-10. In-Dist is CIFAR-10. The NF models are Glow, iResNet, CV-Glow, and IDF, from left to right and top to bottom.}
\label{fig:extra_comp_vs_z_manipulated}
\end{figure}

\subsection{Additional Results of Experiment 2 and 3}
\label{sec:extra_exp_2}
We present the results of Experiment~2 and 3 with other NF models in Fig.\ \ref{fig:extra_volume_vs_z}.

\begin{figure}[h]
\centering

\includegraphics[height=3.4cm, trim={0 0 0 1.15cm},clip]{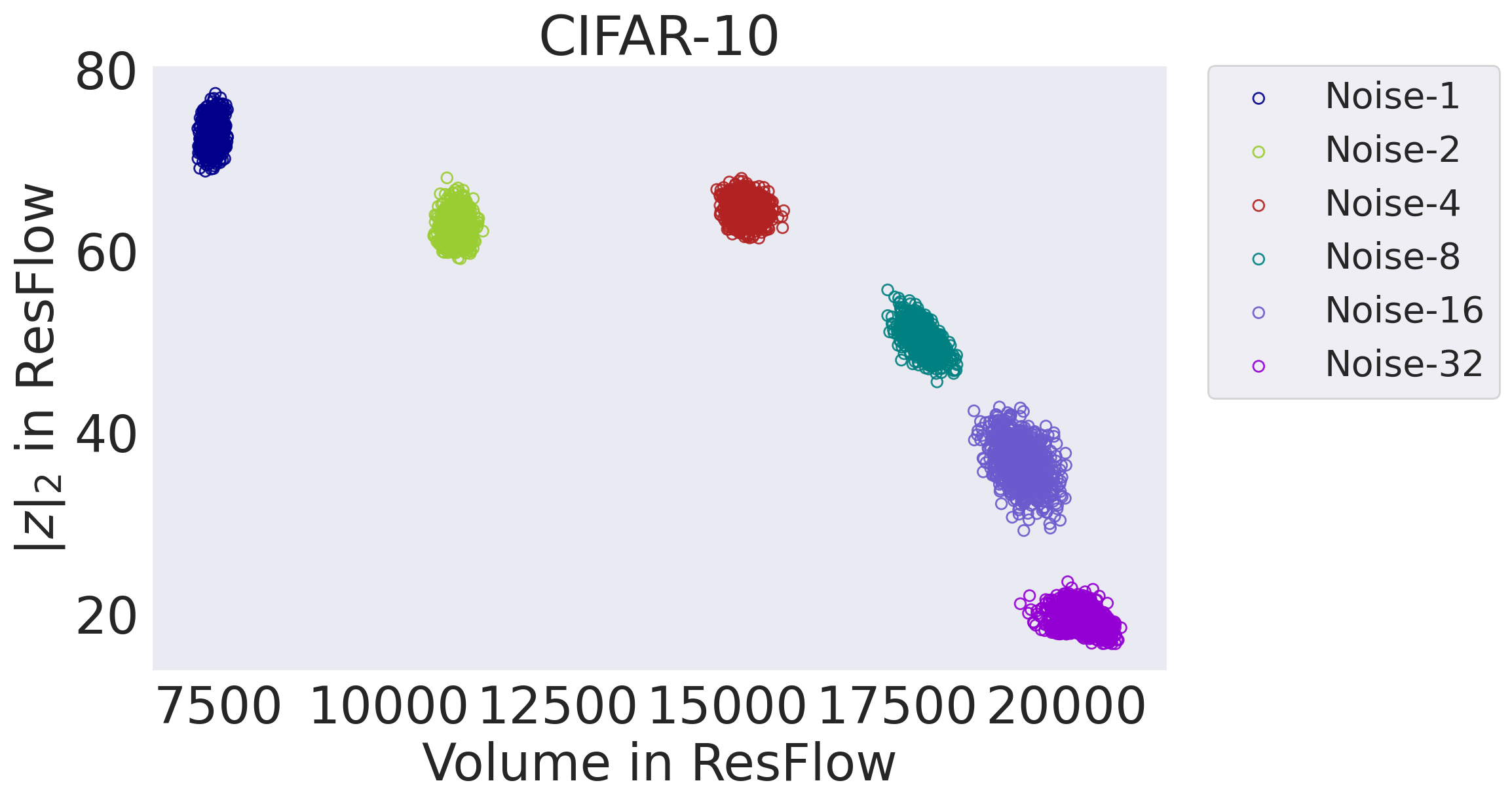}
\hspace{10px} 
\includegraphics[height=3.4cm, trim={0 0 0 1.15cm},clip]{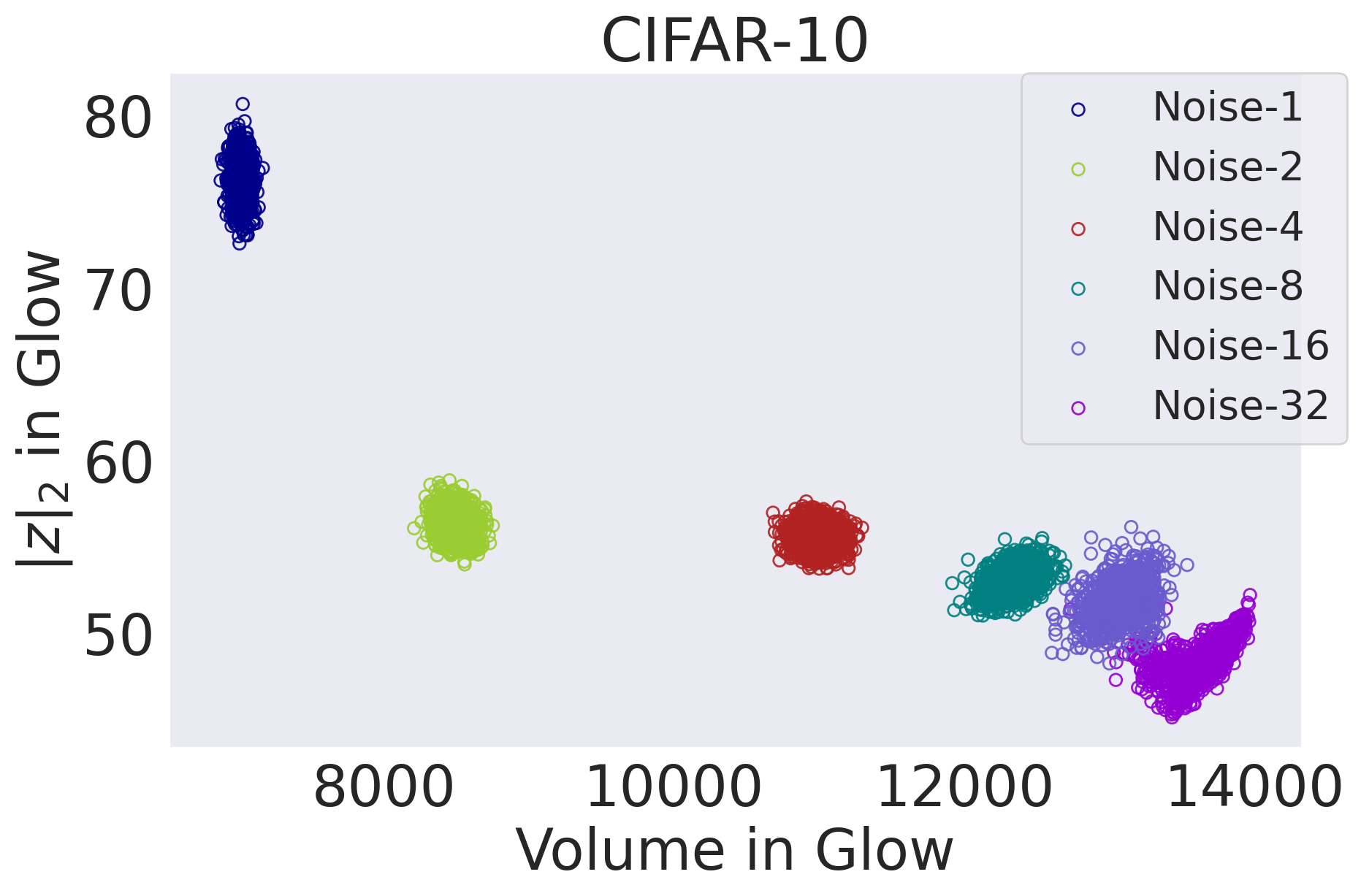}
\includegraphics[height=3.4cm, trim={0 0 0 1.15cm},clip]{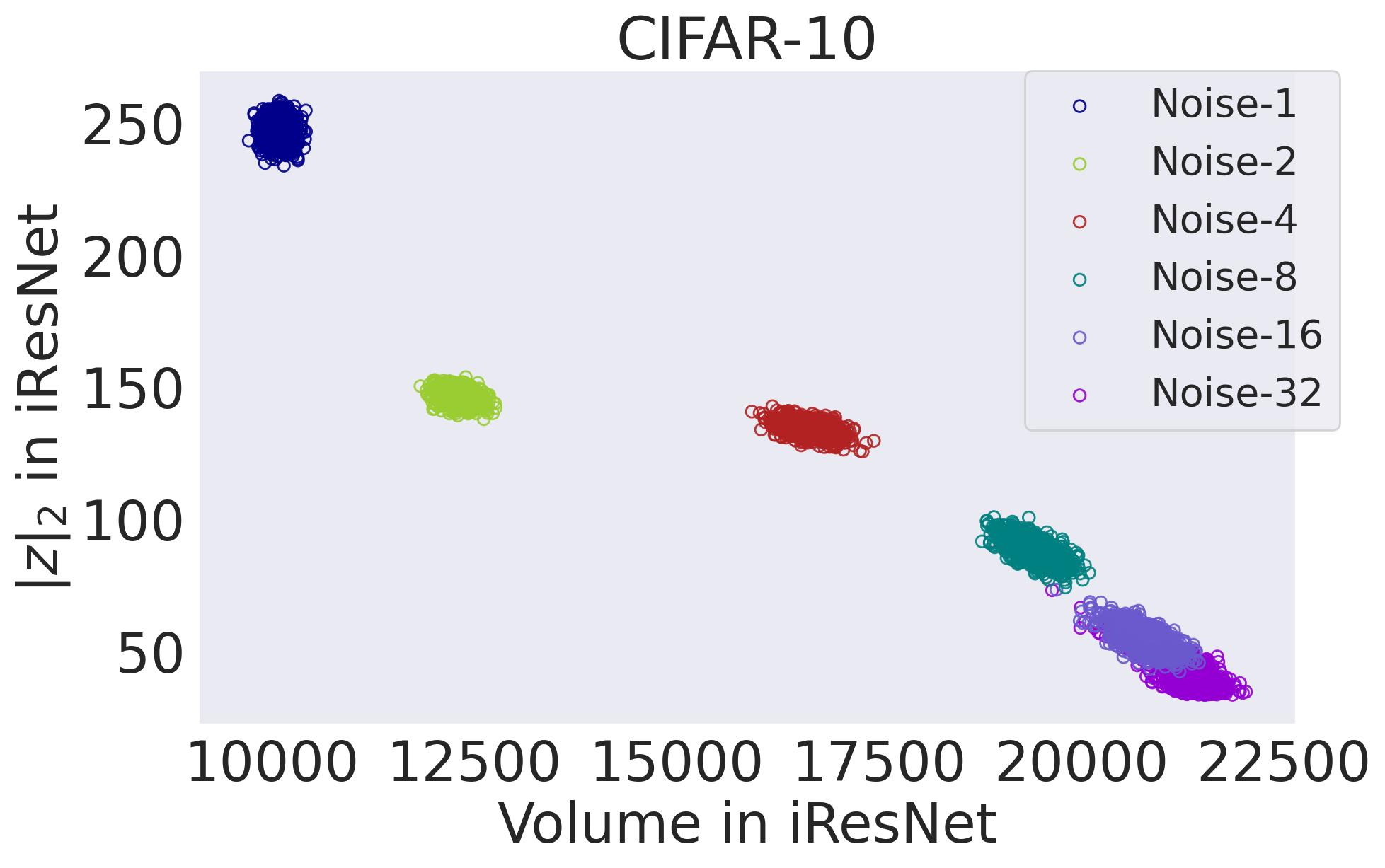}

\includegraphics[height=3.4cm, trim={0 0 0 1.15cm},clip]{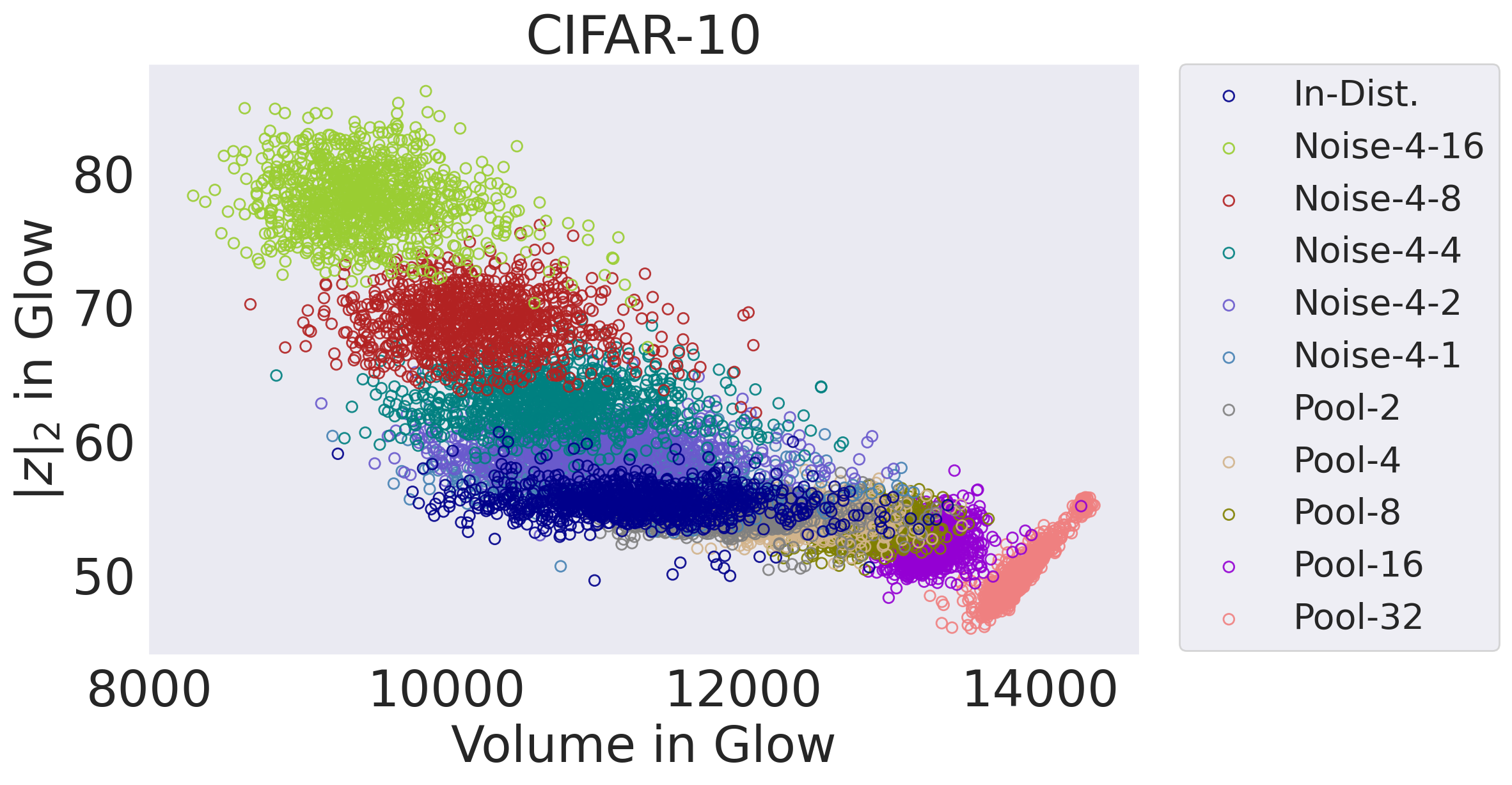}
\includegraphics[height=3.4cm, trim={0 0 0 1.15cm},clip]{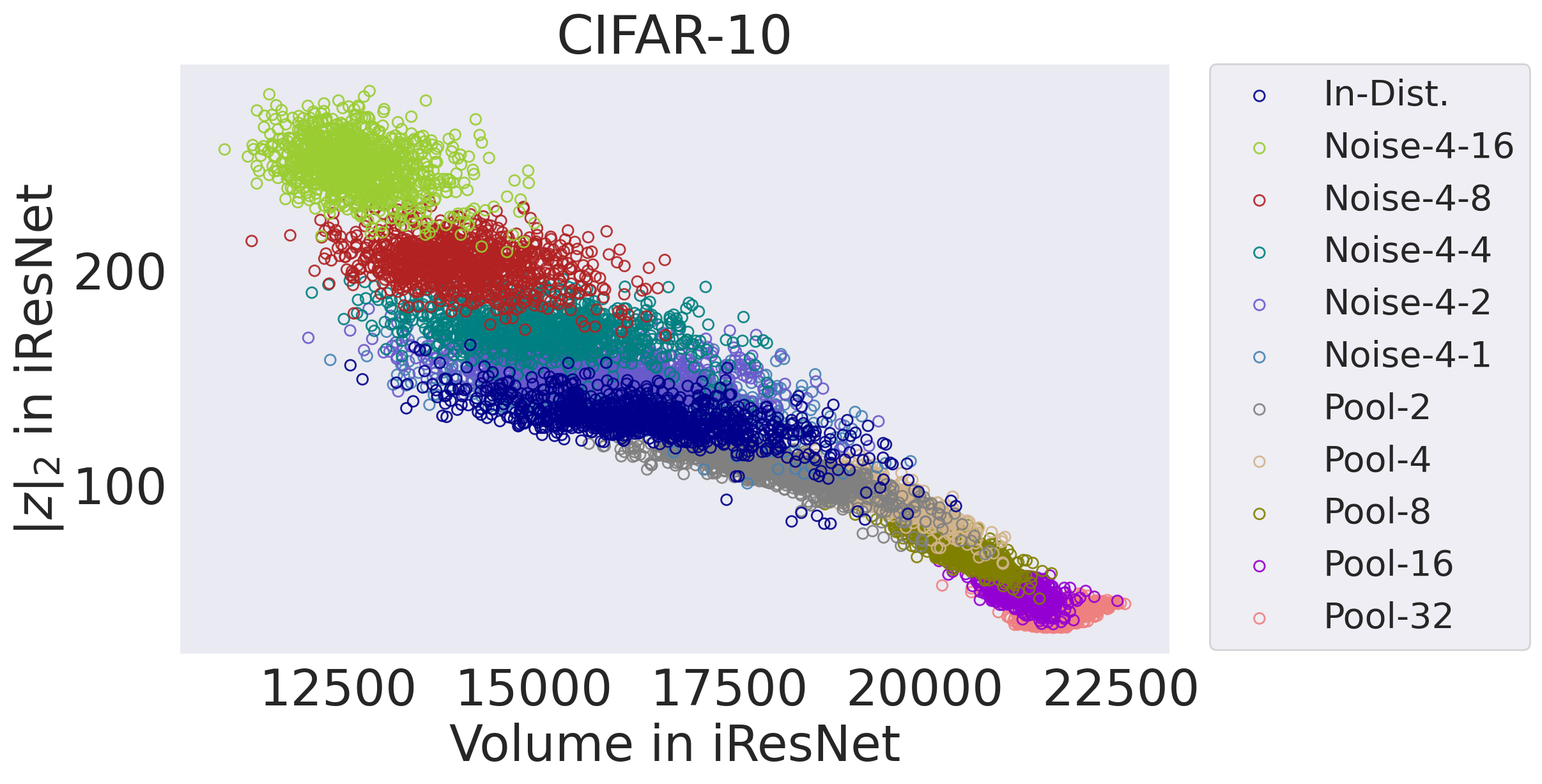}
\caption{Volume vs.\ $\norm{{\bf z}} ( \propto - \sqrt{\log p({\bf z})})$. In-Dist is CIFAR-10. Upper three are for the pooling noise images with ResFlow, Glow, and iResNet. Lower two are for the manipulated CIFAR-10 with Glow and iResNet.}
\label{fig:extra_volume_vs_z}
\end{figure}

\subsection{Additional Results of Experiment 4 and 5}
\label{sec:extra_exp_3}
We present the results of Experiment~3 with other NF models in Fig.\ \ref{fig:extra_comp_vs_z__ood}.
\begin{figure}[h]
\centering
\includegraphics[height=3.15cm, trim={0 0 6cm 1.15cm}, clip]{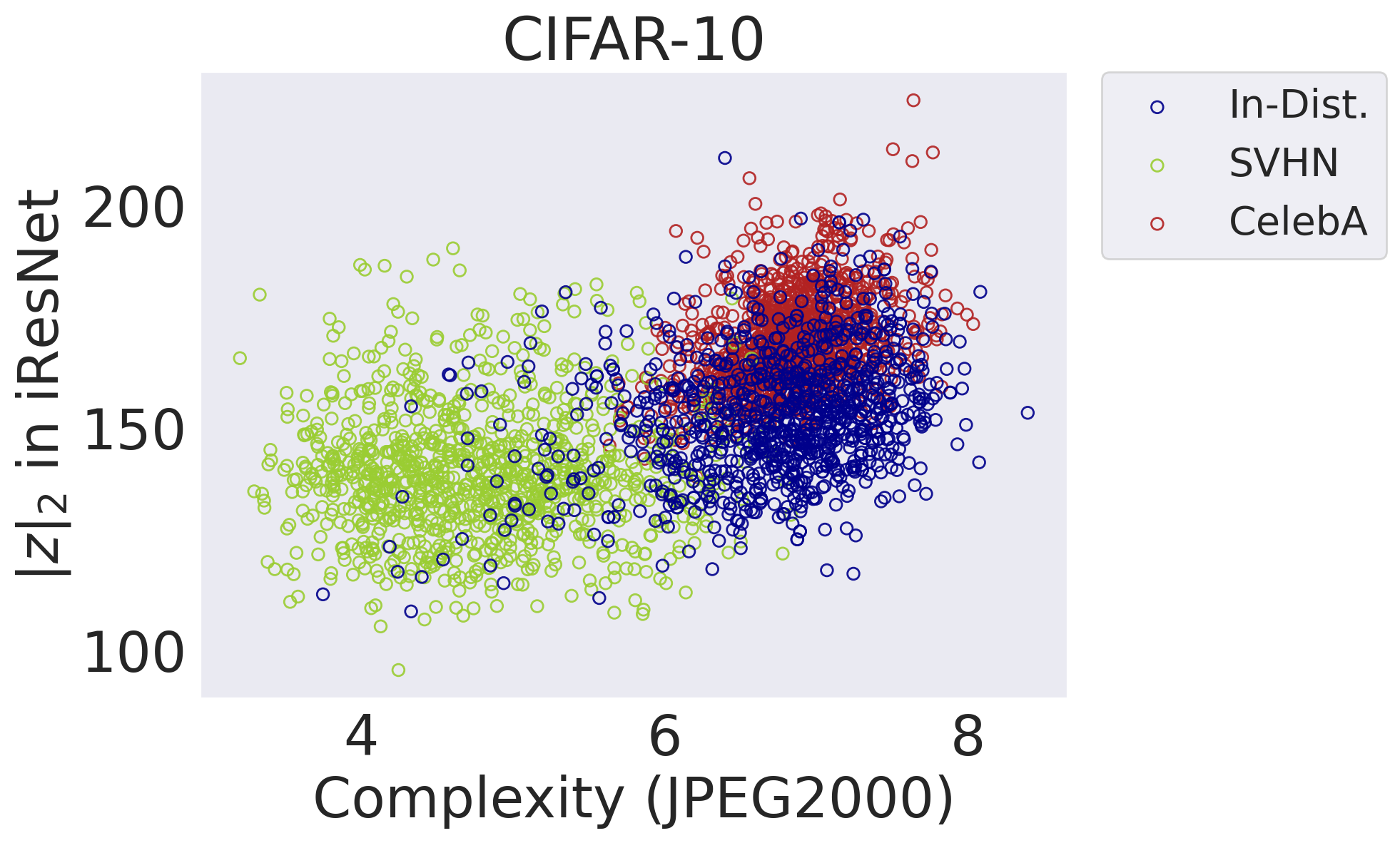}
\includegraphics[height=3.15cm, trim={0 0 6cm 1.15cm},clip]{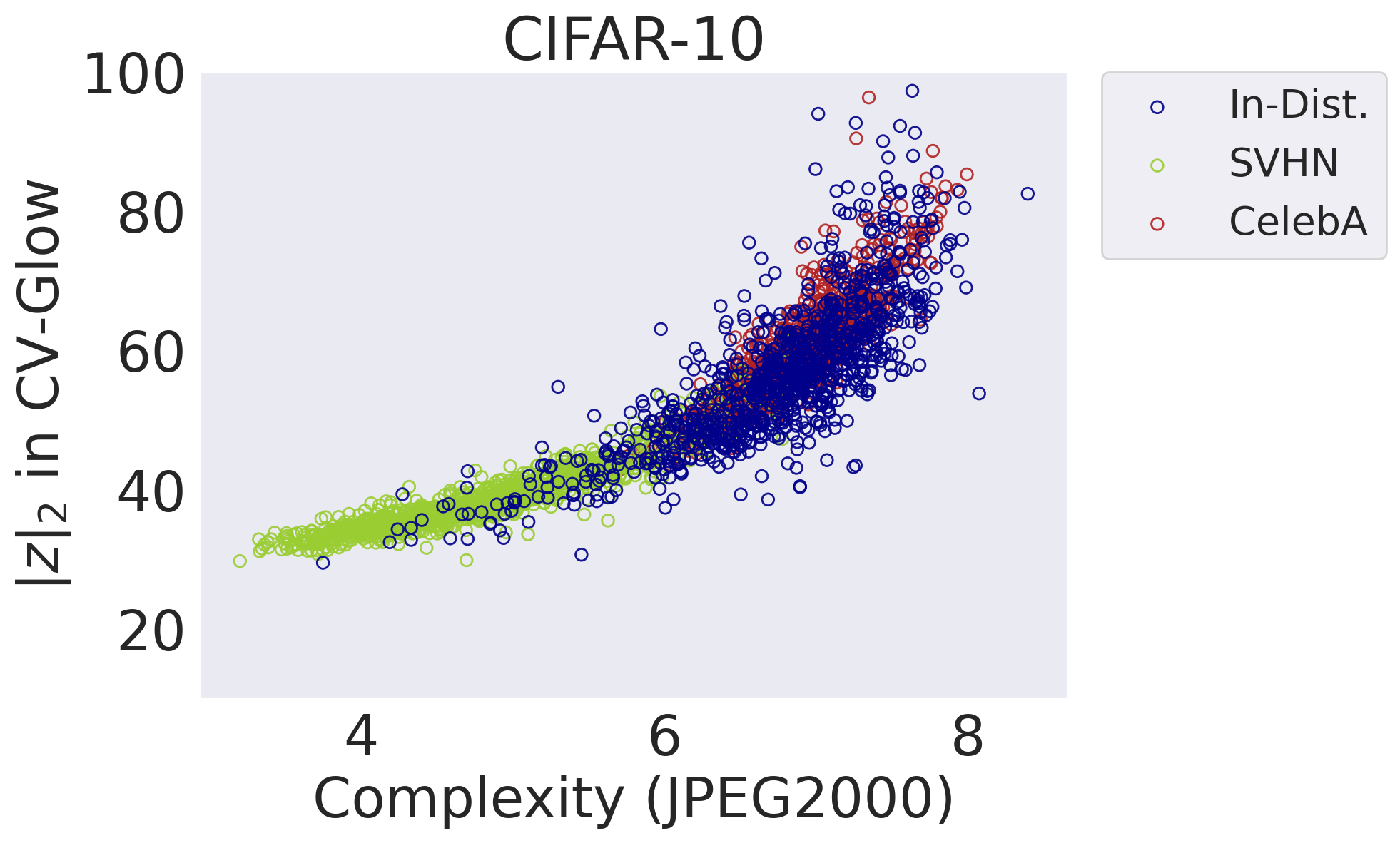}
\includegraphics[height=3.15cm, trim={0 0 0 1.15cm},clip]{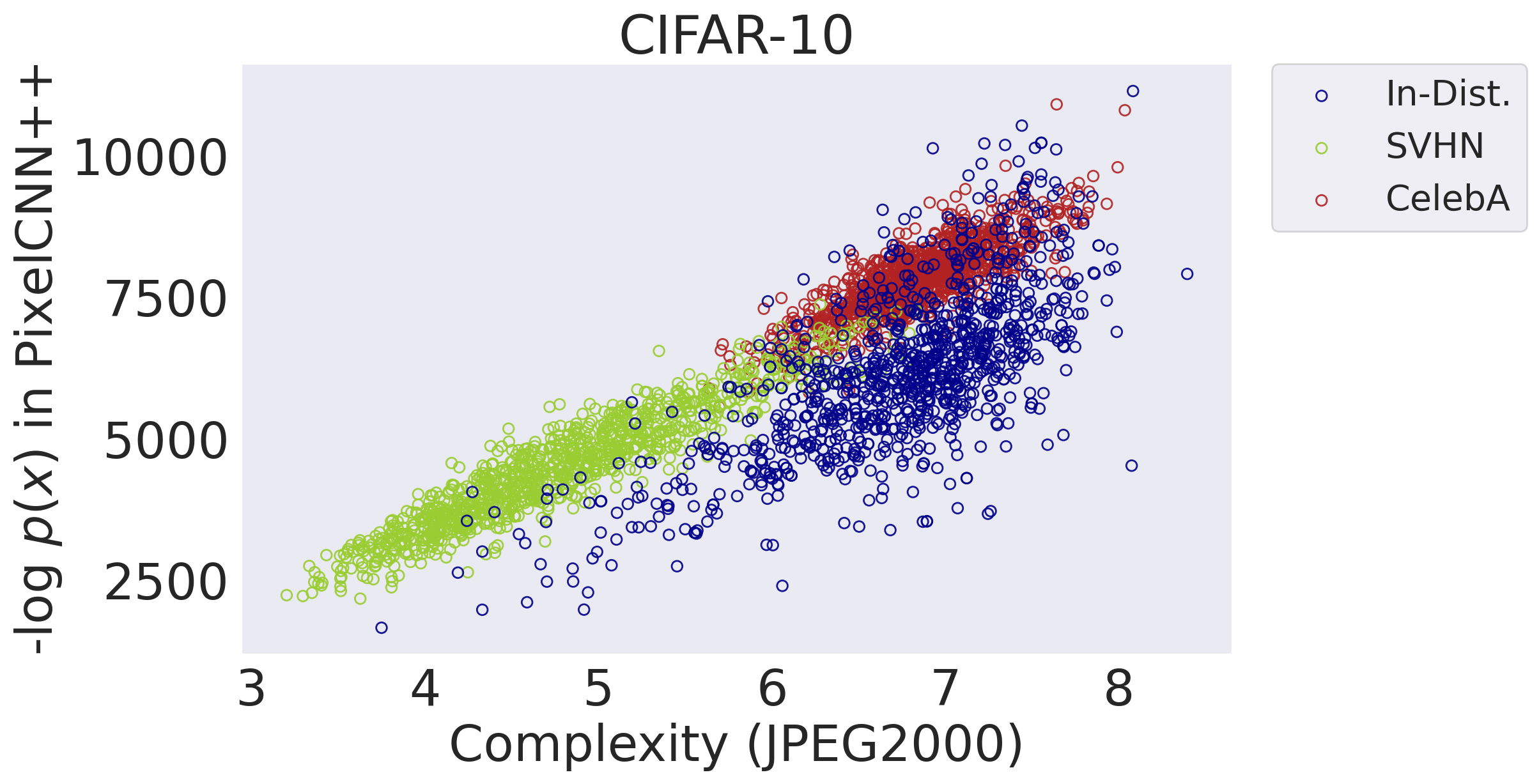}
\caption{Complexity vs.\ $\norm{{\bf z}}$. In-Dist is CIFAR-10. Top is for iResNet and middle is for CV-Glow. $- \log p({\bf x})$ is used instead of $\norm{{\bf z}}$ for PixelCNN++ in the bottom. }
\label{fig:extra_comp_vs_z__ood}
\end{figure}

\subsubsection{AUROC and AUPR on MNIST and FMNIST}
We show  the detection performance of our methods on MNIST and FMNIST are shown.
Tables \ref{tab:AUROC_MNIST_FMNIST} and \ref{tab:AUPR_MNIST_FMNIST} show the AUROC and AUPR, respectively.
The proposed method successfully detects all cases; Glow performed best, followed by ResFlow and iResNet.
The iResNet is less effective in the case where In-Dist is MNIST and OOD is FMNIST.
While CALT fails in the two cases, TTL is comparable to our method with Glow.

\begin{table}[h]
\centering
\caption{AUROC (\%)$\uparrow$. In-Dist datasets are MNIST (left) and FMNIST (right). CALT fails in two cases (lower than $60\%$) on FMNIST.}
\label{tab:AUROC_MNIST_FMNIST}
\begin{adjustbox}{max width = \columnwidth}
	\begin{threeparttable}[h]
		\vspace{0.20cm}
		\scalebox{0.85}{
			\begin{tabular}{lrrr}
				\toprule
				&  FMNIST &  OMNIGLOT &  NotMNIST \\
				\midrule
				TTL         &   98.88 &     98.84 &     99.96 \\
				CALT        &   84.83 &     93.54 &     64.11 \\
				\midrule
				Ours (Glow) &   98.82 &     99.97 &     99.98 \\
				Ours (ResFlow)  &   95.53 &     99.87 &     99.98 \\
				Ours (iResNet)  &   61.03 &     94.52 &     98.54 \\
				\bottomrule
			\end{tabular}
			\hspace{0.50cm}
			\begin{tabular}{lrrr}
				\toprule
				&  MNIST &  OMNIGLOT &  NotMNIST \\
				\midrule
				TTL         &  99.06 &     97.01 &     99.06 \\
				CALT        &  \underline{52.51} &     97.37 &     \underline{52.51} \\
				\midrule
				Ours (Glow) &  98.55 &     99.55 &     98.55 \\
				Ours (ResFlow) &  93.32 &     88.32 &     93.32 \\
				Ours (iResNet)  &  97.59 &     86.63 &     97.59 \\
				\bottomrule
			\end{tabular}
		}
	\end{threeparttable}
\end{adjustbox}
\end{table}

\begin{table}[h]
\centering
\caption{AUPR (\%)$\uparrow$. In-Dist datasets are MNIST (left) and FMNIST (right). }
\label{tab:AUPR_MNIST_FMNIST}
\begin{adjustbox}{max width=\columnwidth}
	\begin{threeparttable}[t]
		\vspace{0.20cm}
		\scalebox{0.85}{
			\begin{tabular}{lrrr}
				\toprule
				&  FMNIST &  OMNIGLOT &  NotMNIST \\
				\midrule
				TTL               &   99.17 &     99.20 &     99.97 \\
				CALT              &   86.16 &     95.10 &     52.69 \\
				\midrule
				Ours (Glow)       &   98.68 &     99.97 &     99.98 \\
				Ours (ResFlow)    &   93.88 &     99.88 &     99.98 \\
				Ours (iResNet)    &   57.93 &     96.64 &     97.79 \\
				\bottomrule
			\end{tabular}
			\hspace{0.50cm}
			\begin{tabular}{lrrr}
				\toprule
				&  MNIST &  OMNIGLOT &  NotMNIST \\
				\midrule
				TTL               &  84.17 &     97.26 &     99.15 \\
				CALT              &  85.41 &     97.60 &     47.03 \\
				\midrule
				Ours (Glow)       &  89.54 &     99.64 &     98.60 \\
				Ours (ResFlow)    &  87.63 &     87.72 &     88.80 \\
				Ours (iResNet)    &  51.67 &     90.08 &     96.21 \\
				\bottomrule
			\end{tabular}
		}
	\end{threeparttable}
\end{adjustbox}
\end{table}

\subsubsection{AUPR on CIFAR-10 and SVHN}
\label{sec:aupr_C10_SVHN}
We show  the detection performance of our methods as the area under the precision-recall curve (AUPR).
Table \ref{tab:AUPR_CIFAR10_SVHN} shows the AUPR on CIFAR-10 and SVHN.
Table \ref{tab:AUPR_pcnnpp} shows the AUPR for PixelCNN++ on CIFAR-10 and SVHN.

Other than our methods, LRG is the only method that scores higher than 60\% in the AUROC results shown in Tables \ref{tab:AUROC_CIFAR10_SVHN} in all test cases.
We consider this result as follows: LRG uses the general likelihood $\log p_{g}({\bf x})$ computed by the NF model trained on more \bsq{general} data than the In-Dist data and computes the ratio as $S_{\text{LRG}} = \log p({\bf x}) - \log p_{g}({\bf x})$.\footnote{We trained the NF model for $\log p_{g}({\bf x})$ using TinyImageNet.}
Because the variation caused by $C({\bf x})$ that occurs in $\log p({\bf x})$ would have occurred in $\log p_{g}({\bf x})$ as well, subtracting $\log p_{g}({\bf x})$ could cancel out the influence of $C({\bf x})$ indirectly, leading to the success of LRG.

\begin{table*}[h]
\centering
\caption{AUPR (\%)$\uparrow$. In-Dist datasets are CIFAR-10 (top) and SVHN (bottom)}
\label{tab:AUPR_CIFAR10_SVHN}
\begin{adjustbox}{max width=\textwidth}
	\begin{threeparttable}
		\vspace{0.20cm}
		\scalebox{0.80}{
			\begin{tabular}{lrrrrrrrrrrrr}
				\toprule
				&  SVHN &  CelebA &   TIN &   Bed &  Living &  Tower &  Noise-1 &  Noise-2 &  Noise-4 &  Noise-8 &  Noise-16 &  Noise-32 \\
				\midrule
				TTL               & 47.03 &   87.58 & 86.06 & 91.02 &   91.89 &  90.23 &   100 &    53.83 &    40.19 &    85.98 &     95.41 &     99.97 \\
				CALT              & 92.06 &   59.64 & 39.21 & 55.43 &   42.28 &  57.11 &    98.77 &   100 &   100 &   100 &    100 &    100 \\
				LRB               & 46.78 &   72.81 & 42.16 & 43.78 &   42.69 &  38.81 &    30.81 &    30.69 &    33.40 &    33.31 &     32.97 &     32.41 \\
				LRG               & 62.08 &   95.37 & 66.44 & 71.78 &   67.17 &  69.74 &   100 &    78.26 &    99.29 &    99.31 &     97.73 &     96.72 \\
				WAIC              & 78.09 &   42.76 & 74.10 & 81.10 &   87.21 &  84.32 &      100 &   100 &      89.9  &    94.70 &     98.87 &     99.78 \\
				\midrule
				Ours (Glow)       & 92.17 &   83.35 & 86.10 & 86.34 &   88.20 &  87.09 &   100 &    99.58 &   100 &   100 &    100 &    100 \\
				Ours (ResFlow)    & 93.10 &   99.93 & 91.84 & 93.69 &   94.25 &  92.03 &   100 &    99.99 &   100 &    99.97 &    100 &    100 \\
				Ours (iResNet)    & 88.38 &   65.42 & 59.11 & 81.47 &   86.56 &  78.76 &   100 &    99.95 &    32.58 &    99.64 &    100 &    100 \\
				\bottomrule
			\end{tabular}
		}
	\end{threeparttable}
\end{adjustbox}

\begin{adjustbox}{max width=\textwidth}
	\begin{threeparttable}[t]
		\vspace{0.20cm}
		\scalebox{0.80}{
			\begin{tabular}{lrrrrrrrrrrrr}
				\toprule
				&  CIFAR10 &  CelebA &    TIN &    Bed &  Living &  Tower &  Noise-1 &  Noise-2 &  Noise-4 &  Noise-8 &  Noise-16 &  Noise-32 \\
				\midrule
				TTL               &    98.26 &   99.98 &  99.90 &  99.99 &  100 &  99.90 &   100 &   100 &    99.98 &    78.25 &     83.79 &     99.93 \\
				CALT              &    31.58 &   31.09 &  31.70 &  33.10 &   31.76 &  35.01 &   100 &   100 &   100 &   100 &    100 &    100 \\
				LRB               &    30.76 &   31.19 &  31.77 &  33.13 &   32.34 &  33.19 &    46.35 &    30.69 &    30.70 &    31.44 &     33.47 &     32.56 \\
				LRG               &    92.82 &   99.96 & 100 & 100 &  100 & 100 &   100 &   100 &    99.67 &    98.84 &     95.89 &     95.42 \\
				WAIC              &    99.58 &   99.44 &  99.83 &  99.96 &   99.99 &  99.56 &   100 &   100 &   100 &    98.82 &     96.95 &     98.61 \\
				\midrule
				Ours (Glow)       &    97.03 &   99.96 &  99.88 &  99.97 &  100 &  99.80 &   100 &   100 &   100 &   100 &    100 &    100 \\
				Ours (ResFlow)    &    57.80 &   66.50 &  78.01 &  65.91 &   83.07 &  51.01 &   100 &    94.77 &   100 &   100 &    100 &    100 \\
				Ours (iResNet)    &    86.56 &   98.79 &  98.56 &  99.66 &   99.88 &  98.25 &   100 &   100 &    99.97 &    70.85 &     98.96 &     99.10 \\
				\bottomrule
			\end{tabular}
		}
	\end{threeparttable}
\end{adjustbox}
\end{table*}

\begin{table*}[h]
\centering
\caption{AUPR (\%) $\uparrow$ for our proposed complexity-aware method and the likelihood test with PixelCNN++ as a baseline. In-Dist datasets are CIFAR-10 (top) and SVHN (bottom).}
\label{tab:AUPR_pcnnpp}
\begin{adjustbox}{max width=\textwidth}
	\begin{threeparttable}[t]
		\vspace{0.20cm}
		\scalebox{0.80}{
			\begin{tabular}{lrrrrrrrrrrr}
				\toprule
				&  SVHN &  CelebA &   TIN &   Bed &  Living &  Tower &   Noise-2 &  Noise-4 &  Noise-8 &  Noise-16 &  Noise-32 \\
				\midrule
				PixelCNN++        & 85.30 &   35.58 & 35.13 & 33.27 &   31.97 &  35.81 &       30.69 &    58.34 &    99.39 &    100 &    100 \\
				Ours & 96.04 &   83.57 & 71.79 & 80.09 &   80.79 &  82.46 &    100 &   100 &   100 &    100 &    100 \\
				\bottomrule
			\end{tabular}
		}
	\end{threeparttable}
\end{adjustbox}
%
\begin{adjustbox}{max width=\textwidth}
	\begin{threeparttable}[t]
		\vspace{0.20cm}
		\scalebox{0.80}{
			\begin{tabular}{lrrrrrrrrrrr}
				\toprule
				&  CIFAR-10 &  CelebA &    TIN &    Bed &  Living &  Tower &   Noise-2 &  Noise-4 &  Noise-8 &  Noise-16 &  Noise-32 \\
				\midrule
				PixelCNN++  &    30.88 &   30.69 &  30.77 &  30.70 &   30.69 &  30.76 &        30.69 &    30.69 &    77.85 &     99.27 &     95.55 \\
				Ours &    89.55 &   97.29 &  96.57 &  98.20 &   99.29 &  97.17 &       100 &   100 &   100 &    100 &    100 \\
				\bottomrule
			\end{tabular}
		}
	\end{threeparttable}
\end{adjustbox}
\end{table*}

\subsubsection{AUROC and AUPR on ImageNet}
\label{sec:ret_IMAGENT}
We verified the validity of our claims even for large-sized images of $224 \times 224$.
Table \ref{tab:ret_IMAGENET} show the AUROC and AUPR on ImageNet.
Our proposed approach using a GMM, which utilizes the two variables $\log p({\bf z})$ from a trained Glow and the image complexity, shows an almost flawless ability to detect artificially generated noise images from the pooling noise image dataset.

\begin{table*}[h]
\centering
\caption{The results on ImageNet. Top is AUROC (\%)$\uparrow$ and  bottom is AUPR (\%)$\uparrow$. Failure cases (lower than $60\%$) are underlined in the AUROC table.}
\label{tab:ret_IMAGENET}
\begin{adjustbox}{max width=\textwidth}
	\begin{threeparttable}[t]
		\vspace{0.20cm}
		\scalebox{0.85}{
			\begin{tabular}{lrrrrrrrrr}
				\toprule
				& Noise-1 & Noise-2 &  Noise-4 &  Noise-8 &  Noise-16 &  Noise-32 &  Noise-64 &  Noise-128 &  Noise-224\\
				\midrule
				TTL                & 100 & 100 & \underline{57.45} & 99.53 & 61.78 & 89.42 & 99.69 & 99.99 & 99.93  \\
				CALT            & 100 & 100 & 74.39 &100 & 100  & 100  &100  &100 &100  \\
				\midrule
				Ours (Glow)& 100 & 100 & 98.09 & 100 & 99.51 & 99.80 & 100 & 100 & 100  \\
				\bottomrule
			\end{tabular}
		}
	\end{threeparttable}
\end{adjustbox}
\begin{adjustbox}{max width=\textwidth}
	\begin{threeparttable}[t]
		\vspace{0.20cm}
		\scalebox{0.85}{
			\begin{tabular}{lrrrrrrrrr}
				\toprule
				& Noise-1 & Noise-2 &  Noise-4 &  Noise-8 &  Noise-16 &  Noise-32 &  Noise-64 &  Noise-128 &  Noise-224\\
				\midrule
				TTL                & 100 & 100 & 48.44 & 98.65 & 50.95 & 78.05 & 99.59 & 99.99 & 99.93  \\
				CALT            & 100 & 100 & 86.18 & 100 & 100  & 100  &100  &100 &100  \\
				\midrule
				Ours (Glow)& 100 & 100 & 99.03 & 100 & 99.75 & 99.89 & 100 & 100 & 100  \\
				\bottomrule
			\end{tabular}
		}
	\end{threeparttable}
\end{adjustbox}
\end{table*}

\end{document}